\def\BibTeX{{\rm B\kern-.05em{\sc i\kern-.025em b}\kern-.08em
		T\kern-.1667em\lower.7ex\hbox{E}\kern-.125emX}}
\def\bx{{\boldsymbol x}}
\begin{document}
	
	\title{ Approximate Message Passing for Multi-Layer Estimation in Rotationally Invariant Models} 
	
	\author{\IEEEauthorblockN{Yizhou Xu$^{\dagger}$, TianQi Hou$^{\diamond}$, ShanSuo Liang$^{ \diamond }$  and Marco Mondelli$^*$\\
			$\dagger$ School of Aerospace Engineering, Tsinghua University, China  \\
			$\diamond$ Theory Lab, Central Research Institute, 2012 Labs, Huawei Technologies Co., Ltd.\\
			$ * $  Institute of Science and Technology Austria (ISTA)\\
		} Emails:  xu-yz19@mails.tsinghua.edu.cn, thou@connect.ust.hk, liang.shansuo@huawei.com, marco.mondelli@ist.ac.at}
	
	
	\maketitle
	
	\begin{abstract}
		We consider the problem of reconstructing the signal and the hidden variables from observations coming from a multi-layer network with rotationally invariant weight matrices. The multi-layer structure models inference from deep generative priors, and the rotational invariance imposed on the weights generalizes the i.i.d.\ Gaussian assumption by allowing for a complex correlation structure, which is typical in applications. In this work, we present a new class of approximate message passing (AMP) algorithms and give a state evolution recursion which precisely characterizes their performance in the large system limit. In contrast with the existing multi-layer VAMP (ML-VAMP) approach, our proposed AMP -- dubbed multi-layer rotationally invariant generalized AMP (ML-RI-GAMP) -- provides a natural generalization beyond Gaussian designs, in the sense that it recovers the existing Gaussian AMP as a special case. Furthermore, ML-RI-GAMP exhibits a significantly lower complexity than ML-VAMP, as the computationally intensive singular value decomposition is replaced by an estimation of the moments of the design matrices. Finally, our numerical results show that this complexity gain comes at little to no cost in the performance of the algorithm.

	\end{abstract}
	\section{Introduction}
	
	We consider inference in an $L$-layer network of the form 
	\begin{equation}
		\boldsymbol{g}^\ell=\boldsymbol{A}_\ell\boldsymbol{x}^\ell, \quad\quad  \boldsymbol{x}^{\ell+1}=q^\ell(\boldsymbol{g}^\ell,\boldsymbol{\epsilon}^{\ell+1}).\label{model}
	\end{equation}
	Here, for $\ell\in [L]:=\{1,\ldots, L\}$, $\boldsymbol{x}^\ell\in\mathbb{R}^{n_\ell}$ is the feature vector in the $\ell$-th layer (when $\ell=1$, $\boldsymbol{x}^1$ is the input signal),  $\boldsymbol{A}_\ell\in\mathbb{R}^{n_{\ell+1}\times n_\ell}$ is the \emph{known} design matrix, $q^\ell$ is a \emph{known} function applied component-wise, and $\boldsymbol{\epsilon}^{\ell+1}$ is an \emph{unknown} i.i.d.\ noise vector. Given the observation $\boldsymbol{y}=\boldsymbol{x}^{L+1}$ (and the knowledge of $\{\boldsymbol{A}_\ell\}_{\ell=1}^L$ and $\{q^\ell\}_{\ell=1}^L$), the goal is to estimate the features $\{\boldsymbol{x}^\ell\}_{\ell=1}^L$. This problem models inference with deep generative priors \cite{bora2017compressed,yeh2016semantic} and is motivated by the exceptional practical success of generative models, e.g., variational auto-encoders \cite{rezende2014stochastic} and generative adversarial networks\cite{radford2015unsupervised}. 
	In this context, the input $\boldsymbol{x}^1$ is a noise realization and the layers correspond to a noisy measurement process which leads to the output $\boldsymbol{y}$. The original data/image can then be recovered from one of the feature vectors in the hidden layers. 
	
	The inference task above is often solved via gradient-based methods \cite{bora2017compressed,yeh2016semantic}. However, due to the non-convexity of the objective, little is known about how these methods perform. To address the issue, a recent line of work has focused on a class of algorithms known as approximate message passing (AMP). In fact, a key feature of AMP is that, under suitable assumptions, its performance in the large system limit is characterized by a deterministic recursion called state evolution. 
	Originally proposed for estimation in linear models \cite{DMM09}, AMP algorithms have then been applied to a wide range of estimation problems, including generalized linear models \cite{rangan2011generalized} and multi-layer models \cite{manoel2017multi} 
	(see also the recent review \cite{feng2022unifying} and the references therein).
	However, these works all assume the design matrices to be i.i.d. Gaussian, which limits their applicability in practice. Here, we assume that $\{\boldsymbol{A}_\ell\}_{\ell=1}^L$ are rotationally invariant. This imposes that the orthogonal matrices in their singular value decomposition (SVD) are uniformly random, but it allows for arbitrary spectra. Hence, the design matrices are well equipped to capture the complex correlation structure typical in applications. One approach to deal with rotationally invariant designs is based on expectation propagation, and it has led to the class of Vector AMP (VAMP) algorithms for inference in the linear \cite{rangan2019vector,takeuchi2021convergence} and multi-layer setting \cite{pandit2020inference1,gabrie2018entropy}.
	
	In this paper, we derive a new class of AMP algorithms for inference in the model \eqref{model} with rotationally invariant design matrices, and we provide a rigorous performance characterization in the high-dimensional limit via a state evolution analysis. Our proposed multi-layer rotationally invariant generalized AMP (ML-RI-GAMP) has two key advantages over the existing multi-layer VAMP (ML-VAMP) \cite{pandit2020inference1}. On the theoretical side, ML-RI-GAMP constitutes a natural generalization of the AMP for Gaussian designs, since the Gaussian AMP \cite{manoel2017multi} is recovered as a special case. 
	On the practical side, unlike ML-VAMP, ML-RI-GAMP does not need a computationally expensive SVD (with cubic complexity in the ambient dimension). In fact, it suffices to estimate the moments of the eigenvalue distributions of the design matrices, which can be performed with quadratic complexity. We highlight that this reduction in the computational complexity does not result in a performance loss: our numerical simulations show that ML-RI-GAMP exhibits a performance close to ML-VAMP, which was shown in \cite{pandit2020inference1} to coincide with the replica prediction for the Bayes-optimal error (whenever the state evolution recursion has a unique fixed point). 	At the technical level, we follow the approach put forward in \cite{fan2022approximate} for low-rank matrix estimation and recently extended to generalized linear models in \cite{venkataramanan2022estimation}. 

	\section{ML-RI-GAMP algorithm}\label{sec:AMP}
	
	Our proposed ML-RI-GAMP algorithm is described in Algorithm \ref{algo:MLRIGAMP}. Lines 1-4 represent the initialization; Lines 7-9 and Line 12 represent the core part of the algorithm.
	The first and last layers need to be treated slightly differently, as shown in Lines 3-5 and 14. The functions $f_t^1:\mathbb{R}^{t}\to\mathbb{R}$, $h_t^L:\mathbb{R}^{t+1}\to\mathbb{R}$, $f_t^\ell:\mathbb{R}^{2t}\to\mathbb{R}$ ($\ell \in \{2, \ldots, L\}$), $h_t^\ell:\mathbb{R}^{2t}\to\mathbb{R}$ ($\ell \in \{1, \ldots, L-1\}$) are Lipschitz and applied component-wise. We note that these functions -- typically called \emph{denoisers} -- are arbitrary (modulo mild technical requirements), and we will optimize them in Section \ref{sec:impl} in order to minimize the mean squared error. 
	
	The \emph{Onsager coefficients} $\{\alpha_{ti}^\ell\}_{i=1}^t$ and $\{\beta_{ti}^\ell\}_{i=1}^{t-1}$ are designed to de-bias the iterates $\{\boldsymbol{x}_t^\ell, \boldsymbol{r}_t^\ell\}_{\ell=1}^L$ so that -- after conditioning on the feature vectors $\{\boldsymbol{x}^\ell\}_{\ell=1}^L$ -- their joint empirical distributions converge to a multi-variate Gaussian, whose covariance is then tracked via state evolution. We compute these coefficients via several auxiliary matrices. For $\ell\in [L]$, let
	$\boldsymbol{\Psi}_{t+1}^\ell,\boldsymbol{\Phi}_{t+1}^\ell\in\mathbb{R}^{(t+1)\times(t+1)}$ be defined as 
	\begin{equation}
		\boldsymbol{\Psi}_{t+1}^\ell=\left(\begin{matrix}
			0 & 0  & \cdots   & 0 & 0 \\
			0 & \langle\partial_1\hat{\boldsymbol{x}}_1^\ell\rangle  & 0   & \cdots & 0\\
			0 & \langle\partial_1\hat{\boldsymbol{x}}_2^\ell\rangle  & \langle\partial_2\hat{\boldsymbol{x}}_2^\ell\rangle   & \cdots & 0\\
			\vdots & \vdots  & \vdots  & \ddots   & \vdots & \\
			0 & \langle\partial_1\hat{\boldsymbol{x}}_t^\ell\rangle  &\langle\partial_2\hat{\boldsymbol{x}}_t^\ell\rangle& \cdots\  & \langle\partial_t\hat{\boldsymbol{x}}_t^\ell\rangle & \\
		\end{matrix}\right),
	\end{equation}
	\begin{equation}
		\boldsymbol{\Phi}_{t+1}^\ell=\left(\begin{matrix}
			0 & 0  & \cdots   & 0 & 0 \\
			\langle\partial_g\boldsymbol{s}_1^\ell\rangle & 0  & 0   & \cdots & 0\\
			\langle\partial_g\boldsymbol{s}_2^\ell\rangle & \langle\partial_1\boldsymbol{s}_2^\ell\rangle  & 0   & \cdots & 0\\
			\vdots & \vdots  & \vdots  & \ddots   & \vdots & \\
			\langle\partial_g\boldsymbol{s}_t^\ell\rangle & \langle\partial_1\boldsymbol{s}_t^\ell\rangle & \cdots\  & \langle\partial_{t-1}\boldsymbol{s}_t^\ell\rangle & 0\\
		\end{matrix}\right).
	\end{equation}
	Here, given a vector $\boldsymbol{x}=(x_1, \ldots, x_{n_\ell})\in\mathbb{R}^{n_\ell}$,  $\langle\boldsymbol{x}\rangle$ denotes its empirical average $\frac{1}{n_\ell}\sum_{i=1}^{n_\ell}x_i$. For $k\leq t$, the vector $\partial_k\hat{\boldsymbol{x}}_t^\ell \in \mathbb R^{n_\ell}$ denotes the partial derivative $\partial_{x_k^\ell}f_t^\ell(x_1^\ell,\ldots,x_t^\ell,r_1^{\ell-1},\ldots,r_t^{\ell-1})$ (resp.\ $\partial_{x_k^1}f_t^1(x_1^1,\ldots,x_t^1)$ for the first layer) applied row-wise to $\hat{\boldsymbol{x}}_t^\ell=f_t^\ell(\boldsymbol{x}_1^\ell,\ldots,\boldsymbol{x}_t^\ell,\boldsymbol{r}_1^{\ell-1},\ldots\boldsymbol{r}_t^{\ell-1})$ (resp.\ $\hat{\boldsymbol{x}}_t^1=f_t^1(\boldsymbol{x}_1^1,\ldots,\boldsymbol{x}_t^1)$). Similarly, $\partial_k\boldsymbol{s}_t^\ell \in \mathbb R^{n_{\ell+1}}$ denotes the partial derivative $\partial_{r_k^\ell}h_{t}^\ell(r_1^\ell,\ldots,r_{t-1}^\ell,x_1^{\ell+1},\ldots,
	x_{t-1}^{\ell+1})$ (resp.\ $\partial_{r_k^L}h_{t}^L(r_1^L,\ldots,r_{t-1}^L)$ for the last layer) applied row-wise to $\boldsymbol{s}_{t}^\ell=h_{t}^\ell(\boldsymbol{r}_1^\ell,\ldots,\boldsymbol{r}_{t-1}^\ell,\boldsymbol{x}_1^{\ell+1},\ldots, \boldsymbol{x}_{t-1}^{\ell+1})$ (resp.\ $\boldsymbol{s}_{t}^L=h_{t}^\ell(\boldsymbol{r}_1^L,\ldots,\boldsymbol{r}_{t-1}^L,\boldsymbol{y})$). Furthermore, $\partial_g\boldsymbol{s}_t^\ell$ denotes the partial derivative $\partial_{g^\ell}h_{t}^\ell(r_1^\ell,\ldots,r_{t-1}^\ell,x_1^{\ell+1},\ldots,
	x_{t-1}^{\ell+1})$ applied row-wise to $\boldsymbol{s}_{t}^\ell=h_{t}^\ell(\boldsymbol{r}_1^\ell,\ldots,\boldsymbol{r}_{t-1}^\ell,\boldsymbol{x}_1^{\ell+1},\ldots, \boldsymbol{x}_{t-1}^{\ell+1})$ for $t>1$, which is computed as
	\begin{equation}
		\begin{aligned}
			\partial_{g^\ell}h_{t}^\ell(&r_1^\ell,...,r_{t-1}^\ell,x_1^{\ell+1},...,x_{t-1}^{\ell+1})=\sum_{i=1}^{t-1}\bar{\mu}_i^{\ell+1}\partial_{g^\ell}q^{\ell+1}(g^\ell,\epsilon^{\ell+1})\\
			&\cdot\partial_{x_i^{\ell+1}}h_{t}^\ell(r_1^\ell,\ldots,r_{t-1}^\ell,x_1^{\ell+1},...,x_{t-1}^{\ell+1}),
			\label{partial_gell}
		\end{aligned}
	\end{equation}
	where $\{\bar{\mu}_i^{\ell+1}\}_{i=1}^{t-1}$ is obtained from the state evolution recursion detailed in Section \ref{sec:SE}. We note that this definition is recursive, as $\partial_{g^\ell}h_{t}^\ell$ is calculated via the state evolution of step $1$ to $t-1$. For $t=1$, the partial derivative is defined as
	\begin{equation}
		\partial_{g^\ell}h_1^\ell=\partial_{g^\ell}h_1^\ell(q^\ell(g^\ell,\epsilon^{\ell+1}))\cdot\delta_{\ell+1}\kappa_{2}^{\ell+1}\partial_{g^{\ell+1}}h_1^{\ell+1},
	\end{equation}
	with the partial derivative of the last layer being simply $\partial_{g^L}h_{t}^\ell(r_1^\ell,\ldots,r_{t-1}^\ell,q^L(g^L,\epsilon^{L+1}))$.

	\begin{algorithm}[tb]
		\caption{ML-RI-GAMP}
		\begin{algorithmic}[1]\label{algo:MLRIGAMP}
			\REQUIRE 
			design matrices $\{\boldsymbol{A}_\ell\}_{\ell=1}^L$, output $\boldsymbol{y}=\boldsymbol{x}^{L+1}$
			\STATE initialize: $\boldsymbol{s}_1^L=h_1^L(\boldsymbol{y})$
			\FOR {$\ell=L-1,\ldots,1$}
			\STATE $\boldsymbol{s}_1^\ell=h_1^\ell(\boldsymbol{A}_\ell^T\boldsymbol{s}_1^{\ell+1})$
			\ENDFOR
			\FOR {$t=1,\ldots,T$}
			\STATE$\boldsymbol{x}_t^1=\boldsymbol{A}_1^T\boldsymbol{s}_t^\ell-\sum_{i=1}^{t-1} \beta_{ti}^1\hat{\boldsymbol{x}}_i^1$
			\STATE$\hat{\boldsymbol{x}}_t^1=f_t^1(\boldsymbol{x}_1^1,\ldots,\boldsymbol{x}_t^1)$
			\STATE$\boldsymbol{r}_t^1=\boldsymbol{A}_1\hat{\boldsymbol{x}}_t^\ell-\sum_{i=1}^t\alpha_{ti}^1\boldsymbol{s}_i^1$
			\FOR {$\ell=2,\ldots,L$}
			\STATE$\boldsymbol{x}_t^\ell=\boldsymbol{A}_\ell^T\boldsymbol{s}_t^\ell-\sum_{i=1}^{t-1} \beta_{ti}^\ell\hat{\boldsymbol{x}}_i^\ell$
			\STATE$\hat{\boldsymbol{x}}_t^\ell=f_t^\ell(\boldsymbol{x}_1^\ell,\ldots,\boldsymbol{x}_t^\ell,\boldsymbol{r}_1^{\ell-1},\ldots,\boldsymbol{r}_t^{\ell-1})$
			\STATE$\boldsymbol{r}_t^\ell=\boldsymbol{A}_\ell\hat{\boldsymbol{x}}_t^\ell-\sum_{i=1}^t\alpha_{ti}^\ell\boldsymbol{s}_i^\ell$
			\ENDFOR
			\FOR {$\ell=1,\ldots,L-1$}
			\STATE$\boldsymbol{s}_{t+1}^\ell=h_{t+1}^\ell(\boldsymbol{r}_1^\ell,\ldots,\boldsymbol{r}_t^\ell,\boldsymbol{x}_1^{\ell+1},\ldots,
			\boldsymbol{x}_t^{\ell+1})$
			\ENDFOR
			\STATE $\boldsymbol{s}_{t+1}^L=h_{t+1}^\ell(\boldsymbol{r}_1^L,\ldots,\boldsymbol{r}_t^L,\boldsymbol{y})$
			\ENDFOR
			\RETURN final estimation $\{\hat{\boldsymbol{x}}_T^\ell\}_{\ell=1}^L$
		\end{algorithmic}
	\end{algorithm}
	
	Next, for $\ell\in [L]$, we define matrices $\boldsymbol{M}_{t+1}^{\alpha,\ell},\boldsymbol{M}_{t+1}^{\beta,\ell}\in\mathbb{R}^{(t+1)\times(t+1)}$ as
	\begin{equation}
		\boldsymbol{M}_{t+1}^{\alpha,\ell}=\sum_{j=0}^{t+1}\kappa_{2(j+1)}^\ell\boldsymbol{\Psi}^\ell_{t+1}(\boldsymbol{\Phi}_{t+1}^\ell\boldsymbol{\Psi}^\ell_{t+1})^j,
	\end{equation}
	\begin{equation}
		\boldsymbol{M}_{t+1}^{\beta,\ell}=\delta_\ell\sum_{j=0}^{t+1}\kappa_{2(j+1)}^\ell\boldsymbol{\Phi}^\ell_{t+1}(\boldsymbol{\Psi}^\ell_{t+1}\boldsymbol{\Phi}^\ell_{t+1})^j.\label{boldsymbolM_t+1}
	\end{equation}
	Here, for $\ell\in [L]$, $\{\kappa_{2k}^\ell\}_{k=1}^{t+1}$ denote the first $(t+1)$ rectangular free cumulants of $ \boldsymbol{A}_\ell$, which can be recursively computed from the first $(t+1)$ moments $\{m_{2k}^\ell\}_{k=1}^{t+1}$ of the eigenvalue distribution of  $ \boldsymbol{A}_\ell \boldsymbol{A}_\ell^{T}$ as \cite{venkataramanan2022estimation}
	\begin{equation}\label{eq:fc}
		\kappa_{2k}^\ell=m_{2k}^\ell-[z^k]\sum_{j=1}^{k-1}\kappa_{2j}^\ell(z(\delta_\ell M^\ell(z)+1)(M^\ell(z)+1))^j,
	\end{equation}
	where $M^\ell(z)=\sum_{k=0}^\infty m_{2k}^\ell z^k$ and $[z^k](q(z))$ denotes the coefficient of $z^k$
	in the polynomial $q(z)$. 
	We highlight that the moments $\{m_{2k}^\ell\}_{k=1}^{t+1}$ can be estimated in $O(n_{\ell+1}^2)$ time 
	(see \cite{liu2022memory,venkataramanan2022estimation}) and, hence, the complexity of estimating the free cumulants is of the same order as one iteration of our ML-RI-GAMP. 
	
	Finally, the coefficients $\{\alpha_{ti}^\ell\}_{i=1}^t$ and $\{\beta_{ti}^\ell\}_{i=1}^{t-1}$ are obtained from the last rows of $\boldsymbol{M}_{t+1}^{\alpha,\ell}$ and $\boldsymbol{M}_{t+1}^{\beta,\ell}$:
	\begin{equation}
		(\alpha_{t1}^\ell,\ldots,\alpha_{tt}^\ell)=[\boldsymbol{M}_{t+1}^{\alpha,\ell}]_{t+1,2:t+1},
	\end{equation}
	\begin{equation}
		(\beta_{t1}^\ell,\ldots,\beta_{t,t-1}^\ell)=[\boldsymbol{M}_{t+1}^{\beta,\ell}]_{t+1,2:t},
	\end{equation}
	where $[\boldsymbol{M}_{t+1}^{\alpha,\ell}]_{t+1,2:t+1}\in\mathbb{R}^t$ is a shorthand for $([\boldsymbol{M}_{t+1}^{\alpha,\ell}]_{t+1,2},\ldots,$ $([\boldsymbol{M}_{t+1}^{\alpha,\ell}]_{t+1,t+1})$ and $[\boldsymbol{M}_{t+1}^{\beta,\ell}]_{t+1,2:t}\in\mathbb{R}^{t-1}$ is defined analogously.
	
	\section{State evolution of ML-RI-GAMP}\label{sec:SE}
	\subsection{Model assumptions}\label{subsec:model}
	The empirical distribution of a vector $\bx\in \mathbb R^{n}$ is given by $ \frac{1}{n}\sum_{i=1}^{n} \delta_{x_i}$, where $\delta_{x_i}$ denotes a Dirac delta mass on $x_i$. Similarly, the joint empirical  distribution of the rows of a matrix $(\bx_1, \ldots, \bx_k) \in \mathbb R^{n \times k}$ is $\frac{1}{n} \sum_{i=1}^{n} \delta_{(x_{i, 1}, \ldots, x_{i, k})}$. To formally state our results, we define pseudo-Lipschitz functions and convergence in Wasserstein-$2$ distance. A function $\psi$: $\mathbb{R}^k\to\mathbb{R}$ is said to be pseudo-Lipschitz of order 2, if there exists $L>0$ s.t., for all $\boldsymbol{u},\boldsymbol{v}\in\mathbb{R}^k$, $|\psi(\boldsymbol{u})-\psi(\boldsymbol{v})|\leq L||\boldsymbol{u}-\boldsymbol{v}||(1+||\boldsymbol{u}||+||\boldsymbol{v}||).$
	We say that $(\bx_1, \ldots, \bx_k)\in\mathbb{R}^{n\times t}$ converges to $(X_1,\ldots,X_k)$ in Wasserstein-2 distance, denoted as $(\boldsymbol{x}_1,\ldots,\boldsymbol{x}_k)\overset{W_2}{\to}(X_1,\ldots,X_k)$, if for all pseudo-Lipschitz function $\psi$ of order 2, almost surely,
	\begin{equation}
		\lim_{n\to\infty}\frac{1}{n}\sum_{i=1}^n\psi(x_{i,1},,\ldots,x_{i,k})=\mathbb{E}[\psi(X_1,\ldots,X_k)].
	\end{equation}
	
	In the \emph{multi-layer model} \eqref{model}, we let the design matrices $\{\boldsymbol{A}_\ell\}_{\ell=1}^L$ be mutually independent rotationally invariant matrices, that is, $\boldsymbol{A}_\ell=\boldsymbol{O}_\ell^T\boldsymbol{\Lambda}_\ell\boldsymbol{Q}_\ell$, where $\boldsymbol{\Lambda}_\ell=\text{diag}(\boldsymbol{\lambda}_\ell)\in\mathbb{R}^{n_{\ell+1}\times n_\ell}$ contains the singular values $\boldsymbol{\lambda}_\ell\in\mathbb{R}^{\min\{n_\ell,n_{\ell+1}\}}$ and $\boldsymbol{O}_\ell\in\mathbb{R}^{n_{\ell+1}\times n_{\ell+1}}$, $\boldsymbol{Q}_\ell\in\mathbb{R}^{n_\ell\times n_\ell}$ are independent Haar-distributed orthogonal matrices. We assume that, for $\ell\in [L]$, $\boldsymbol{\epsilon}^{\ell+1}\overset{W_2}{\to}\epsilon^{\ell+1}$, $\boldsymbol{x}^1\overset{W_2}{\to}X^1$ with finite second moments and $\boldsymbol{\lambda}_\ell\overset{W_2}{\to}\Lambda_\ell$ with compact support. We consider the large system limit in which, as $ n_{\ell}  \to \infty$, the fraction $\frac{n_{\ell+1}}{n_\ell}$ approaches a constant $\delta_\ell\in (0, \infty)$. For $\delta_\ell>1$, let $\tilde{\Lambda}_\ell$ be a mixture of $\Lambda_\ell$ (w.p. $1/\delta_\ell$) and a point mass at $0$ (w.p. $1-1/\delta_\ell$); for $\delta_\ell\le 1$, we set $\tilde{\Lambda}_\ell=\Lambda_\ell$. Then, the assumptions above imply that, as $n_\ell\to\infty$, $\kappa_{2k}^\ell\to \bar{\kappa}_{2k}^\ell$ almost surely, where $\{\bar{\kappa}_{2k}^\ell\}_{k\ge 1}$ are the rectangular free cumulants of $\tilde{\Lambda}_\ell$.
	
	For the \emph{denoising functions}, we assume that $f_t^\ell$ and $h_t^\ell$ are Lipschitz in each of their
	arguments. The partial derivatives $\partial_{x_k^\ell}f_t^\ell$, $\partial_{r_k^\ell}h_t$ and $\partial_{g^\ell} h_t^\ell$ are all continuous on sets of probability 1, under the laws of $(X_1^\ell,\ldots,X_t^\ell)$ and $(G^\ell, R_1^\ell, . . . , R_{t-1}^\ell)$ given by the state evolution in Algorithm 2. We note that these assumptions are no stronger than those required in \cite{pandit2020inference1} for the state evolution of ML-VAMP to hold.

	\subsection{Main result}
	The state evolution of ML-RI-GAMP is described in Algorithm 2. Our main result (Theorem \ref{thm:main}) shows that, for $\ell\in [L]$, $(\boldsymbol{x}_1^\ell-\bar{\mu}_1^\ell\boldsymbol{x}^\ell,\ldots,\boldsymbol{x}_t^\ell-\bar{\mu}_t^\ell\boldsymbol{x}^\ell)$ and $(\boldsymbol{g}^\ell,\boldsymbol{r}_1^\ell,\ldots,\boldsymbol{r}_t^\ell)$ converge in the large system limit to independent jointly Gaussian distributions with covariance matrices $\boldsymbol{\bar{\Sigma}}_t^\ell$ and $\boldsymbol{\bar{\Omega}}_t^\ell$, respectively. To track the mean vector $\bar{\boldsymbol{\mu}}_t^\ell\in\mathbb{R}^t$ and the covariance matrices $\boldsymbol{\bar{\Sigma}}_t^\ell, \boldsymbol{\bar{\Omega}}_t^\ell\in\mathbb{R}^{t\times t}$, we need to define auxiliary matrices $\boldsymbol{\bar{\Psi}}_{t+1}^\ell,\boldsymbol{\bar{\Phi}}_{t+1}^\ell,\boldsymbol{\bar{\Gamma}}_{t+1}^\ell,\boldsymbol{\bar{\Delta}}_{t+1}^\ell\in\mathbb{R}^{(t+1)\times (t+1)}$ for $\ell \in [L]$ as

	\begin{equation}
		\boldsymbol{\bar{\Psi}}_{t+1}^\ell=\left(\begin{matrix}
			0 & 0  & \cdots   & 0 & 0 \\
			0 & \mathbb{E}[\partial_1\hat{X}_1^\ell]  & 0   & \cdots & 0\\
			0 & \mathbb{E}[\partial_1\hat{X}_2^\ell]  & \mathbb{E}[\partial_2\hat{X}_2^\ell]   & \cdots & 0\\
			\vdots & \vdots  & \vdots  & \ddots   & \vdots & \\
			0 & \mathbb{E}[\partial_1\hat{X}_t^\ell] &\mathbb{E}[\partial_2\hat{X}_t^\ell]& \cdots\  & \mathbb{E}[\partial_t\hat{X}_t^\ell]
		\end{matrix}\right),
	\end{equation}
	\begin{equation}\label{eq:Phi}
		\boldsymbol{\bar{\Phi}}_{t+1}^\ell=\left(\begin{matrix}
			0 & 0  & \cdots   & 0 & 0 \\
			\mathbb{E}[\partial_gS_1^\ell] & 0  & 0   & \cdots & 0\\
			\mathbb{E}[\partial_gS_2^\ell] & \mathbb{E}[\partial_1S_2^\ell]  & 0   & \cdots & 0\\
			\vdots & \vdots  & \vdots  & \ddots   & \vdots & \\
			\mathbb{E}[\partial_gS_t^\ell] & \mathbb{E}[\partial_1S_t^\ell] & \cdots\  & \mathbb{E}[\partial_{t-1}S_t^\ell] & 0\\
		\end{matrix}\right),
	\end{equation}
	\begin{equation}
		\boldsymbol{\bar{\Gamma}}_{t+1}^\ell=\left(\begin{matrix}
			\mathbb{E}[(X^\ell)^2] & \mathbb{E}[X^\ell\hat{X}_1^\ell]   & \cdots & \mathbb{E}[X^\ell\hat{X}_t^\ell] \\
			\mathbb{E}[X^\ell\hat{X}_1^\ell] & \mathbb{E}[(\hat{X}_1^\ell)^2]    & \cdots & \mathbb{E}[\hat{X}_1^\ell \hat{X}_t^\ell]\\
			\mathbb{E}[X^\ell\hat{X}_2^\ell] & \mathbb{E}[\hat{X}_1^\ell \hat{X}_2^\ell]   & \cdots & \mathbb{E}[\hat{X}_2^\ell \hat{X}_t^\ell]\\
			\vdots & \vdots & \ddots   & \vdots & \\
			\mathbb{E}[X^\ell\hat{X}_t^\ell] & \mathbb{E}[\hat{X}_1^\ell \hat{X}_t^\ell] & \cdots\  & \mathbb{E}[(\hat{X}_t^\ell)^2]
		\end{matrix}\right),
	\end{equation}
	\begin{equation}\label{eq:Delta}
		\boldsymbol{\bar{\Delta}}_{t+1}^\ell=\left(\begin{matrix}
			0 & 0  & 0   & \cdots & 0 \\
			0 & \mathbb{E}[(S_1^\ell)^2]  & \mathbb{E}[S_1^\ell S_2^\ell]   & \cdots & \mathbb{E}[S_1^\ell S_t^\ell]\\
			0 & \mathbb{E}[S_1^\ell S_2^\ell] & \mathbb{E}[(S_2^\ell)^2]   & \cdots & \mathbb{E}[S_2^\ell S_t^\ell]\\
			\vdots & \vdots  & \vdots  & \ddots   & \vdots & \\
			0 & \mathbb{E}[S_1^\ell S_t^\ell] &\mathbb{E}[S_2^\ell S_t^\ell]& \cdots\  & \mathbb{E}[(S_t^\ell)^2]
		\end{matrix}\right).
	\end{equation}
	Here, for $\ell\in \{2, \ldots, L\}$, $X^\ell=q^\ell(G^{\ell-1},\epsilon^\ell)$, $\{G^{\ell}\}_{\ell\in [L]}$ are defined in Line 1 of Algorithm \ref{algo:SE}, $\{\hat{X}_k^\ell\}_{k\in [t], \ell\in [L]}$ are defined in Lines 4 and 8, and $\{S_k^\ell\}_{k\in \{2, \ldots, t\}, \ell\in [L]}$ are defined in Lines 12 and 14. The initial condition is calculated recursively from $\ell=L$ to $\ell=1$ as $S_1^L=h_1^L(q(G^L,\epsilon^{L+1}))$, $S_1^\ell=X_1^{\ell+1}$, where $X_1^{\ell+1}=\bar{\boldsymbol{\mu}}_1^{\ell+1}q(G^\ell,\epsilon^{\ell+1})+W_1^\ell$,  $\bar{\boldsymbol{\mu}}_1^\ell=\delta_\ell\bar{\kappa}_2^\ell\mathbb{E}[\partial_gh_1^\ell(X_1^{\ell+1})]$ and $\bar{\boldsymbol{\Omega}}_1^\ell=\delta_\ell\bar{\kappa}_2^\ell\mathbb{E}[h_1^\ell(X_1^{\ell+1})^2]+$ $\delta_\ell\bar{\kappa}_4\mathbb{E}[X_\ell^2](\mathbb{E}[\partial_{g^\ell}h_1^\ell(X_1^{\ell+1})])$. The partial derivatives are calculated in a similar way as in Section \ref{sec:AMP}.	
	
	Next, we compute the covariance matrix $\boldsymbol{\bar{\Sigma}}_{t+1}^{\ell}\in\mathbb{R}^{(t+1)\times (t+1)}$ from
	\begin{equation}\label{eq:compSigma}
		\boldsymbol{\bar{\Sigma}}_{t+1}^\ell=\sum_{j=0}^{2t+1}\bar{\kappa}_{2(j+1)}^\ell\boldsymbol{\Xi}_{t+1}^{(j),\ell},
	\end{equation}
	where, for $j\geq1$, $\boldsymbol{\Xi}_{t+1}^{(j),\ell}\in\mathbb{R}^{(t+1)\times (t+1)}$ is given by 
	\begin{equation}
		\begin{aligned}
			&\boldsymbol{\Xi}_{t+1}^{(j),\ell}=\sum_{i=1}^j(\boldsymbol{\bar{\Psi}}^\ell_{t+1}\boldsymbol{\bar{\Phi}}^\ell_{t+1})^i\boldsymbol{\bar{\Gamma}}^\ell_{t+1}((\boldsymbol{\bar{\Psi}}^\ell_{t+1}\boldsymbol{\bar{\Phi}}^\ell_{t+1})^T)^{j-i}\\
			&+\sum_{i=1}^{j-1}(\boldsymbol{\bar{\Psi}}^\ell_{t+1}\boldsymbol{\bar{\Phi}}^\ell_{t+1})^i\boldsymbol{\bar{\Psi}}^\ell_{t+1}\boldsymbol{\bar{\Delta}}^\ell_{t+1}(\boldsymbol{\bar{\Psi}}^\ell_{t+1})^T((\boldsymbol{\bar{\Psi}}^\ell_{t+1}\boldsymbol{\bar{\Phi}}^\ell_{t+1})^T)^{j-i-1},
		\end{aligned}
	\end{equation}
	with $\boldsymbol{\Xi}_{t+1}^{(0),\ell}=\boldsymbol{\bar{\Gamma}}_{t+1}^\ell$ and $\boldsymbol{\bar{\Sigma}}_1^\ell=\bar{\kappa}_2^\ell\mathbb{E}[(X^\ell)^2]$. Given  $\boldsymbol{\bar{\Sigma}}_{t+1}^{\ell}$, one obtains the random variable $S_{t+1}^\ell$ according to Line 12 in Algorithm \ref{algo:SE}, which in turn allows to compute $\boldsymbol{\bar{\Phi}}^\ell_{t+2},\boldsymbol{\bar{\Delta}}^\ell_{t+2}$ as in \eqref{eq:Phi} and \eqref{eq:Delta}, respectively. 
	
	\begin{algorithm}[tb]
		\caption{State evolution of ML-RI-GAMP}
		\begin{algorithmic}[1]\label{algo:SE}
			\STATE initialize: 
			$G^1\sim\mathcal{N}(0,\bar\kappa_2^1(X^1)^2)$, $G^\ell\sim\mathcal{N}(0,\bar\kappa_2^\ell\mathbb{E}[q^\ell(G^{\ell-1},\epsilon^\ell)^2])$ for $\ell\in \{2, \ldots, L\}$, and $Y=q^{L+1}(G^L,\epsilon^{L+1})$
			\FOR {$t=1,\ldots,T$}
			\STATE$(X_1^1,\ldots,X_t^1)=\bar{\boldsymbol{\mu}}_t^1 X^1+(W_1^1,\ldots,W_t^1)$, where $(W_1^1,\ldots,W_t^1)\sim\mathcal{N}(0,\boldsymbol{\bar{\Omega}}_t^1)$
			\STATE$\hat{X}_t^1=f_t^1(X_1^1,\ldots,X_t^1)$
			\STATE$(G^1,R_1^1,\ldots,R_t^1)\sim\mathcal{N}(0,\boldsymbol{\bar{\Sigma}}_{t+1}^1)$
			\FOR {$\ell=2:L$}
			\STATE$(X_1^\ell,\ldots,X_t^\ell)=\bar{\boldsymbol{\mu}}_t^\ell X^\ell+(W_1^\ell,\ldots,W_t^\ell)$, where $(W_1^\ell,\ldots,W_t^\ell)\sim\mathcal{N}(0,\boldsymbol{\bar{\Omega}}_t^\ell)$
			\STATE$\hat{X}_t^\ell=f_t^\ell(X_1^\ell,\ldots,X_t^\ell,R_1^{\ell-1},\ldots,R_{t-1}^{\ell-1})$
			\STATE$(G^\ell,R_1^\ell,\ldots,R_t^\ell)\sim\mathcal{N}(0,\boldsymbol{\bar{\Sigma}}_{t+1}^\ell)$
			\ENDFOR
			\FOR {$\ell=1:L-1$}
			\STATE$S_{t+1}^\ell=h_t^\ell(R_1^\ell,\ldots,R_t^\ell,X_1^{\ell+1},\ldots,X_t^{\ell+1})$
			\ENDFOR
			\STATE$S_{t+1}^L=h_t^L(R_1^L,\ldots,R_t^L,Y)$
			\ENDFOR
		\end{algorithmic}
	\end{algorithm}
	
	At this point, we compute $\boldsymbol{\bar{\Omega}}_{t+1}^\ell\in\mathbb{R}^{(t+1)\times (t+1)}$ from $\boldsymbol{\Omega}_{t+2}^{\prime\ell}\in\mathbb{R}^{(t+2)\times (t+2)}$ as
	\begin{equation}
		\boldsymbol{\bar{\Omega}}^\ell_{t+1}=[\boldsymbol{\Omega}_{t+2}^{\prime\ell}]_{2:t+2,2:t+2},
	\end{equation}
	where 
	\begin{equation}
		\boldsymbol{\Omega}_{t+2}^{\prime\ell}=\delta_\ell\sum_{j=0}^{2(t+1)}\bar{\kappa}_{2(j+1)}^\ell\boldsymbol{\Theta}_{t+2}^{(j),\ell},
	\end{equation}
	and for $j\geq1$, $\boldsymbol{\Theta}_{t+2}^{(j),\ell}\in\mathbb{R}^{(t+2)\times (t+2)}$ is
	\begin{equation}\label{eq:Theta_t1j_def}
		\begin{aligned}
			&\boldsymbol{\Theta}_{t+2}^{(j),\ell}=\sum_{i=1}^j(\boldsymbol{\bar{\Phi}}^\ell_{t+2}\boldsymbol{\bar{\Psi}}^\ell_{t+2})^i\boldsymbol{\bar{\Delta}}^\ell_{t+2}((\boldsymbol{\bar{\Phi}}^\ell_{t+2}\boldsymbol{\bar{\Psi}}^\ell_{t+2})^T)^{j-i}\\
			&+\sum_{i=1}^{j-1}(\boldsymbol{\bar{\Phi}}^\ell_{t+2}\boldsymbol{\bar{\Psi}}^\ell_{t+2})^i\boldsymbol{\bar{\Phi}}^\ell_{t+2}\boldsymbol{\bar{\Gamma}}^\ell_{t+2}(\boldsymbol{\bar{\Phi}}^\ell_{t+2})^T((\boldsymbol{\bar{\Phi}}^\ell_{t+2}\boldsymbol{\bar{\Psi}}^\ell_{t+2})^T)^{j-i-1},
		\end{aligned}
	\end{equation}
	with $\boldsymbol{\Theta}_{t+2}^{(0),\ell}=\boldsymbol{\bar{\Delta}}^\ell_{t+2}$ and $\boldsymbol{\bar{\Omega}}^\ell_1=\delta_\ell\bar{\kappa}_2^\ell\mathbb{E}[(S_1^\ell)^2]+\delta_\ell\bar{\kappa}_4^\ell\mathbb{E}[(X^\ell)^2](\mathbb{E}[\partial_gS_1^\ell])^2$.
	
	Finally, we evaluate the mean vector
	$\bar{\boldsymbol{\mu}}_{t+1}^\ell=(\bar{\mu}_1^\ell,\ldots,\bar{\mu}_{t+1}^\ell)$ recursively with $\bar{\mu}_{t+1}^\ell=[\boldsymbol{\bar{M}}_{t+2}^{\beta,\ell}]_{t+2,1}$, where
	\begin{equation}\label{eq:barM}
		\boldsymbol{\boldsymbol{\bar{M}}}_{t+2}^{\beta,\ell}=\delta_\ell\sum_{j=0}^{t+2}\bar{\kappa}_{2(j+1)}^\ell\boldsymbol{\bar{\Phi}}^\ell_{t+2}(\boldsymbol{\bar{\Psi}}^\ell_{t+2}\boldsymbol{\bar{\Phi}}^\ell_{t+2})^j.
	\end{equation}
	
	We note that the formulas for $\boldsymbol{\Theta}_{t+2}^{(j),\ell}$ and $\boldsymbol{\boldsymbol{\bar{M}}}_{t+2}^{\beta,\ell}$ in \eqref{eq:Theta_t1j_def} and \eqref{eq:barM}, respectively, involve also the matrices 
	$\boldsymbol{\bar{\Psi}}^\ell_{t+2}$ and $\boldsymbol{\bar{\Gamma}}^\ell_{t+2}$, which have not been computed just yet. However, the last rows and columns of these matrices do not influence the calculation, because of the form of $\boldsymbol{\bar{\Delta}}^\ell_{t+2}$ and  $\boldsymbol{\bar{\Phi}}^\ell_{t+2}$. Therefore, the expressions depend only on the top left submatrices of $\boldsymbol{\bar{\Psi}}^\ell_{t+2}$ and $\boldsymbol{\bar{\Gamma}}^\ell_{t+2}$, which are equal to $\boldsymbol{\bar{\Psi}}^\ell_{t+1}$ and $\boldsymbol{\bar{\Gamma}}^\ell_{t+1}$ (and, hence, have been already computed).  We also remark that the matrices $\boldsymbol{\bar{\Omega}}^\ell_{t}$ and $\boldsymbol{\bar{\Sigma}}^\ell_{t}$ are the top left submatrices of $\boldsymbol{\bar{\Omega}}^\ell_{t+1}$ and $\boldsymbol{\bar{\Sigma}}^\ell_{t+1}$, respectively. 
	

	At this point, we are ready to state our main result. 
	
	\newtheorem{theorem}{Theorem}
	\begin{theorem}\label{thm:main}
		Consider the multi-layer model in \eqref{model} with the assumptions in Section \ref{subsec:model}, the ML-RI-GAMP in Algorithm \ref{algo:MLRIGAMP} and its state evolution in Algorithm \ref{algo:SE}. Let $\psi$: $\mathbb{R}^{2t+1}\to\mathbb{R}$ and $\phi$: $\mathbb{R}^{2t+2}\to\mathbb{R}$ be any pseudo-Lipschitz functions of order 2. Then, for each $t\geq1$ and $\ell\in [L]$, we almost surely
		have
		\begin{equation}
			\begin{aligned}
				\lim_{n_\ell\to\infty}\frac{1}{n_\ell}\sum_{i=1}^{n_\ell}&\psi(x_{1,i}^\ell,\ldots,x_{t,i}^\ell,\hat{x}_{1,i}^\ell,\ldots,\hat{x}_{t,i}^\ell,x_i^\ell)\\&=\mathbb E\left [\psi(X_1^\ell,\ldots,X_t^\ell,\hat{X}_1^\ell,\ldots,\hat{X}_t^\ell,X^\ell)\right],
			\end{aligned}
		\end{equation}
		\begin{equation}
			\begin{aligned}
				\lim_{n_{\ell+1}\to\infty}\frac{1}{n_{\ell+1}}\sum_{i=1}^{n_{\ell+1}}&\phi(r_{1,i}^\ell,\ldots,r_{t,i}^\ell,s_{1,i}^\ell,\ldots,s_{t+1,i}^\ell,g_i^\ell)\\&=\mathbb E\left[\phi(R_1^\ell,\ldots,R_t^\ell,S_1^\ell,\ldots,S_{t+1}^\ell,G^\ell)\right].
			\end{aligned}
		\end{equation}
	\end{theorem}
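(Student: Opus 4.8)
\emph{Proof strategy.} The plan is to recognize the ML-RI-GAMP recursion as a concatenation of $L$ single-layer rotationally invariant GAMP iterations, coupled only through the arguments of their denoisers, and to analyze it by the conditioning technique developed for orthogonally invariant ensembles in \cite{fan2022approximate} and extended to generalized linear models in \cite{venkataramanan2022estimation}. As a preliminary normalization I would dilate each rectangular $\boldsymbol{A}_\ell=\boldsymbol{O}_\ell^T\boldsymbol{\Lambda}_\ell\boldsymbol{Q}_\ell$ to a square matrix by padding $\boldsymbol{\Lambda}_\ell$ with zero singular values when $\delta_\ell\neq 1$; this is exactly what converts the rectangular free cumulants of $\boldsymbol{\Lambda}_\ell$ into the ordinary free cumulants $\{\bar{\kappa}_{2k}^\ell\}$ of $\tilde{\Lambda}_\ell$ appearing in Algorithm \ref{algo:SE}, so that the square-case orthogonal-AMP apparatus can be applied layer by layer. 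The cross-layer couplings enter only because $f_t^\ell$ additionally reads $\boldsymbol{r}_1^{\ell-1},\dots,\boldsymbol{r}_t^{\ell-1}$ and $h_t^\ell$ additionally reads $\boldsymbol{x}_1^{\ell+1},\dots,\boldsymbol{x}_{t-1}^{\ell+1}$. Since $(\boldsymbol{O}_\ell,\boldsymbol{Q}_\ell)$ is independent of $(\boldsymbol{O}_{\ell'},\boldsymbol{Q}_{\ell'})$ for $\ell'\neq\ell$ and, in the execution order of Algorithm \ref{algo:MLRIGAMP}, every cross-layer argument fed to a layer-$\ell$ denoiser is already determined before the next multiplication by $\boldsymbol{A}_\ell$ or $\boldsymbol{A}_\ell^T$, these arguments act as measurable side information, and the layer-$\ell$ conditioning step is carried out as in the single-layer case but with ``enlarged'' denoisers whose extra inputs have a limiting joint law supplied by the induction.

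The core argument is a double induction, outer on the iteration index $t$ and inner on $\ell$, following the order in which Algorithm \ref{algo:MLRIGAMP} produces $\boldsymbol{x}_t^\ell,\hat{\boldsymbol{x}}_t^\ell,\boldsymbol{r}_t^\ell$ and then the $\boldsymbol{s}_{t+1}^\ell$. The induction hypothesis is a strengthening of the theorem: it asserts that the joint empirical distribution of \emph{all} iterates produced up to the current step --- across layers --- converges in $W_2$ to the Gaussian law prescribed by Algorithm \ref{algo:SE}, the theorem then following by marginalization. At each step I condition on the $\sigma$-algebra generated by the already-revealed iterates together with the linear constraints they impose on the orthogonal matrix in use; by the standard fact that a Haar matrix conditioned on finitely many images is Haar on the orthogonal complement, I write e.g.\ $\boldsymbol{A}_\ell\hat{\boldsymbol{x}}_t^\ell$ as a deterministic linear combination of the previously revealed $\boldsymbol{s}_1^\ell,\dots,\boldsymbol{s}_t^\ell$ plus an asymptotically Gaussian fluctuation independent of the past. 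The Onsager coefficients $\{\alpha_{ti}^\ell\},\{\beta_{ti}^\ell\}$, read off from the last rows of $\boldsymbol{M}_{t+1}^{\alpha,\ell},\boldsymbol{M}_{t+1}^{\beta,\ell}$, are precisely the coefficients of that deterministic part: the finite (nilpotent) series $\sum_j\kappa_{2(j+1)}^\ell\boldsymbol{\Psi}_{t+1}^\ell(\boldsymbol{\Phi}_{t+1}^\ell\boldsymbol{\Psi}_{t+1}^\ell)^j$ and its transpose-type analogue are the resolvent-type expansions, built from the moment--cumulant relation \eqref{eq:fc}, that express the conditional mean through the denoiser Jacobians $\langle\partial_k\hat{\boldsymbol{x}}_j^\ell\rangle,\langle\partial_k\boldsymbol{s}_j^\ell\rangle$, so that subtracting them cancels the non-Gaussian memory terms. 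One then checks that the covariance of the surviving Gaussian fluctuation equals $\boldsymbol{\bar{\Sigma}}_{t+1}^\ell$ on the $(\boldsymbol{g}^\ell,\boldsymbol{r}^\ell)$ block and $\boldsymbol{\bar{\Omega}}_{t+1}^\ell$ on the $\boldsymbol{x}^\ell$ block, with the alignment coefficients matching $\bar{\boldsymbol{\mu}}_{t+1}^\ell$ via \eqref{eq:barM} --- a computation driven by the same moment--cumulant relation and by the block structure of $\boldsymbol{\bar{\Psi}},\boldsymbol{\bar{\Phi}},\boldsymbol{\bar{\Gamma}},\boldsymbol{\bar{\Delta}}$ underlying $\boldsymbol{\Xi}_{t+1}^{(j),\ell}$ and $\boldsymbol{\Theta}_{t+2}^{(j),\ell}$. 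Since $f_t^\ell,h_t^\ell$ are Lipschitz with a.e.\ continuous Jacobians, applying them preserves $W_2$-convergence and passes Gaussianity to the derivative averages, closing the induction; testing against pseudo-Lipschitz $\psi,\phi$ then yields the two displayed identities.

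Making the conditioning step rigorous requires the usual supporting ingredients: almost-sure concentration of quadratic forms of the type $\langle\boldsymbol{u},(\boldsymbol{A}_\ell\boldsymbol{A}_\ell^T)^{k}\boldsymbol{v}\rangle$ and of the empirical denoiser averages, so that the finite-$n$ Onsager coefficients and the moments $\{m_{2k}^\ell\}$ concentrate on their deterministic limits; non-degeneracy of $\boldsymbol{\bar{\Sigma}}_t^\ell,\boldsymbol{\bar{\Omega}}_t^\ell$ along the ``fresh'' directions, handled by the usual perturbation-of-the-denoiser argument if they degenerate; and the bookkeeping for the initialization --- computed backward from $\ell=L$ to $\ell=1$ --- and for the first- and last-layer updates, which runs the same scheme with truncated argument lists.

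I expect the main obstacle to be the multi-layer bookkeeping of the conditioning: one must fix a single global ordering of all iterates across the $L$ layers so that, at every step, the orthogonal matrix being applied is constrained only by earlier iterates while all cross-layer denoiser inputs are already measurable --- delicate because each layer $\ell$ receives feedback from \emph{both} neighbours and the induced dependency graph threads through the interleaved forward and backward passes. A secondary, more computational, obstacle is verifying that the free-cumulant polynomials in $\boldsymbol{M}^{\alpha,\ell},\boldsymbol{M}^{\beta,\ell}$ and in $\boldsymbol{\Xi}^{(j),\ell},\boldsymbol{\Theta}^{(j),\ell}$ are exactly the objects produced by the conditional-mean and conditional-covariance computations --- in particular, that the recursive definition \eqref{partial_gell} of $\partial_{g^\ell}h_t^\ell$, which routes the state-evolution mean $\bar{\mu}^{\ell+1}$ back into layer $\ell$, is consistent with the order in which Algorithm \ref{algo:SE} computes its quantities; this mirrors, but is heavier than, the single-layer identities established in \cite{venkataramanan2022estimation}.
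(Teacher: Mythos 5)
Your core engine is the right one and is the same as the paper's: the conditioning technique of \cite{fan2022approximate}, extended to the rectangular/multi-layer setting as in \cite{venkataramanan2022estimation}, run as a double induction that follows the execution order of Algorithm \ref{algo:MLRIGAMP}, with the Onsager matrices cancelling the conditional means and the $\boldsymbol{\Xi}$/$\boldsymbol{\Theta}$ series giving the conditional covariances. Where you differ is in organization: the paper does \emph{not} condition directly on the ML-RI-GAMP iterates. It first introduces an abstract general recursion (Algorithm \ref{algo:gen}, Appendix \ref{app:gen}) whose iterates are zero-mean and whose cross-layer inputs enter only through genuinely independent side information $\boldsymbol{E}_\ell,\boldsymbol{F}_\ell$; proves its state evolution (Theorem \ref{thm:gen}, via Lemma \ref{lemma:main} in Appendix \ref{app:thmgen}); and then reduces ML-RI-GAMP to an instance of it (Appendix \ref{app:pfmain}), where the signal and hidden variables are generated by the \emph{first} pass of the recursion ($\boldsymbol{v}_1^\ell=\boldsymbol{x}^\ell$, $\boldsymbol{m}_1^\ell=\boldsymbol{g}^\ell$), the backward initialization of Algorithm \ref{algo:MLRIGAMP} becomes the second pass, and the non-zero-mean iterates are recovered through the change of variables $\boldsymbol{x}_t^\ell\approx\boldsymbol{z}_{t+1}^\ell+\bar\mu_t^\ell\boldsymbol{x}^\ell$ (Lemmas 2 and 3). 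This modularization is exactly what disposes of your two self-identified obstacles: the constraint $\boldsymbol{g}^\ell=\boldsymbol{A}_\ell\boldsymbol{x}^\ell$ and the backward initialization (each of which re-uses $\boldsymbol{A}_\ell$) are automatically inside the conditioning $\sigma$-algebra because they are iterates of the recursion, and the mean/Onsager bookkeeping in your last paragraph is settled by comparing the two state evolutions (Lemma 2) rather than inside the conditioning itself. Your direct route can in principle be carried out, but you would have to state explicitly that the conditioning includes the model-generation and initialization passes, and track the $\bar\mu_t^\ell\boldsymbol{x}^\ell$ components through every conditional-mean computation; this is precisely the bookkeeping the paper's reduction is designed to isolate.

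One preliminary step in your plan is wrong as stated: padding $\boldsymbol{\Lambda}_\ell$ with zeros does \emph{not} turn the problem into one for the square rotationally invariant ensemble, and the $\bar\kappa_{2k}^\ell$ are \emph{not} ordinary free cumulants of $\tilde\Lambda_\ell$ --- they are its rectangular free cumulants, defined through the $\delta_\ell$-dependent moment--cumulant relation \eqref{eq:fc}. A zero-padded (or symmetrized) version of $\boldsymbol{A}_\ell$ is invariant only under the block subgroup $\mathrm{diag}(\boldsymbol{O}_\ell,\boldsymbol{Q}_\ell)$, not under the full Haar group of the larger dimension, so the square-case orthogonal-AMP apparatus cannot be invoked after dilation; one must condition separately on $\boldsymbol{O}_\ell$ and $\boldsymbol{Q}_\ell$ and work with the rectangular cumulants throughout, as in the paper's proof of Lemma \ref{lemma:main}. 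Since the remainder of your argument anyway invokes \eqref{eq:fc}, this is a repairable misstep rather than a fatal one, but in its current form that normalization step would produce the wrong Onsager coefficients and covariances.
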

	
	The result above readily allows to evaluate the overlaps and mean squared errors between ML-RI-GAMP iterates and feature vectors. In particular, by choosing suitably the pseudo-Lipschitz function $\psi$, we have that, for $\ell\in [L]$, $\frac{1}{n_\ell}\|\hat{\bx}_t^\ell-\bx^\ell\|^2_2$  $ \to \mathbb E\{(\hat{X}_t^\ell-X^\ell)^2\}$ and $|\langle\hat{\bx}_t^\ell,\bx^\ell\rangle|^2/(\|\hat{\bx}^\ell_t\|^2\|\bx^\ell\|^2)\to(\mathbb E[\hat{X}_t^\ell \, X^\ell])^2/(\mathbb E[(\hat{X}_t^\ell)^2]\mathbb E[(X^\ell)^2])$.
	
	For the \emph{initialization}, we remark that it is effective only if $\mathbb{E}[X_1^\ell X^\ell]>0$. Otherwise, the algorithm will remain stuck at a trivial fixed point with zero correlation to the ground truth. For most nonlinear observation, including 1-bit compressed sensing and linear regression, this condition holds true. For other situations such as phase retrieval, an informative initialization $\boldsymbol{x}_1^\ell\overset{W_2}{\to}X_1^\ell$ with finite second moments is more advisable.

	\subsection{Proof sketch}
	The full proof is deferred to the appendix. The idea is to regard the ML-RI-GAMP algorithm as an instance of the general multi-layer recursion presented in Appendix \ref{app:gen}. The state evolution of this general recursion is then proved in Appendix \ref{app:thmgen} via the conditioning technique introduced in \cite{fan2022approximate} for low-rank matrix estimation and then extended to the model \eqref{model} in the special case $L=1$ in \cite{venkataramanan2022estimation}.
	Finally, Appendix \ref{app:pfmain} contains the reduction from the general recursion to our proposed ML-RI-GAMP algorithm. This involves a delicate induction argument, together with a careful choice of side information, initialization and denoisers of the general recursion, and it concludes the proof of Theorem \ref{thm:main}.
	
	\section{Numerical results}\label{sec:impl}
	We consider a neural network with ReLU activation:	
	\begin{equation}
		\boldsymbol{x}^{\ell}=\text{ReLU}(\boldsymbol{g}^{\ell-1}),\ \boldsymbol{g}^{\ell-1}=\boldsymbol{A}_{\ell-1}\boldsymbol{x}^{\ell-1},\,\, \mbox{for}\ \ell\in \{2, \ldots, L\},
	\end{equation}
	and $\boldsymbol{x}^{L+1}=\boldsymbol{A}_L\boldsymbol{g}^L+\boldsymbol{\epsilon}^{L+1}$, where $\epsilon^{L+1}\sim\mathcal{N}(0,\sigma^2)$ and $\text{ReLU}(x)=\max(x, 0)$. We consider the Gaussian prior $X^1\sim\mathcal{N}(0,1)$ and two (rotationally invariant) choices for the distribution of the weight matrices: \emph{(i)} i.i.d. Gaussian elements (leading to a Marchenko-Pastur spectrum), and \emph{(ii)} eigenvalues sampled i.i.d. from a $\sqrt{6}$ Beta(1, 2) distribution (the normalization of the Beta(1, 2) distribution is chosen to ensure a unit second moment). For the former choice, ML-RI-GAMP reduces to the ML-AMP algorithm in \cite{manoel2017multi}. For the initialization, we pick $h_1^\ell(y)=y$, which allows to escape the trivial fixed point of the state evolution. The other denoisers are chosen to be the following posterior means:
	\begin{equation}
		\begin{split}
			&f_t^1(x_1^1,\ldots,x_t^1)=\mathbb{E}[X^\ell|X_1^1=x_1^1,\ldots,X_t^1=x_t^1],\\
			&	f_t^\ell(x_1^\ell,...,x_t^\ell,r_1^{\ell-1},...,r_{t-1}^{\ell-1})=\mathbb{E}[X^\ell|X_1^\ell=x_1^\ell,...,X_t^\ell=x_t^\ell,\\
			&\quad R_1^{\ell-1}=r_1^{\ell-1},\ldots,R_{t-1}^{\ell-1}=r_{t-1}^{\ell-1}], \quad \ell\in \{2, \ldots, L\},
		\end{split}
		\label{Bayesian estimation1}
	\end{equation}
	and, for $\ell\in [L-1]$,
	\begin{equation}
		\begin{aligned}
			&h_{t+1}^\ell(r_1^\ell,...,r_t^\ell,x_1^{\ell+1},...,x_t^{\ell+1})=\mathbb{E}[G^\ell|R_1^\ell=r_1^\ell,...,R_t^\ell=r_t^\ell,\\&X_1^{\ell+1}=x_1^{\ell+1},...,X_t^{\ell+1}=x_t^{\ell+1}]-\mathbb{E}[G^\ell|R_1^\ell=r_1^\ell,...,R_t^\ell=r_t^\ell],\\
			&h_{t+1}^L(r_1^L,\ldots,r_t^L,y)=\mathbb{E}[G^L|R_1^L=r_1^L,\ldots,R_t^L=r_t^L,Y=y]\\&\quad-\mathbb{E}[G^L|R_1^L=r_1^L,\ldots,R_t^L=r_t^L].
		\end{aligned}
		\label{Bayesian estimation2}
	\end{equation} 
	The details of the implementation -- together with additional numerical results -- can be found in Appendix \ref{app:implement}.
	
	\begin{figure}[tb]
		\centering	
		\includegraphics[width=\linewidth]{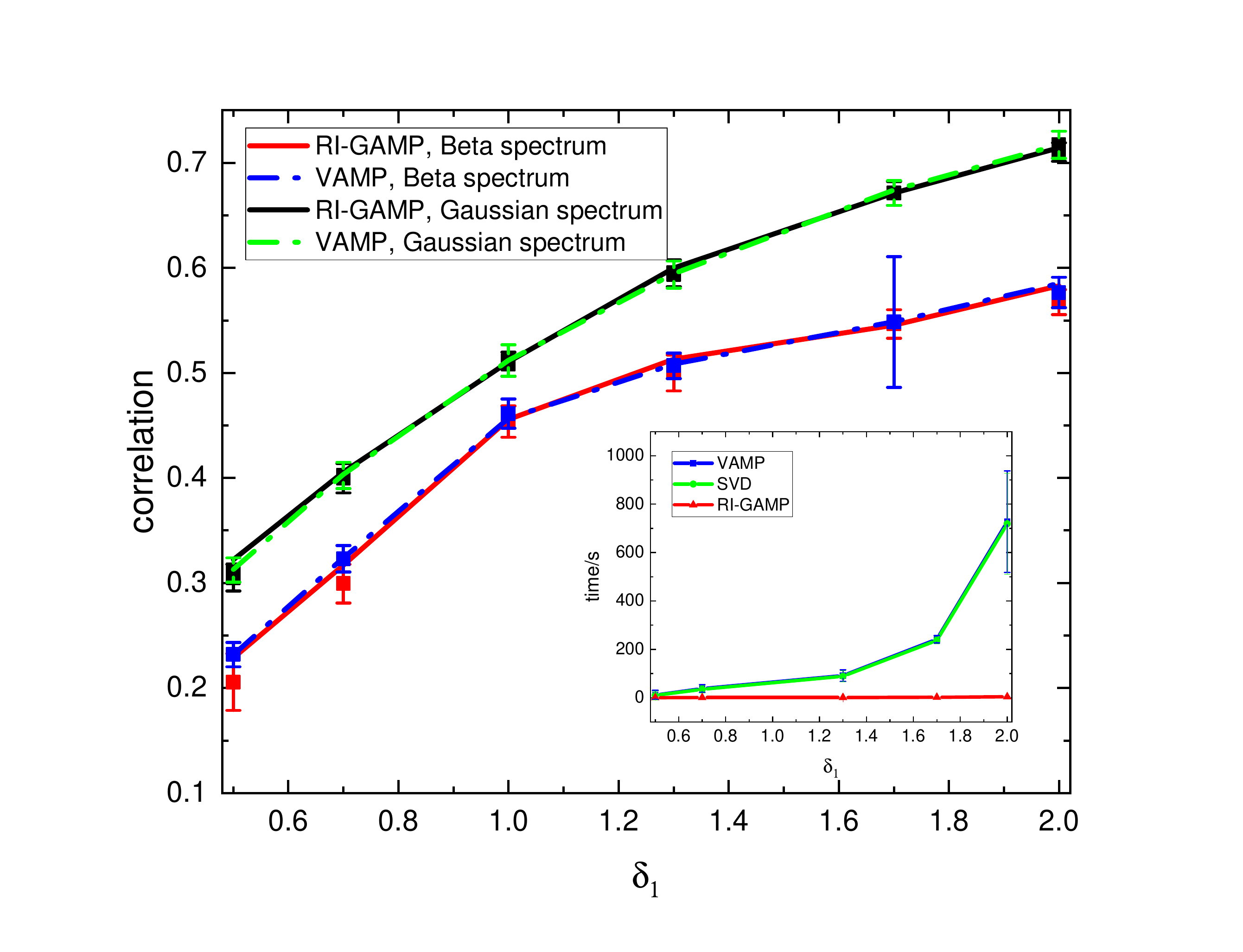}
		\caption{ML-RI-GAMP achieves a performance close to ML-VAMP, but with a much lower computational complexity. $L=2$, $\sigma=0.2$, $n_1=3000$ and $\delta_2=1.3$. 
			\label{fig:fig2}}
	\end{figure}
	
	In our numerical experiments, we perform 100 independent trials and report the mean and the standard error of the overlap $\frac{|\langle\hat{\boldsymbol{x}}_t^1,\boldsymbol{x}^1\rangle|^2}{||\hat{\boldsymbol{x}}_t^1||^2||\boldsymbol{x}^1||^2}$ between the signal $\boldsymbol{x}^1$ and the iterate $\hat{\boldsymbol{x}}^1_t$. \footnote{Our code is shared in https://github.com/sparc-lab/ML-RI-GAMP.} In Fig. \ref{fig:fig2}, we compare the performance of ML-RI-GAMP and ML-VAMP \cite{pandit2020inference1,gabrie2018entropy} as a function of $\delta_1$. As the fixed point of ML-VAMP was proved to be optimal\cite{pandit2020inference1}, our results suggest that ML-RI-GAMP achieves Bayes-optimal performance as well. Furthermore, on the bottom right side of the figure, we show that the complexity of ML-VAMP is dominated by the SVD, which requires $O(n_1^3)$ time. In contrast, our ML-RI-GAMP algorithm runs in $O(n_1^2)$ time. In Fig. \ref{fig:fig2}, ML-RI-GAMP is 30 times faster than ML-VAMP for $\delta_1=0.7$, and nearly 200 times faster for $\delta_1=2.0$.
	
	\section{Concluding remarks}
	In this paper, we propose a novel approximate message passing algorithm, dubbed ML-RI-GAMP, for inference in multi-layer models with rotationally invariant weight matrices, and we rigorously characterize its performance in the high-dimensional limit via a state evolution recursion. Our proposed ML-RI-GAMP recovers the existing algorithm in the Gaussian setting, and this feature can potentially be used to capture the effect of mismatch in the noise statistics \cite{barbier2022price}. We also show that ML-RI-GAMP matches the performance of ML-VAMP, but it has a significantly lower complexity requirement. 
	
	\section*{Acknowledgements}
	We thank Jean Barbier for helpful discussions.
	\bibliographystyle{ieeetr}
	\bibliography{ref}
	
	\appendix 
	
	\subsection{Implementation details and additional numerical results}\label{app:implement}
	To explicitly calculate 
	\eqref{Bayesian estimation1} and \eqref{Bayesian estimation2} for middle layers with ReLU activation, we first introduce a joint distribution function
	\begin{equation}
		p(z_0,z_1)\propto f(z_1|z_0)\exp\left(-\frac{(z_0-r_0)^2}{2\sigma_0^2}-\frac{(z_1-r_1)^2}{2\sigma_1^2}\right),
	\end{equation}
	where $f(z_1|z_0)=\delta(z_1-z_0)\mathbf{1}_{z_0>0}+\delta(z_1)\mathbf{1}_{z_0\leq0}$ accounts for the ReLU activation. Then, the marginal distributions of $z_0$ and $z_1$ can be expressed as
	\begin{equation}
		p(z_0)\propto C_P\phi(z_0;r_P,\sigma_P^2)+C_N\phi(z_0;r_0,\sigma_0^2)\label{distributionz0},
	\end{equation}        
	\begin{equation}
		p(z_1)\propto C_P\phi(z_1;r_P,\sigma_P^2)+C_N\delta(z_1)\Phi\left(-\frac{r_0}{\sigma_0}\right)\label{distributionz1},
	\end{equation}           
	where $r_P=\frac{r_0\sigma_1^2+r_1\sigma_0^2}{\sigma_0^2+\sigma_1^2}$, $\sigma_P^2=\frac{\sigma_0^2\sigma_1^2}{\sigma_0^2+\sigma_1^2}$, $C_P=\exp(-\frac{r_0^2}{2\sigma_0^2}-\frac{r_1^2}{2\sigma_1^2}+\frac{r_P^2}{2\sigma_P^2})$, $C_N=\exp(-\frac{r_1^2}{2\sigma_1^2})$, $\phi(x;\mu,\sigma^2)=\frac{1}{\sqrt{2\pi}\sigma}e^{-\frac{(x-\mu)^2}{2\sigma^2}}$ denotes the probability density function of the normal distribution, and $\Phi(x)=\int_{-\infty}^x\phi(t;0,1)\,{\rm d}t$ the corresponding cumulative distribution function. The derivative with respect to $z_0$ can be calculated as
	\begin{equation}
		\mathbb{E}\left[\frac{\partial \mathbb{E}[z_0|r_0,r_1,\sigma_0^2,\sigma_1^2]}{\partial r_0}\right]=\frac{1}{\sigma_0^2}\text{Var}[z_0|r_0,r_1,\sigma_0^2,\sigma_1^2],
	\end{equation}  
	and the derivative with respect to $z_1$ is similar, where the posterior mean and variance is with respect to the marginal distribution \eqref{distributionz0} or \eqref{distributionz1}.
	
	Going back to ML-RI-GAMP, as $(X_1^\ell-\bar{\mu}^\ell_1X^\ell,\ldots,X_t^\ell-\bar{\mu}^\ell_tX^\ell)\sim\mathcal{N}(0,\boldsymbol{\bar{\Omega}}_t^\ell)$ and  $(G^\ell,R_1^\ell,\ldots,R_t^\ell)\sim\mathcal{N}(0,\boldsymbol{\bar{\Sigma}}_{t+1}^\ell)$, the conditional distributions of $X^\ell$ and $G^\ell$ are
	\begin{equation}
		\begin{split}
			X^\ell & \mid (X_1^\ell=x_1^\ell,\ldots,X_t^\ell=x_t^\ell)\sim\mathcal{N}(\tilde{x}_t^\ell,\tilde{\rho}_t^\ell),\\G^\ell & \mid (R_1^\ell=r_1^\ell,\ldots,R_t^\ell=r_t^\ell)\sim\mathcal{N}(\tilde{r}_t^\ell,\tilde{\sigma}_t^\ell),
		\end{split}
	\end{equation}
	where $\tilde{x}_t^\ell=\frac{(\boldsymbol{\bar{\mu}}_t^\ell)^T(\boldsymbol{\bar{\Omega}}_t^\ell)^{-1}\boldsymbol{\bar{x}}^\ell}{(\boldsymbol{\bar{\mu}}_t^\ell)^T(\boldsymbol{\bar{\Omega}}_t^\ell)^{-1}\boldsymbol{\bar{\mu}}_t^\ell}$, $\tilde{\rho}_t^\ell=\frac{1}{(\boldsymbol{\bar{\mu}}_t^\ell)^T(\boldsymbol{\bar{\Omega}}_t^\ell)^{-1}\boldsymbol{\bar{\mu}}_t^\ell}$, $\tilde{r}_t^\ell=(\boldsymbol{\bar{\Sigma}}_{t+1}^\ell)_{[1,2:t+1]}((\boldsymbol{\bar{\Sigma}}_{t+1}^\ell)_{[2:t+1,2:t+1]})^{-1}\boldsymbol{\bar{r}}_t^\ell$, $\tilde{\sigma}_t^\ell=\mathbb{E}[(G^\ell)^2]-(\boldsymbol{\bar{\Sigma}}_{t+1}^\ell)_{[1,2:t+1]}((\boldsymbol{\bar{\Sigma}}_{t+1}^\ell)_{[2:t+1,2:t+1]})^{-1}(\boldsymbol{\bar{\Sigma}}_{t+1}^\ell)_{[2:t+1,1]}$ and $\boldsymbol{\bar{x}}_t^\ell$ (resp.\ $\boldsymbol{\bar{r}}_t^\ell$) is a shorthand for $(x_1^\ell,\ldots,x_t^\ell)$ (resp.\ $(r_1^\ell,\ldots,r_t^\ell)$). Then, the posterior estimation becomes
	\begin{equation}
		f_t^\ell(x_1^\ell,\ldots,x_t^\ell,r_1^{\ell-1},\ldots,r_t^{\ell-1})=\mathbb{E}[z_1|\tilde{r}_t^{\ell-1},\tilde{x}_t^\ell,\tilde{\rho}_t^\ell,\tilde{\sigma}_t^{\ell-1}],
	\end{equation}
	\begin{equation}
		\begin{split}
			h_{t+1}(r_1^\ell,&\ldots,r_t^\ell,x_1^{\ell+1},\ldots,x_t^{\ell+1})\\
			&=\mathbb{E}[z_0|\tilde{r_t}^\ell,\tilde{x_t}^{\ell+1},\tilde{\rho_t}^{\ell+1},\tilde{\sigma_t}^\ell]-\tilde{r}_t^\ell,
		\end{split}
	\end{equation}
	and the derivatives are given by
	\begin{equation}
		\begin{aligned}
			\partial_{x_k^\ell}f_t^\ell & (x_0^\ell,\ldots,x_t^\ell,r_0^{\ell-1},r_t^{\ell-1})\\
			&=\frac{1}{\tilde{\rho}_t^\ell}\text{Var}[z_1|\tilde{r}_t^{\ell-1},\tilde{x}_t^\ell,\tilde{\rho}_t^\ell,\tilde{\sigma}_t^{\ell-1}]\cdot\frac{(\boldsymbol{\bar{\mu}}_t^\ell)^T(\boldsymbol{\bar{\Omega}}_t^\ell)^{-1}\boldsymbol{e}_k}{(\boldsymbol{\bar{\mu}}_t^\ell)^T(\boldsymbol{\bar{\Omega}}_t^\ell)^{-1}\boldsymbol{\bar{\mu}}_t^\ell},
		\end{aligned}
	\end{equation}
	\begin{equation}
		\begin{aligned}
			\partial_{r_k^\ell}h_{t+1} & (r_1^\ell,\ldots,r_t^\ell,x_1^{\ell+1},\ldots,x_t^{\ell+1})\\
			&=\bigg(\frac{1}{\tilde{\sigma}_t^\ell}\text{Var}[z_0|\tilde{r}_t^\ell,\tilde{x}_t^{\ell+1},\tilde{\rho}_t^{\ell+1},\tilde{\sigma}_t^\ell]-1\bigg)\\&\quad \cdot(\boldsymbol{\bar{\Sigma}}_{t+1}^\ell)_{[1,2:t+1]}((\boldsymbol{\bar{\Sigma}}_{t+1}^\ell)_{[2:t+1,2:t+1]})^{-1}\boldsymbol{e}_k,
		\end{aligned}
	\end{equation}
	where $\boldsymbol{e}_k\in\mathbb{R}^t$ represents the $k$-th element of the canonical basis.
	
	The denoisers $f_t^1$ and $h_{t+1}^L$ are the same as in \cite{venkataramanan2022estimation}. In particular, for the Gaussian prior, we have
	\begin{equation}
		f_t^1(x_1^1,\ldots,x_t^1)=\frac{(\boldsymbol{\bar{\mu}}_t^1)^T(\boldsymbol{\bar{\Omega}}_t^1)^{-1}\boldsymbol{\bar{x}}^1}{1+(\boldsymbol{\bar{\mu}}_t^\ell)^T(\boldsymbol{\bar{\Omega}}_t^1)^{-1}\boldsymbol{\bar{\mu}}_t^1},
	\end{equation}
	\begin{equation}
		\partial_{x_k^1}f_t^1(x_1^1,\ldots,x_t^1)=\frac{(\boldsymbol{\bar{\mu}}_t^\ell)^T(\boldsymbol{\bar{\Omega}}_t^1)^{-1}\boldsymbol{e}_k}{1+(\boldsymbol{\bar{\mu}}_t^1)^T(\boldsymbol{\bar{\Omega}}_t^\ell)^{-1}\boldsymbol{\bar{\mu}}_t^1};
	\end{equation}
	for the Rademacher prior, we have
	\begin{equation}
		f_t^1(x_1^1,\ldots,x_t^1)=\tanh((\boldsymbol{\bar{\mu}}_t^1)^T(\boldsymbol{\bar{\Omega}}_t^1)^{-1}\boldsymbol{\bar{x}}^1),
	\end{equation}
	\begin{equation}
		\begin{split}
			\partial_{x_k^1}f_t^1(x_1^1,\ldots,x_t^1)=(1-\tanh((\boldsymbol{\bar{\mu}}_t^1)^T(\boldsymbol{\bar{\Omega}}_t^1)^{-1}&\boldsymbol{\bar{x}}^1))^2\boldsymbol{\bar{\mu}}_t^1)^T\\
			&(\boldsymbol{\bar{\Omega}}_t^1)^{-1}\boldsymbol{e}_k.
		\end{split}
	\end{equation}
	Furthermore,
	\begin{equation}
		\begin{aligned}
			&h_{t+1}^L(r_1^L,\ldots,r_t^L,y)\\
			&=(\boldsymbol{\bar{S}}^L_{t+2})_{[1,2:t+2]}((\boldsymbol{\bar{S}}^L_{t+2})_{[2:t+2,2:t+2]})^{-1}\left[\begin{matrix}
				\boldsymbol{\bar{r}}_t\\\boldsymbol{y}
			\end{matrix}\right]-\boldsymbol{\bar{\Sigma}}_{t+1}^\ell)_{[1,2:t+1]}\\
			&\qquad\qquad\qquad\qquad\qquad\qquad\qquad ((\boldsymbol{\bar{\Sigma}}_{t+1}^\ell)_{[2:t+1,2:t+1]})^{-1}\boldsymbol{\bar{r}}_t,
		\end{aligned}
	\end{equation}
	\begin{equation}
		\begin{aligned}
			&\partial_{r_k^L}h_{t+1}(r_1^L,\ldots,r_t^L,y)\\
			&=(\boldsymbol{\bar{S}}^L_{t+2})_{[1,2:t+2]}((\boldsymbol{\bar{S}}^L_{t+2})_{[2:t+2,2:t+2]})^{-1}\left[\begin{matrix}
				\boldsymbol{e}_k\\0
			\end{matrix}\right]
			-\boldsymbol{\bar{\Sigma}}_{t+1}^\ell)_{[1,2:t+1]}\\
			&\qquad \qquad\qquad \qquad\qquad\qquad ((\boldsymbol{\bar{\Sigma}}_{t+1}^\ell)_{[2:t+1,2:t+1]})^{-1}\boldsymbol{e}_k,
		\end{aligned}
	\end{equation}
	where
	\begin{equation}
		\boldsymbol{\bar{S}}^L_{t+2}=\left[\begin{matrix}\boldsymbol{\bar{\Sigma}}^L_{t+1}&[\boldsymbol{\bar{\Sigma}}^L_{t+1}]_{[1:t+1,1]}\\
			[\boldsymbol{\bar{\Sigma}}^L_{t+1}]_{[1,1:t+1]}&\mathbb{E}[Y^2]\end{matrix}\right].
	\end{equation}
	Finally, 
	an application of Stein's lemma gives that 
	\begin{equation}
		\mathbb{E}[\partial_gh_{t+1}^\ell]=\frac{1}{\tilde{\sigma}_t^2}\mathbb{E}[h_{t+1}^\ell(R_1^\ell,\ldots,R_t^\ell,X_1^{\ell+1},\ldots,X_t^{\ell+1})^2].
	\end{equation}
	
	For the additional numerical results presented in Figure \ref{fig:fig1}, we consider two choices for the prior of the first layer: \emph{(i)} Gaussian prior $X^1\sim\mathcal{N}(0,1)$, and \emph{(ii)} Rademacher prior $\mathbb{P}(X^1=-1)=\mathbb{P}(X^1=-1)=1/2$. We also consider two (rotationally invariant) choices for the distribution of the weight matrices: \emph{(i)} i.i.d. Gaussian elements and \emph{(ii)} eigenvalues sampled i.i.d. from a $\sqrt{6}$ Beta(1, 2) distribution. For the former choice, we have $\bar{\kappa}_2^\ell=1$,  $\bar{\kappa}_{2k}^\ell=0$ for $k>1$ and, hence, our proposed ML-RI-GAMP reduces to the ML-AMP algorithm in \cite{manoel2017multi}. For the latter choice, the rectangular cumulants are obtained via \eqref{eq:fc} from the moments $\bar{m}_{2k}^\ell=\frac{6^k}{(k+1)(2k+1)}$ for $\delta_\ell<1$ and $\bar{m}_{2k}^\ell=\frac{1}{\delta_\ell}\frac{6^k}{(k+1)(2k+1)}$ for $\delta_\ell\geq1$. Non-linear estimation functions have been described before.
	We set $L=2$, $\sigma=0.2$ $n_1=3000$, $\delta_1=2$ and $\delta_2=1.3$. We perform 100 independent trials and report the mean and the standard error. In Fig. \ref{fig:fig1}, we plot the overlap between the signal $\boldsymbol{x}^1$ and the ML-RI-GAMP iterate $\hat{\boldsymbol{x}}^1_t$, as a function of the iteration number $t$. Different plots correspond to different priors (Gaussian, Rademacher) and different spectral distributions for the weight matrices (Gaussian, Beta). 
	In all the four settings taken into account, the state evolution predictions coming from Theorem \ref{thm:main} (solid lines) match well the empirical results represented by the markers. 
	
	\begin{figure}[tb]
		\centering	
		\includegraphics[width=\linewidth]{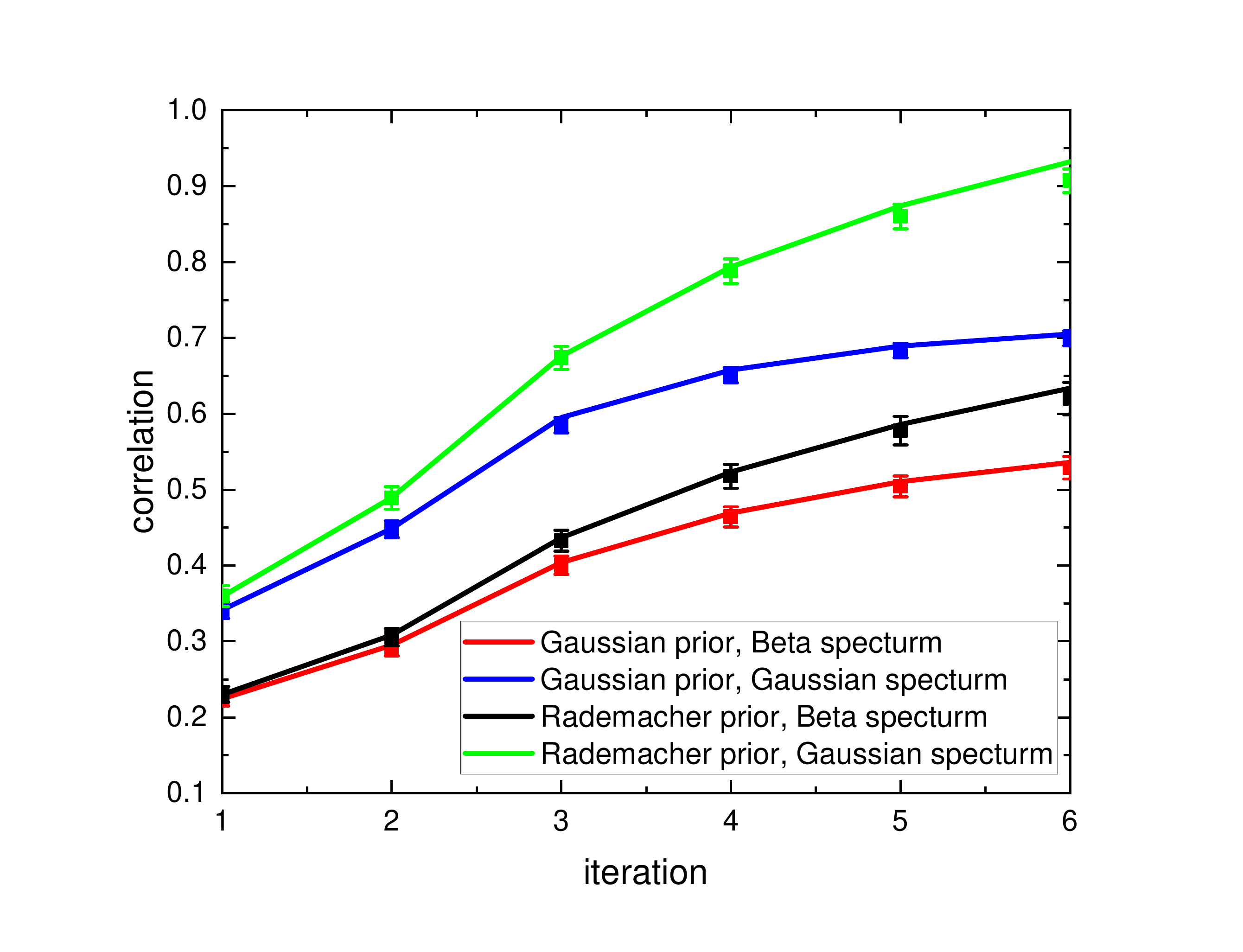}
		\caption{The empirical performance ML-RI-GAMP matches well the state evolution prediction of Theorem \ref{thm:main}. 
			\label{fig:fig1}}
	\end{figure}

	\subsection{General recursion}\label{app:gen}
	
	We propose a multi-layer general recursion described in Algorithm \ref{algo:gen}, with its state evolution in Algorithm \ref{algo:genSE}. Line 19, 23, 26 are the core parts with Onsager terms $\{\hat{\alpha}_{ti}^\ell\}_{\ell=1}^L$ and  $\{\hat{\beta}_{ti}^\ell\}_{\ell=1}^L$. We deal with the first step a little differently to match the initialization of ML-RI-GAMP. In the general recursion, $\boldsymbol{E}_\ell\in\mathbb{R}^{n_{\ell+1}\times k_{1\ell}}$ and $\boldsymbol{F}_\ell\in\mathbb{R}^{n_\ell\times k_{2\ell}}$ denote the side information that converges row-wise in $W_2$ to random vectors $E_\ell\in \mathbb R^{k_{1\ell}}$ and $F_\ell\in \mathbb R^{k_{2\ell}}$. The general recursion can also be non-rigorously derived with the generating functional analysis of \cite{ccakmak2020dynamical}.
	
	\begin{algorithm}[tb]  
		\caption{ML-RI-General Recursion}
		\begin{algorithmic}[1]\label{algo:gen}
			\STATE initialize: $\boldsymbol{u}_1^\ell$ i.i.d. (independent from $\boldsymbol{A}_\ell$, but not necessarily independent from one another)
			\STATE$\boldsymbol{z}_1^1=\boldsymbol{A}_1^T\boldsymbol{u}_1^1$
			\STATE$\boldsymbol{v}_1^1=\tilde{f}_1^1(\boldsymbol{z}_1^1,\boldsymbol{E}_1)$	
			\STATE$\boldsymbol{m}_1^1=\boldsymbol{A}_1\boldsymbol{v}_1^1-\hat{\alpha}_{t1}^1\boldsymbol{u}_1^1$
			\FOR {$\ell=2:L$}
			\STATE$\boldsymbol{z}_1^\ell=\boldsymbol{A}_\ell^T\boldsymbol{u}_1^\ell$
			\STATE$\boldsymbol{v}_1^\ell=\tilde{f}_1^\ell(\boldsymbol{z}_1^\ell,\boldsymbol{m}_1^{\ell-1},\boldsymbol{E}_\ell)$
			\STATE$\boldsymbol{m}_1^\ell=\boldsymbol{A}_\ell\boldsymbol{v}_1^\ell-\hat{\alpha}_{t1}^\ell\boldsymbol{u}_1^\ell$
			\ENDFOR
			\STATE$\boldsymbol{u}_2^L=\tilde{h}_2^L(\boldsymbol{m}_1^L,\boldsymbol{F}_L)$
			\STATE$\boldsymbol{z}_2^1=\boldsymbol{A}_1^T\boldsymbol{u}_2^1
			-\hat{\beta}_{2,1}^1\boldsymbol{v}_1^1$
			\FOR {$\ell=L-1:1$}
			\STATE$\boldsymbol{u}_2^\ell=\tilde{h}_2^\ell(\boldsymbol{m}_1^\ell,\boldsymbol{z}_1^{\ell+1},\boldsymbol{z}_2^{\ell+1},\boldsymbol{F}_\ell)$
			\STATE$\boldsymbol{z}_2^\ell=\boldsymbol{A}_\ell^T\boldsymbol{u}_2^\ell- \hat{\beta}_{2,1}^\ell\boldsymbol{v}_1^\ell$
			\ENDFOR

			\FOR {$t=2,\ldots,T$}
			\IF {$t>2$}
			\FOR {$\ell=1:L$}
			\STATE$\boldsymbol{z}_t^\ell=\boldsymbol{A}_\ell^T\boldsymbol{u}_t^\ell-\sum_{i=1}^{t-1} \hat{\beta}_{ti}^\ell\boldsymbol{v}_i^\ell$
			\ENDFOR
			\ENDIF
			\STATE$\boldsymbol{v}_t^1=\tilde{f}_t^1(\boldsymbol{v}_1^1,\boldsymbol{z}_1^1,\ldots,\boldsymbol{z}_t^1,\boldsymbol{E}_1)$			\STATE$\boldsymbol{m}_t^1=\boldsymbol{A}_1\boldsymbol{v}_t^1-\sum_{i=1}^t\hat{\alpha}_{ti}^1\boldsymbol{u}_i^1$
			\FOR {$\ell=2:L$}
			\STATE$\boldsymbol{v}_t^\ell=\tilde{f}_t^\ell(\boldsymbol{v}_1^\ell,\boldsymbol{z}_1^\ell,\ldots,\boldsymbol{z}_t^\ell,\boldsymbol{m}_1^{\ell-1},\ldots,\boldsymbol{m}_t^{\ell-1},\boldsymbol{E}_\ell)$	
			\STATE$\boldsymbol{m}_t^\ell=\boldsymbol{A}_\ell\boldsymbol{v}_t^\ell-\sum_{i=1}^t\hat{\alpha}_{ti}^\ell\boldsymbol{u}_i^\ell$
			\ENDFOR
			\FOR {$\ell=1:L-1$}
			\STATE$\boldsymbol{u}_{t+1}^\ell=\tilde{h}_{t+1}^\ell(\boldsymbol{m}_1^\ell,\ldots,\boldsymbol{m}_t^\ell,\boldsymbol{z}_1^{\ell+1},\ldots,\boldsymbol{z}_t^{\ell+1},\boldsymbol{F}_\ell)$
			\ENDFOR
			\STATE$\boldsymbol{u}_{t+1}^L=\tilde{h}_{t+1}^L(\boldsymbol{m}_1^L,\ldots,\boldsymbol{m}_t^L,\boldsymbol{F}_L)$
			\ENDFOR
		\end{algorithmic}
	\end{algorithm}  
	The auxiliary matrices $\boldsymbol{\hat{\Psi}}_t^\ell\in\mathbb{R}^{t\times t}$ and $\boldsymbol{\hat{\Phi}}_t^\ell\in\mathbb{R}^{(t+1)\times(t+1)}$ are defined as
	
	\begin{equation}
		\boldsymbol{\hat{\Psi}}_t^\ell=\left(\begin{matrix}
			\langle\partial_1{\boldsymbol{v}}_1^\ell\rangle & 0  & \cdots   & 0 & 0 \\
			\langle\partial_1{\boldsymbol{v}}_2^\ell\rangle & \langle\partial_2{\boldsymbol{v}}_2^\ell\rangle  & 0   & \cdots & 0\\
			\langle\partial_1{\boldsymbol{v}}_3^\ell\rangle & \langle\partial_2{\boldsymbol{v}}_3^\ell\rangle  & \langle\partial_3{\boldsymbol{v}}_3^\ell\rangle   & \cdots & 0\\
			\vdots & \vdots  & \vdots  & \ddots   & \vdots & \\
			\langle\partial_1{\boldsymbol{v}}_t^\ell\rangle & \langle\partial_2{\boldsymbol{v}}_t^\ell\rangle  &\langle\partial_3{\boldsymbol{v}}_t^\ell\rangle& \cdots\  & \langle\partial_t{\boldsymbol{v}}_t^\ell\rangle & \\
		\end{matrix}\right),
	\end{equation}
	\begin{equation}
		\boldsymbol{\hat{\Phi}}_t^\ell=\left(\begin{matrix}
			0 & 0  & \cdots   & 0 & 0 \\
			\langle\partial_1{\boldsymbol{u}}_2^\ell\rangle & 0  & 0   & \cdots & 0\\
			\langle\partial_1{\boldsymbol{u}}_3^\ell\rangle & \langle\partial_2{\boldsymbol{u}}_3^\ell\rangle  & 0   & \cdots & 0\\
			\vdots & \vdots  & \vdots  & \ddots   & \vdots & \\
			\langle\partial_1{\boldsymbol{u}}_t^\ell\rangle & \langle\partial_2{\boldsymbol{u}}_t^\ell\rangle & \cdots\  & \langle\partial_t{\boldsymbol{u}}_t^\ell\rangle & 0\\
		\end{matrix}\right),
	\end{equation}
	where $\partial_k{\boldsymbol{v}}_t^\ell$ and $\partial_k{\boldsymbol{u}}_{t+1}^\ell$ denote the row-wise partial derivatives $\partial_{z_k^\ell}\tilde{f}_t^\ell(\boldsymbol{v}_1^\ell(\boldsymbol{z}_1^\ell),\boldsymbol{z}_1^\ell,\ldots,\boldsymbol{z}_t^\ell,\boldsymbol{m}_1^{\ell-1},\ldots,\boldsymbol{m}_t^{\ell-1},\boldsymbol{E}_\ell)$ and $\partial_{m_k^\ell}\tilde{h}_{t+1}^\ell(\boldsymbol{m}_1^\ell,\ldots,\boldsymbol{m}_t^\ell,\boldsymbol{z}_1^{\ell+1},\ldots,\boldsymbol{z}_t^{\ell+1},\boldsymbol{F}_\ell)$ (resp.\ $\partial_{z_k^\ell}\tilde{f}_t^1(\boldsymbol{v}_1^1(\boldsymbol{z}_1^\ell),\boldsymbol{z}_1^1,\ldots,\boldsymbol{z}_t^1,\boldsymbol{E}_1)$ and   $\partial_{m_k^\ell}\tilde{h}_{t+1}^L(\boldsymbol{m}_1^L,\ldots,\boldsymbol{m}_t^L,\boldsymbol{F}_L)$ for the first and last layers). The argument $\boldsymbol{v}_1^\ell(\boldsymbol{z}_1^\ell)$ disappears for $t=1$.
	
	Next, we define matrices $\boldsymbol{\hat{M}}_t^{\alpha,\ell},\boldsymbol{\hat{M}}_t^{\beta,\ell}\in\mathbb{R}^{(t+1)\times(t+1)}$ as
	\begin{equation}
		\boldsymbol{\hat{M}}_t^{\alpha,\ell}=\sum_{j=0}^t\kappa_{2(j+1)}^\ell\boldsymbol{\hat{\Psi}}^\ell_t(\boldsymbol{\hat{\Phi}}^\ell_t\boldsymbol{\hat{\Psi}}^\ell_t)^j,
	\end{equation}
	\begin{equation}
		\boldsymbol{\hat{M}}_t^{\beta,\ell}=\delta_\ell\sum_{j=0}^t\kappa_{2(j+1)}^\ell\boldsymbol{\hat{\Phi}}^\ell_t(\boldsymbol{\hat{\Psi}}^\ell_t\boldsymbol{\hat{\Phi}}^\ell_t)^j.
	\end{equation}
	The coefficients $\{\hat{\alpha}_{ti}^\ell\}_{i=1}^t$ and $\{\hat{\beta}_{ti}^\ell\}_{i=1}^{t-1}$ are then obtained from the last rows of $\boldsymbol{\hat{M}}_t^{\alpha,\ell}$ and $\boldsymbol{\hat{M}}_{t+1}^{\beta,\ell}$ as
	\begin{equation}
		(\hat{\alpha}_{t1},\ldots,\hat{\alpha}_{tt})=[\boldsymbol{\hat{M}}_t^{\alpha,\ell}]_{t,1:t},
	\end{equation}
	\begin{equation}
		(\hat{\beta}_{t1},\ldots,\hat{\beta}_{t,t-1})=[\boldsymbol{\hat{M}}_t^{\beta,\ell}]_{t,1:t-1}.
	\end{equation}
	
	\begin{algorithm}[tb]   
		\caption{State evolution of ML-RI-General Recursion}
		\begin{algorithmic}[1]\label{algo:genSE}
			\STATE$Z_1^1\sim\mathcal{N}(0,\tilde{\Omega}_1^1)$
			\STATE$V_1^1=\tilde{f}_t^1(Z_1^1,E_1)$	
			\STATE$M_1^1\sim\mathcal{N}(0,\tilde{\Sigma}_1^1)$
			\STATE$V_1^1=\tilde{f}_1^1(Z_1^1,E_1)$
			\FOR {$\ell=2:L$}
			\STATE$Z_1^\ell\sim\mathcal{N}(0,\tilde{\Omega}_1^\ell)$
			\STATE$V_1^\ell=\tilde{f}_t^\ell(Z_1^\ell,M_1^{\ell-1},E_\ell)$
			\STATE$M_1^\ell\sim\mathcal{N}(0,\tilde{\Sigma}_1^\ell)$
			\ENDFOR
			\STATE$U_2^L=\tilde{h}_2^L(M_1^L,F_L)$
			\STATE$(Z_1^L,Z_2^L)\sim\mathcal{N}(0,\tilde{\Omega}_2^L)$
			\FOR {$\ell=L-1:1$}
			\STATE$U_2^\ell=\tilde{h}_2^\ell(M_1^\ell,Z_1^{\ell+1},Z_2^{\ell+1},F_L)$
			\STATE$(Z_1^\ell,Z_2^\ell)\sim\mathcal{N}(0,\tilde{\Omega}_2^\ell)$
			\ENDFOR
			
			\FOR {$t=2,\ldots,T$}
			\IF {$t>2$}
			\FOR {$\ell=1:L$}
			\STATE$(Z_1^\ell,\ldots,Z_t^\ell)\sim\mathcal{N}(0,\tilde{\Omega}_t^\ell)$
			\ENDFOR
			\ENDIF
			\STATE$V_t^1=\tilde{f}_t^1(V_1^1,Z_1^1,\ldots,Z_t^1,E_1)$	
			\STATE$(M_1^1,\ldots,M_t^1)\sim\mathcal{N}(0,\tilde{\Sigma}_t^1)$
			\FOR {$\ell=2:L$}
			\STATE$V_t^\ell=\tilde{f}_t^\ell(V_1^\ell,Z_1^\ell,\ldots,Z_t^\ell,M_1^{\ell-1},\ldots,M_t^{\ell-1},E_\ell)$
			\STATE$(M_1^\ell,\ldots,M_t^\ell)\sim\mathcal{N}(0,\tilde{\Sigma}_t^\ell)$
			\ENDFOR
			\FOR {$\ell=1:L-1$}
			\STATE$U_{t+1}^\ell=\tilde{h}_{t+1}^\ell(M_1^\ell,\ldots,M_t^\ell,Z_1^{\ell+1},\ldots,Z_t^{\ell+1},F_\ell)$
			\ENDFOR
			\STATE$U_{t+1}^L=\tilde{h}_{t+1}^L(M_1^L,\ldots,M_t^L,,F_L)$
			\ENDFOR
		\end{algorithmic}
	\end{algorithm} 
	
	The state evolution of the ML-RI-General recursion is described in Algorithm \ref{algo:genSE}. It requires auxiliary matrices $\boldsymbol{\tilde{\Psi}}_t^\ell,\boldsymbol{\tilde{\Phi}}_t^\ell,\boldsymbol{\tilde{\Gamma}}_t^\ell,\boldsymbol{\tilde{\Delta}}_t^\ell\in\mathbb{R}^{t\times t}$ defined as
	\begin{equation}
		\boldsymbol{\tilde{\Psi}}_t^\ell=\left(\begin{matrix}
			\mathbb{E}[\partial_1 V_1^\ell] & 0  & \cdots   & 0 & 0 \\
			\mathbb{E}[\partial_1{V}_2^\ell] & \mathbb{E}[\partial_2{V}_2^\ell]  & 0   & \cdots & 0\\
			\mathbb{E}[\partial_1{V}_3^\ell] & \mathbb{E}[\partial_2{V}_3^\ell] & \mathbb{E}[\partial_3{V}_3^\ell]  & \cdots & 0\\
			\vdots & \vdots  & \vdots  & \ddots   & \vdots & \\
			\mathbb{E}[\partial_1{V}_t^\ell] & \mathbb{E}[\partial_2{V}_t^\ell] &\mathbb{E}[\partial_3{V}_t^\ell]& \cdots\  & \mathbb{E}[\partial_t{V}_t^\ell] & \\
		\end{matrix}\right),
	\end{equation}
	\begin{equation}
		\boldsymbol{\tilde{\Phi}}_t^\ell=\left(\begin{matrix}
			0 & 0  & \cdots   & 0 & 0 \\
			\mathbb{E}[\partial_1{U}_2^\ell] & 0  & 0   & \cdots & 0\\
			\mathbb{E}[\partial_1{U}_3^\ell] & \mathbb{E}[\partial_2{U}_3^\ell]  & 0   & \cdots & 0\\
			\vdots & \vdots  & \vdots  & \ddots   & \vdots & \\
			\mathbb{E}[\partial_1{U}_t^\ell] & \mathbb{E}[\partial_2{U}_t^\ell] & \cdots\  & \mathbb{E}[\partial_t{U}_t^\ell] & 0\\
		\end{matrix}\right),
	\end{equation}
	\begin{equation}
		\boldsymbol{\tilde{\Gamma}}_t^\ell=\left(\begin{matrix}
			\mathbb{E}[(V_1^\ell)^2] & \mathbb{E}[V_1^\ell V_2^\ell]   & \cdots & \mathbb{E}[V_1^\ell V_t^\ell] \\
			\mathbb{E}[V_1^\ell V_2^\ell] & \mathbb{E}[(V_2^\ell)^2]    & \cdots & \mathbb{E}[V_2^\ell V_t^\ell]\\
			\vdots & \vdots & \ddots   & \vdots & \\
			\mathbb{E}[V_1^\ell V_t^\ell] & \mathbb{E}[V_2^\ell V_t^\ell] & \cdots\  & \mathbb{E}[(V_t^\ell)^2]
		\end{matrix}\right),
	\end{equation}
	\begin{equation}
		\boldsymbol{\tilde{\Delta}}_t^\ell=\left(\begin{matrix}
			\mathbb{E}[(U_1^\ell)^2]  & \mathbb{E}[U_1^\ell U_2^\ell]   & \cdots & \mathbb{E}[U_1^\ell U_t^\ell]\\
			\mathbb{E}[U_1^\ell U_2^\ell] & \mathbb{E}[(U_2^\ell)^2]   & \cdots & \mathbb{E}[U_2^\ell U_t^\ell]\\
			\vdots  & \vdots  & \ddots   & \vdots & \\
			\mathbb{E}[U_1^\ell U_t^\ell] &\mathbb{E}[U_2^\ell U_t^\ell]& \cdots\  & \mathbb{E}[(U_t^\ell)^2]
		\end{matrix}\right).
	\end{equation}
	We can then obtain the covariance matrices $\boldsymbol{\tilde{\Sigma}}_{t+1}^\ell,\boldsymbol{\tilde{\Omega}}_{t+1}^\ell\in\mathbb{R}^{t\times t}$ from
	\begin{equation}
		\boldsymbol{\tilde{\Sigma}}_{t+1}^\ell=\sum_{j=0}^{2t+1}\bar{\kappa}_{2(j+1)}^\ell\boldsymbol{\tilde{\Xi}}_{t+1}^{(j),\ell},
	\end{equation}
	\begin{equation}
		\boldsymbol{\tilde{\Omega}}_{t+1}^\ell=\delta_\ell\sum_{j=0}^{2(t+1)}\bar{\kappa}_{2(j+1)}^\ell\boldsymbol{\tilde{\Theta}}_{t+1}^{(j),\ell},
	\end{equation}
	where, for $j\geq1$,
	\begin{equation}
		\begin{aligned}
			&\boldsymbol{\tilde{\Xi}}_{t+1}^{(j),\ell}=\sum_{i=1}^j(\boldsymbol{\tilde{\Psi}}^\ell_{t+1}\boldsymbol{\tilde{\Phi}}^\ell_{t+1})^i\boldsymbol{\tilde{\Gamma}}^\ell_{t+1}((\boldsymbol{\tilde{\Psi}}^\ell_{t+1}\boldsymbol{\tilde{\Phi}}^\ell_{t+1})^T)^{j-i}\\
			&+\sum_{i=1}^{j-1}(\boldsymbol{\tilde{\Psi}}^\ell_{t+1}\boldsymbol{\tilde{\Phi}}^\ell_{t+1})^i\boldsymbol{\tilde{\Psi}}^\ell_{t+1}\boldsymbol{\tilde{\Delta}}^\ell_{t+1}(\boldsymbol{\tilde{\Psi}}^\ell_{t+1})^T((\boldsymbol{\tilde{\Psi}}^\ell_{t+1}\boldsymbol{\tilde{\Phi}}^\ell_{t+1})^T)^{j-i-1},
		\end{aligned}\label{xi}
	\end{equation}
	\begin{equation}
		\begin{aligned}
			&\boldsymbol{\tilde{\Theta}}_{t+1}^{(j),\ell}=\sum_{i=1}^j(\boldsymbol{\tilde{\Phi}}^\ell_{t+1}\boldsymbol{\tilde{\Psi}}^\ell_{t+1})^i\boldsymbol{\tilde{\Delta}}^\ell_{t+1}((\boldsymbol{\tilde{\Phi}}^\ell_{t+1}\boldsymbol{\tilde{\Psi}}^\ell_{t+1})^T)^{j-i}\\
			&+\sum_{i=1}^{j-1}(\boldsymbol{\tilde{\Phi}}^\ell_{t+1}\boldsymbol{\tilde{\Psi}}^\ell_{t+1})^i\boldsymbol{\tilde{\Phi}}^\ell_{t+1}\boldsymbol{\tilde{\Gamma}}^\ell_{t+1}(\boldsymbol{\tilde{\Psi}}_{t+1}^\ell)^T((\boldsymbol{\tilde{\Phi}}^\ell_{t+1}\boldsymbol{\tilde{\Psi}}^\ell_{t+1})^T)^{j-i-1},
		\end{aligned}\label{theta}
	\end{equation}
	with  $\boldsymbol{\tilde{\Xi}}_{t+1}^{(0),\ell}=\boldsymbol{\tilde{\Gamma}}_{t+1}^\ell$ and  $\boldsymbol{\tilde{\Theta}}_{t+1}^{(0),\ell}=\boldsymbol{\tilde{\Delta}}^\ell_{t+1}$.

	\newtheorem{assumption}{Assumption}
	\begin{assumption}
		(a) $\{\boldsymbol{A}_\ell\}_{\ell=1}^L$ are mutually independent rotationally invariant matrices, as defined in Section \ref{subsec:model}. (b) For the side information, $\boldsymbol{E}_\ell\overset{W_2}{\to}E_\ell$ and $\boldsymbol{F}_\ell\overset{W_2}{\to}F_\ell$ with finite second moments and independent from all other random variables. (c) For the initialization,  $\boldsymbol{u}_1^\ell\overset{W_2}{\to}U_1^\ell$ with finite second moment and independent from $\boldsymbol{A}_\ell$. (d) For the denoising functions, we assume that $\tilde{f}_t^\ell$ and $\tilde{h}_t^\ell$ are Lipschitz in each of their arguments, and that the partial derivatives $\partial_{z_k^\ell}\tilde{f}_t^\ell$, $\partial_{m_k^\ell}\tilde{h}_t^\ell$ are all continuous on sets of probability 1, under the laws given by Algorithm 4.
	\end{assumption}
	
	At this point, we are ready to state our key result, which is proved in Appendix \ref{app:thmgen}.
	
	\begin{theorem}\label{thm:gen}
		Consider the general recursion in Algorithm 3 and its state evolution in Algorithm 4 under Assumption 1. Let $\psi$: $\mathbb{R}^{2t+k_{1\ell}}\to\mathbb{R}$ and $\phi$: $\mathbb{R}^{2t+k_{2\ell}}\to\mathbb{R}$ be any pseudo-Lipschitz functions of order 2. Then for each $t\geq1$, we almost surely
		have
		\begin{equation}
			\begin{aligned}
				&\lim_{n_\ell\to\infty}\frac{1}{n_\ell}\sum_{i=1}^{n_\ell}\psi(z_{1,i}^\ell,\ldots,z_{t,i}^\ell,v_{1,i}^\ell,\ldots,v_{t,i}^\ell,E_{\ell,i})\\&\qquad\qquad\qquad=\mathbb E[\psi(Z_1^\ell,\ldots,Z_t^\ell,V_1^\ell,\ldots,V_t^\ell,E_\ell)],
			\end{aligned}
		\end{equation}
		\begin{equation}
			\begin{aligned}
				&\lim_{n_{\ell+1}\to\infty}\frac{1}{n_{\ell+1}}\sum_{i=1}^{n_{\ell+1}}\phi(m_{1,i}^\ell,\ldots,m_{t,i}^\ell,u_{1,i}^\ell,\ldots,u_{t+1,i}^\ell,F_{\ell,i})\\&\qquad\qquad\qquad =\mathbb E[\phi(M_1^\ell,\ldots,M_t^\ell,U_1^\ell,\ldots,U_{t+1}^\ell,F_\ell)],
			\end{aligned}
		\end{equation}
		where $E_{\ell,i}$ and $F_{\ell,i}$ denote the $i$-th row of $\boldsymbol{E}_\ell$ and $\boldsymbol{F}_\ell$.
	\end{theorem}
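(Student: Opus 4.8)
The plan is to establish Theorem~\ref{thm:gen} by the conditioning technique of \cite{fan2022approximate,venkataramanan2022estimation}, carried out simultaneously over the $L$ mutually independent rotationally invariant matrices $\{\boldsymbol{A}_\ell\}_{\ell=1}^L$. Concretely, I would run a double induction: an outer induction on the iteration index $t$, and, within each $t$, a sweep over the layers in the exact order in which Algorithm~\ref{algo:gen} produces the iterates (the forward pass $\ell=1,\dots,L$ generating $\boldsymbol{v}_t^\ell,\boldsymbol{m}_t^\ell$, then the backward pass $\ell=L,\dots,1$ generating $\boldsymbol{u}_{t+1}^\ell$, with the first two steps handled separately as in the algorithm). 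The inductive claim at a given stage is that the joint empirical distribution of all iterates $\{\boldsymbol{z}_i^\ell,\boldsymbol{v}_i^\ell,\boldsymbol{m}_i^\ell,\boldsymbol{u}_i^\ell\}$ computed so far, together with the side information $\{\boldsymbol{E}_\ell,\boldsymbol{F}_\ell\}$, converges in $W_2$ to the jointly Gaussian (conditionally on the side information) vectors of Algorithm~\ref{algo:genSE}, with covariances $\boldsymbol{\tilde{\Sigma}}_t^\ell,\boldsymbol{\tilde{\Omega}}_t^\ell$; and, in addition, that the empirical partial derivatives $\langle\partial_k\boldsymbol{v}_t^\ell\rangle,\langle\partial_k\boldsymbol{u}_{t+1}^\ell\rangle$ and empirical second moments converge to their state-evolution counterparts, so that $\boldsymbol{\hat{\Psi}}_t^\ell\to\boldsymbol{\tilde{\Psi}}_t^\ell$, $\boldsymbol{\hat{\Phi}}_t^\ell\to\boldsymbol{\tilde{\Phi}}_t^\ell$ and hence the Onsager coefficients $\hat\alpha_{ti}^\ell,\hat\beta_{ti}^\ell$ converge to their limiting values.

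The core step is the conditioning. Consider the line of Algorithm~\ref{algo:gen} computing, say, $\boldsymbol{m}_t^\ell=\boldsymbol{A}_\ell\boldsymbol{v}_t^\ell-\sum_{i=1}^t\hat\alpha_{ti}^\ell\boldsymbol{u}_i^\ell$, and write $\boldsymbol{A}_\ell=\boldsymbol{O}_\ell^T\boldsymbol{\Lambda}_\ell\boldsymbol{Q}_\ell$. Let $\mathcal{S}$ be the $\sigma$-algebra generated by everything computed before this line. Then $\mathcal{S}$ fixes the linear images $\boldsymbol{Q}_\ell\boldsymbol{v}_i^\ell$ ($i\le t$) and $\boldsymbol{O}_\ell\boldsymbol{u}_i^\ell$ ($i<t$) --- equivalently $\boldsymbol{\Lambda}_\ell\boldsymbol{Q}_\ell\boldsymbol{v}_i^\ell$ and $\boldsymbol{\Lambda}_\ell^T\boldsymbol{O}_\ell\boldsymbol{u}_i^\ell$ --- while leaving the Haar factors of all other layers still Haar, since $\{\boldsymbol{A}_{\ell'}\}_{\ell'\neq\ell}$ are independent of $\boldsymbol{A}_\ell$. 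Applying the standard characterization of a Haar orthogonal matrix conditioned on linear constraints (see \cite{fan2022approximate}), the conditional law of $\boldsymbol{A}_\ell\boldsymbol{v}_t^\ell$ given $\mathcal{S}$ decomposes as a deterministic part --- a specific linear combination of the past iterates $\{\boldsymbol{u}_i^\ell\}$ --- plus a ``fresh'' part built from a new Haar matrix on the orthogonal complement, asymptotically Gaussian with the covariance prescribed by $\boldsymbol{\tilde{\Omega}}$. Subtracting the Onsager correction $\sum_i\hat\alpha_{ti}^\ell\boldsymbol{u}_i^\ell$, whose coefficients are the last row of $\boldsymbol{\hat{M}}_t^{\alpha,\ell}=\sum_j\kappa_{2(j+1)}^\ell\boldsymbol{\hat{\Psi}}_t^\ell(\boldsymbol{\hat{\Phi}}_t^\ell\boldsymbol{\hat{\Psi}}_t^\ell)^j$, cancels exactly the deterministic part, leaving the centred Gaussian residual; the symmetric statement holds on the $\boldsymbol{z}$-side with $\boldsymbol{\hat{M}}_t^{\beta,\ell}$. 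This matching of free-cumulant weights against powers of $\boldsymbol{\hat{\Psi}}\boldsymbol{\hat{\Phi}}$ is precisely the computation carried out in \cite{venkataramanan2022estimation} for $L=1$; because distinct layers' Haar matrices are independent, the same argument applies verbatim at each layer, and the cross-layer coupling enters only through the arguments of the denoisers, which are already controlled by the inductive hypothesis.

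The remaining ingredients are adaptations of \cite{fan2022approximate,venkataramanan2022estimation}. First, the ``fresh'' piece's empirical distribution concentrates around its Gaussian limit, using concentration of Lipschitz functionals of Haar matrices together with the deterministic limiting spectral behaviour of $\boldsymbol{\Lambda}_\ell$ applied to a uniform vector on the sphere; this is where the compact support of $\Lambda_\ell$ (hence boundedness of the moments $m_{2k}^\ell$ and the convergence $\kappa_{2k}^\ell\to\bar\kappa_{2k}^\ell$, cf.\ \eqref{eq:fc}) is used. Second, the Lipschitz-with-a.e.-continuous-derivative assumption on $\tilde f_t^\ell,\tilde h_t^\ell$ (Assumption~1(d)) transfers $W_2$-convergence of the arguments to $W_2$-convergence of $\boldsymbol{v}_t^\ell,\boldsymbol{u}_{t+1}^\ell$ and of their empirical partial derivatives, closing the induction by supplying the next $\boldsymbol{\hat{\Psi}},\boldsymbol{\hat{\Phi}}$, the next Onsager coefficients, and the updated covariances $\boldsymbol{\tilde{\Sigma}}_{t+1}^\ell,\boldsymbol{\tilde{\Omega}}_{t+1}^\ell$ via the formulas for $\boldsymbol{\tilde{\Xi}},\boldsymbol{\tilde{\Theta}}$. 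The base cases $t=1$ and the special second step reduce to a direct computation using that $\boldsymbol{Q}_\ell\boldsymbol{u}_1^\ell$ is uniform on the sphere and independent of $\boldsymbol{\Lambda}_\ell$.

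I expect the main obstacle to be the non-commutative bookkeeping of the \emph{memory terms}: unlike in the Gaussian-design case, conditioning couples the new iterate to the entire past trajectory through arbitrary powers of $\boldsymbol{A}_\ell\boldsymbol{A}_\ell^T$, and one must verify that, after subtracting the Onsager terms read off from $\boldsymbol{\hat{M}}_t^{\alpha,\ell},\boldsymbol{\hat{M}}_t^{\beta,\ell}$, exactly the right deterministic pieces cancel and the residual has covariance $\boldsymbol{\tilde{\Sigma}},\boldsymbol{\tilde{\Omega}}$ assembled from $\boldsymbol{\tilde{\Xi}},\boldsymbol{\tilde{\Theta}}$. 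This requires invoking the recursive definition \eqref{eq:fc} of the rectangular free cumulants and the moment--cumulant identities for $\boldsymbol{A}_\ell\boldsymbol{A}_\ell^T$ expressed as products of the $\boldsymbol{\Psi},\boldsymbol{\Phi},\boldsymbol{\Gamma},\boldsymbol{\Delta}$ matrices --- the genuinely technical core. A secondary but real difficulty is keeping the $L$ conditioning arguments mutually consistent: the layers must be processed in the order dictated by Algorithm~\ref{algo:gen} so that when layer $\ell$ is handled at time $t$ the relevant Haar factor has been conditioned only on linear constraints built from already-computed iterates; respecting this order in both Algorithm~\ref{algo:gen} and Algorithm~\ref{algo:genSE} is what makes the induction close.
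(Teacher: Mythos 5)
Your proposal follows essentially the same route as the paper's proof: a double induction over the iteration index and over the layers in the exact order of Algorithm 3, applying at each layer the Haar-conditioning decomposition of \cite{fan2022approximate,venkataramanan2022estimation} (parallel plus fresh-Gaussian part), Stein's lemma for the limiting Gram matrices, and convergence of the empirical $\boldsymbol{\hat{\Psi}}_t^\ell,\boldsymbol{\hat{\Phi}}_t^\ell$ and hence of the Onsager coefficients, exactly as in the paper's Lemma 1 whose parts (c) and (f) give the theorem. One small imprecision: the Onsager subtraction does not cancel the whole deterministic (parallel) part, but only its component along the past $\boldsymbol{v}_i^\ell$ (resp.\ $\boldsymbol{u}_i^\ell$); the surviving component along the past Gaussian iterates is precisely what yields the cross-covariance $\tilde{\boldsymbol{\omega}}_t^\ell$ (resp.\ $\tilde{\boldsymbol{\sigma}}_t^\ell$) of $\boldsymbol{z}_{t+1}^\ell$ (resp.\ $\boldsymbol{m}_{t+1}^\ell$) with the earlier iterates, as in the paper's expressions for $\boldsymbol{e}_\parallel^\ell$ and $\boldsymbol{d}_\parallel^\ell$.
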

	
	We remark that the result above is equivalent to the following convergence in Wasserstein-2 distance: 
	\begin{equation*}
		\begin{split}
			&(\boldsymbol{z}_1^\ell,\ldots,\boldsymbol{z}_t^\ell,\boldsymbol{v}_1^\ell,\ldots,\boldsymbol{v}_t^\ell,\boldsymbol{E}_\ell)\overset{W_2}{\to}(Z_1^\ell,\ldots,Z_t^\ell,V_1^\ell,\ldots,V_t^\ell,E_\ell),\\
			&(\boldsymbol{m}_1^\ell,...,\boldsymbol{m}_t^\ell,\boldsymbol{u}_1^\ell,...,\boldsymbol{u}_t^\ell,\boldsymbol{F}_\ell)\overset{W_2}{\to}(M_1^\ell,...,M_t^\ell,U_1^\ell,...,U_t^\ell,F_\ell).
		\end{split}
	\end{equation*}

	\subsection{Proof of Theorem \ref{thm:gen}}\label{app:thmgen}
	We start by defining $\boldsymbol{\hat{H}}_t^{(2k),\ell},\boldsymbol{\hat{I}}_t^{(2k),\ell},\boldsymbol{\hat{J}}_t^{(2k+1),\ell},\boldsymbol{\hat{L}}_t^{(2k),\ell}\in\mathbb{R}^{t\times t}$ as
	\begin{equation}
		\begin{aligned}
			\boldsymbol{\hat{H}}_t^{(2k),\ell}=\sum_{j=0}^\infty c_{2k,j}^\ell\boldsymbol{\hat{\Theta}}_t^{(j),\ell},\,\, \boldsymbol{\hat{I}}_t^{(2k+1),\ell}=\sum_{j=0}^\infty c_{2k+1,j}^\ell\boldsymbol{\hat{X}}_t^{(j),\ell},\\
			\boldsymbol{\hat{J}}_t^{(2k+1),\ell}=\sum_{j=0}^\infty c_{2k+1,j}^{\prime\ell}(\boldsymbol{\hat{X}}_t^{(j),\ell})^T,\,\, \boldsymbol{\hat{L}}_t^{(2k),\ell}=\sum_{j=0}^\infty c_{2k,j}^{\prime\ell}\boldsymbol{\hat{\Xi}}_t^{(j),\ell},
		\end{aligned}
	\end{equation}
	where $\boldsymbol{\hat{\Theta}}$ and $\boldsymbol{\hat{\Xi}}$ are the empirical versions of $\boldsymbol{\tilde{\Theta}}$ and $\boldsymbol{\tilde{\Xi}}$. We use $\ \hat{}\ $ to denote an empirical average and $\ \tilde{}\ $ or $\ \bar{}\ $ to denote an expectation. For example, given $\boldsymbol{\tilde{\Xi}}$, $\boldsymbol{\hat{\Xi}}$ is obtained by replacing $\mathbb{E}[\cdot]$ with $\langle\cdot\rangle$ and $\bar{\kappa}$ with $\kappa$. Furthermore, we define 
	\begin{equation}
		\begin{aligned}
			&c_{2k+1,j}^\ell=\sum_{m=0}^{j+1}c_{2k,m}^\ell\kappa_{2(j+1-m)}^\ell,\\
			& 	c_{2k+1,j}^{\prime\ell}=\delta_\ell\sum_{m=0}^{j+1}c_{2k,m}^{\prime\ell}\kappa_{2(j+1-m)}^\ell,\\
			&c_{2k+2,j}^\ell=\delta_\ell\sum_{m=0}^jc_{2k+1,m}^\ell\kappa_{2(j-m)}^\ell,\\
			& c_{2k+2,j}^{\prime\ell}=\sum_{m=0}^jc_{2k+1,m}^{\prime\ell}\kappa_{2(j-m)}^\ell,
		\end{aligned}
	\end{equation}
	and
	\begin{equation}
		\begin{aligned}
			\boldsymbol{\hat{X}}_t^{(j),\ell}=&\sum_{i=1}^j(\boldsymbol{\hat{\Psi}}^\ell_t\boldsymbol{\hat{\Phi}}^\ell_t)^i\boldsymbol{\hat{\Psi}}^\ell_t\boldsymbol{\hat{\Delta}}^\ell_t((\boldsymbol{\hat{\Psi}}^\ell_t\boldsymbol{\hat{\Phi}}^\ell_t)^T)^{j-i}\\
			&+\sum_{i=1}^{j-1}(\boldsymbol{\hat{\Psi}}^\ell_t\boldsymbol{\hat{\Phi}}^\ell_t)^i\boldsymbol{\hat{\Gamma}}^\ell_t(\boldsymbol{\hat{\Phi}}^\ell_t)^T((\boldsymbol{\hat{\Psi}}^\ell_t\boldsymbol{\hat{\Phi}}^\ell_t)^T)^{j-i},
		\end{aligned}
	\end{equation}
	with $\kappa_0^\ell=0$, $c_{0,0}^\ell=c_{0,0}^{\prime\ell}=1$, $c_{0,j}^\ell=c_{0,j}^{\prime\ell}=1$. As shown in Lemma B.1 of \cite{fan2022approximate}, $c_{1,j}^\ell=\kappa_{2(j+1)}^\ell$ and $c_{2k,0}^\ell=m_{2k}^\ell$. 
	We further define $\boldsymbol{\hat{\gamma}}_t^\ell,\boldsymbol{\hat{T}}_t^\ell\in\mathbb{R}^{2t\times2t}$ as
	\begin{equation}
		\boldsymbol{\hat{\gamma}}_t^\ell=\left(\begin{matrix}
			\boldsymbol{\hat{\Delta}}_t^\ell&\boldsymbol{\hat{\Delta}}_t^\ell\boldsymbol{\hat{M}}_t^{\alpha,\ell}+\boldsymbol{\hat{\Phi}}_t^\ell\boldsymbol{\hat{\Sigma}}_t^\ell\\
			(\boldsymbol{\hat{\Phi}}_t^\ell)^T&(\boldsymbol{\hat{\Phi}}_t^\ell)^T\boldsymbol{\hat{M}}_t^{\alpha,\ell}+\boldsymbol{I}
		\end{matrix}\right),
	\end{equation}
	\begin{equation}
		\boldsymbol{\hat{T}}_t^\ell=\left(\begin{matrix}
			\boldsymbol{\hat{\Gamma}}_t^\ell&\boldsymbol{\hat{\Gamma}}_t^\ell\boldsymbol{\hat{M}}_t^{\beta,\ell}+\boldsymbol{\hat{\Psi}}_t^\ell\boldsymbol{\hat{\Omega}}_t^\ell\\
			(\boldsymbol{\hat{\Psi}}_t^\ell)^T&(\boldsymbol{\hat{\Psi}}_t^\ell)^T\boldsymbol{\hat{M}}_t^{\beta,\ell}+\boldsymbol{I}
		\end{matrix}\right).
	\end{equation}
	It is also convenient to divide the main part in Algorithm 3 as
	\begin{align}
		&\boldsymbol{d}_t^\ell=\boldsymbol{O}_\ell\boldsymbol{u}_t^\ell,\quad \boldsymbol{e}_t^\ell=\boldsymbol{Q}_\ell^T\boldsymbol{\Lambda}_\ell^T\boldsymbol{d}_t^\ell,\\
		&\boldsymbol{p}_t^\ell=\boldsymbol{Q}_\ell\boldsymbol{v}_t^\ell,\quad \boldsymbol{q}_t^\ell=\boldsymbol{O}_\ell^T\boldsymbol{\Lambda}_\ell\boldsymbol{p}_t^\ell,
	\end{align}
	and thus
	\begin{equation}
		\boldsymbol{z}_t^\ell=\boldsymbol{e}_t^\ell-\sum_{i=1}^{t-1} \hat{\beta}_{ti}^\ell\boldsymbol{v}_i^\ell,\quad \boldsymbol{m}_t^\ell=\boldsymbol{q}_t^\ell-\sum_{i=1}^t\hat{\alpha}_{ti}^\ell\boldsymbol{u}_i^\ell.
	\end{equation}
	
	Before stating our main lemma, let us define $\boldsymbol{\lambda}_1^\ell\in\mathbb{R}^{n_{\ell+1}}$ as $\boldsymbol{\lambda}_1^\ell=\boldsymbol{\lambda}_\ell$ for $\delta_\ell\geq1$ and extended by additional $0$'s for $\delta_\ell<1$. Similarly, we define $\boldsymbol{\lambda}_2^\ell\in\mathbb{R}^{n_\ell}$ as $\boldsymbol{\lambda}_2^\ell=\boldsymbol{\lambda}_\ell$ for $\delta_\ell<1$ and extended by additional $0$'s for $\delta_\ell\geq1$. We also let $\Lambda_1^\ell, \Lambda_2^\ell$ be such that  $\boldsymbol{\lambda}_1^\ell\overset{W_2}{\to}\Lambda_1^\ell$ and $\boldsymbol{\lambda}_2^\ell\overset{W_2}{\to}\Lambda_2^\ell$. Now we are ready to state and prove Lemma \ref{lemma:main}.
	\newtheorem{lemma}{Lemma}
	\begin{lemma}\label{lemma:main}
		Under Assumption 1, the following results almost surely hold for $t \in [T]$ and $\ell\in [L]$:
		\begin{enumerate}[(a)]
			\item Let $\boldsymbol{\tilde{I}}_t^{\prime(2k),\ell}$ (resp.\ $\boldsymbol{\hat{I}}_t^{\prime(2k),\ell}$) be the matrix containing the  first $t-1$ rows of $\boldsymbol{\tilde{I}}_t^{(2k),\ell}$ (resp.\ $\boldsymbol{\hat{I}}_t^{(2k),\ell}$). Then,
			
			\begin{equation}
				\begin{split}
					\lim_{n_\ell\to\infty}(&\boldsymbol{\hat{\Delta}}_t^\ell,\boldsymbol{\hat{\Phi}}_t^\ell,\boldsymbol{\hat{\Theta}}_t^{(j),\ell},\boldsymbol{\hat{M}}_t^{\beta,\ell},\boldsymbol{\hat{\Omega}}_t^\ell,\boldsymbol{\hat{H}}_t^{(2k),\ell},\boldsymbol{\hat{I}}_t^{\prime(2k),\ell})\\
					&=(\boldsymbol{\tilde{\Delta}}_t^\ell,\boldsymbol{\tilde{\Phi}}_t^\ell,\boldsymbol{\tilde{\Theta}}_t^{(j),\ell},\boldsymbol{\tilde{M}}_t^{\beta,\ell},\boldsymbol{\tilde{\Omega}}_t^\ell,\boldsymbol{\tilde{H}}_t^{(2k),\ell},\boldsymbol{\tilde{I}}_t^{\prime(2k),\ell})  .  
				\end{split}
			\end{equation}
			\item For some random variables $D_1^\ell,\ldots,D_t^\ell,\bar{P}_1^\ell,\ldots,\bar{P}_{t-1}^\ell$ that satisfy $\mathbb{E}[(D_1^\ell,\ldots,D_t^\ell)^T(\Lambda_1^\ell)^{2k}(D_1^\ell,\ldots,D_t^\ell)]=\tilde{\boldsymbol{H}}_t^{(2k),\ell}$ and $\mathbb{E}[(\bar{P}_1^\ell,\ldots,\bar{P}_{t-1}^\ell)^T(\Lambda_1^\ell)^{2k}(D_1^\ell,\ldots,D_t^\ell)]=\tilde{\boldsymbol{I}}_t^{\prime(2k),\ell}$, we have
			\begin{equation}
				\begin{split}
					(\boldsymbol{d}_1^\ell,\ldots,\boldsymbol{d}_t^\ell,&\boldsymbol{\Lambda}_\ell\boldsymbol{p}_1^\ell,\ldots,\boldsymbol{\Lambda}_\ell\boldsymbol{p}_{t-1}^\ell,\boldsymbol{\lambda}_1^\ell)\\
					&\overset{W_2}{\to}(D_1^\ell,\ldots,D_t^\ell,\bar{P}_1^{\prime\ell},\ldots,\bar{P}_{t-1}^\ell,\Lambda_1^\ell).
				\end{split}
			\end{equation}
			
			\item As described in Theorem \ref{thm:gen},
			\begin{equation}
				\begin{split}
					(\boldsymbol{v}_1^\ell,\ldots,\boldsymbol{v}_t^\ell,&\boldsymbol{z}_1^\ell,\ldots,\boldsymbol{z}_t^\ell,\boldsymbol{E}_\ell)\\
					&\overset{W_2}{\to}(V_1^\ell,\ldots,V_t^\ell,Z_1^\ell,\ldots,Z_t^\ell,E_\ell).
				\end{split}
			\end{equation}
			
			\item We have that
			\begin{equation}
				\begin{split}
					\lim_{n_\ell\to\infty}(&\hat{\boldsymbol{\Gamma}}_t^\ell,\hat{\boldsymbol{\Psi}}_t^\ell,\hat{\boldsymbol{\Xi}}_t^{(j),\ell},\hat{\boldsymbol{M}}_t^{\alpha,\ell},\hat{\boldsymbol{\Sigma}}_t^\ell,\hat{\boldsymbol{L}}_t^{(2k),\ell},\hat{\boldsymbol{J}}_t^{(2k+1),\ell})\\&=(\tilde{\boldsymbol{\Gamma}}_t^\ell,\tilde{\boldsymbol{\Psi}}_t^\ell,\tilde{\boldsymbol{\Xi}}_t^{(j),\ell},\tilde{\boldsymbol{M}}_t^{\alpha,\ell},\tilde{\boldsymbol{\Sigma}}_t^\ell,\tilde{\boldsymbol{L}}_t^{(2k),\ell},\tilde{\boldsymbol{J}}_t^{(2k+1),\ell}).
				\end{split}
			\end{equation}
			
			\item For some random variables $P_1^\ell,\ldots,P_t^\ell,\bar{D}_1^\ell,\ldots,\bar{D}_t^\ell$ that satisfy $\mathbb{E}[(P_1^\ell,\ldots,P_t^\ell)^T(\Lambda_2^\ell)^{2k}(P_1^\ell,\ldots,P_t^\ell)]=\tilde{\boldsymbol{L}}_t^{(2k),\ell}$ and $\mathbb{E}[(\bar{D}_1^\ell,\ldots,\bar{D}_t^\ell)^T(\Lambda_2^\ell)^{2k}(P_1^\ell,\ldots,P_t^\ell)]=\tilde{\boldsymbol{J}}_t^{(2k+1),\ell}$, we have
			\begin{equation}
				\begin{split}
					(\boldsymbol{p}_1^\ell,\ldots,\boldsymbol{p}_t^\ell,&\boldsymbol{\Lambda}_\ell^T\boldsymbol{d}_1^\ell,\ldots,\boldsymbol{\Lambda}_\ell^T\boldsymbol{d}_t^\ell,\boldsymbol{\lambda}_2^\ell)\\&\overset{W_2}{\to}(P_1^\ell,\ldots,P_t^\ell,\bar{D}_1^\ell,\ldots,\bar{D}_t^\ell,\Lambda_2^\ell).
				\end{split}
			\end{equation}
			
			\item As described in Theorem \ref{thm:gen},
			\begin{equation}
				\begin{split}
					(\boldsymbol{u}_1^\ell,&\ldots,\boldsymbol{u}_t^\ell,\boldsymbol{m}_1^\ell,\ldots,\boldsymbol{m}_t^\ell,\boldsymbol{F}_\ell)\\&\overset{W_2}{\to}(U_1^\ell,\ldots,U_t^\ell,M_1^\ell,\ldots,M_t^\ell,F_\ell).
				\end{split}
			\end{equation}
			
			\item As described in Theorem \ref{thm:gen},
			\begin{equation}
				\begin{split}
					(\boldsymbol{u}_1^\ell,&\ldots,\boldsymbol{u}_{t+1}^\ell,\boldsymbol{m}_1^\ell,\ldots,\boldsymbol{m}_t^\ell,\boldsymbol{F}_\ell)\\&\overset{W_2}{\to}(U_1^\ell,\ldots,U_{t+1}^\ell,M_1^\ell,\ldots,M_t^\ell,F_\ell).
				\end{split}
			\end{equation}	
			
			\item $\left(\begin{matrix}
				\boldsymbol{\tilde{\Delta}}_t^\ell&\boldsymbol{\tilde{\Phi}}_t^\ell\boldsymbol{\tilde{\Sigma}}_t^\ell\\
				\boldsymbol{\tilde{\Sigma}}_t^\ell(\boldsymbol{\tilde{\Phi}}_t^\ell)^T&\boldsymbol{\tilde{\Sigma}}^\ell_t
			\end{matrix}\right)$ and $\left(\begin{matrix}
				\boldsymbol{\tilde{\Gamma}}_t^\ell&\boldsymbol{\tilde{\Psi}}_t^\ell\boldsymbol{\tilde{\Omega}}_t^\ell\\
				\boldsymbol{\tilde{\Omega}}_t^\ell(\boldsymbol{\tilde{\Psi}}_t^\ell)^T&\boldsymbol{\tilde{\Omega}}_t^\ell
			\end{matrix}\right)$ are non-singular.
		\end{enumerate}	
	\end{lemma}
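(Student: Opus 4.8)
\emph{Proof plan.} I would prove all of (a)--(h) by a single induction on the iteration index $t$, with, inside each value of $t$, a secondary induction over the layers $\ell$ that mimics the order in which Algorithm \ref{algo:gen} produces the iterates: a forward sweep $\ell=1,\ldots,L$ for the quantities $\boldsymbol{v}_t^\ell,\boldsymbol{m}_t^\ell$ followed by a backward sweep $\ell=L,\ldots,1$ for $\boldsymbol{u}_{t+1}^\ell,\boldsymbol{z}_{t+1}^\ell$. The base case $t=1$ is read off directly from Lines 1--15 of Algorithm \ref{algo:gen} together with the initialization in Assumption 1(c). The whole scheme follows the conditioning technique of \cite{fan2022approximate}, extended to rotationally invariant generalized linear models in \cite{venkataramanan2022estimation}; the genuinely new feature here is the multi-layer coupling, but since $\{\boldsymbol{A}_\ell\}_{\ell=1}^L$ are mutually independent, at each step of the induction only one pair of Haar matrices $(\boldsymbol{O}_\ell,\boldsymbol{Q}_\ell)$ is exposed, and the cross-layer inputs $\boldsymbol{m}_i^{\ell-1}$ (entering $\tilde f_t^\ell$) and $\boldsymbol{z}_i^{\ell+1}$ (entering $\tilde h_{t+1}^\ell$) have already been controlled by the inductive hypothesis and can be treated as additional ``side information'' at that step --- exactly like $\boldsymbol{E}_\ell,\boldsymbol{F}_\ell$.

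\emph{The conditioning step.} Fix $(t,\ell)$ and write $\boldsymbol{A}_\ell=\boldsymbol{O}_\ell^T\boldsymbol{\Lambda}_\ell\boldsymbol{Q}_\ell$. Every iterate computed before the current one depends on $(\boldsymbol{O}_\ell,\boldsymbol{Q}_\ell)$ only through the linear observations $\{\boldsymbol{d}_i^\ell=\boldsymbol{O}_\ell\boldsymbol{u}_i^\ell\}$, $\{\boldsymbol{q}_i^\ell=\boldsymbol{O}_\ell^T\boldsymbol{\Lambda}_\ell\boldsymbol{p}_i^\ell\}$ and their $\boldsymbol{Q}_\ell$-side analogues. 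Conditioning on the $\sigma$-algebra they generate and using the standard decomposition of Haar measure under linear constraints, I would write $\boldsymbol{O}_\ell$ (and likewise $\boldsymbol{Q}_\ell$) as a deterministic part --- the orthogonal projection onto the span of the past observation vectors --- plus an independent Haar matrix acting on the orthogonal complement. Plugging this into the update line, the deterministic part reproduces the ``memory'' terms that are cancelled, up to $o(1)$, by the Onsager coefficients: these were \emph{defined} (in Section \ref{sec:AMP}) precisely as the relevant entries of the free-cumulant matrices $\boldsymbol{\hat{M}}_t^{\alpha,\ell},\boldsymbol{\hat{M}}_t^{\beta,\ell}$, so the cancellation is forced. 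What survives is a fresh Haar matrix applied to a vector asymptotically orthogonal to everything seen so far, which, by concentration of Haar measure, behaves jointly with the past iterates like a Gaussian vector with covariance governed by $\boldsymbol{\tilde{\Sigma}}_t^\ell,\boldsymbol{\tilde{\Omega}}_t^\ell$. Because $\boldsymbol{\Lambda}_\ell$ is an \emph{arbitrary} spectrum, the conditional mean involves all even powers $(\boldsymbol{\Lambda}_\ell)^{2k}$, so the induction cannot merely track the new Gaussian direction: it must carry the enlarged family of auxiliary vectors $D_1^\ell,\ldots,D_t^\ell$ and $P_1^\ell,\ldots,P_t^\ell$, together with their $(\boldsymbol{\Lambda}_\ell)^{2k}$-weighted Gram matrices $\boldsymbol{\tilde{H}}_t^{(2k),\ell},\boldsymbol{\tilde{L}}_t^{(2k),\ell}$ and cross-moments $\boldsymbol{\tilde{I}}_t^{(2k),\ell},\boldsymbol{\tilde{J}}_t^{(2k+1),\ell}$ --- this is the content of items (b) and (e). Verifying that this enlarged system is self-consistent under one step of the recursion is exactly where the combinatorial identities linking the coefficients $c_{2k,j}^\ell,c_{2k+1,j}^\ell$ to the rectangular free cumulants $\kappa_{2k}^\ell$ (Lemma B.1 of \cite{fan2022approximate}) are used, and the convergence $\kappa_{2k}^\ell\to\bar{\kappa}_{2k}^\ell$ from Section \ref{subsec:model} then upgrades the empirical recursions to their deterministic limits.

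\emph{Closing the loop, and item (h).} Once (b)--(c) and (e)--(g) hold at step $(t,\ell)$, items (a) and (d) are immediate: each ``hat'' matrix is a finite-dimensional empirical average of a pseudo-Lipschitz-order-$2$ function of the iterates $\boldsymbol{v}_i^\ell,\boldsymbol{z}_i^\ell,\boldsymbol{u}_i^\ell,\boldsymbol{m}_i^\ell$, the side information, and $\boldsymbol{\lambda}_\ell$, so the established $W_2$ convergences and the continuity of the derivative functions from Assumption 1(d) give convergence to the corresponding ``tilde'' matrix, and the remaining identities relating $\boldsymbol{\hat{M}},\boldsymbol{\hat{\Theta}},\boldsymbol{\hat{\Xi}},\boldsymbol{\hat{H}},\boldsymbol{\hat{L}},\boldsymbol{\hat{I}},\boldsymbol{\hat{J}}$ to $\boldsymbol{\hat{\Psi}},\boldsymbol{\hat{\Phi}},\boldsymbol{\hat{\Gamma}},\boldsymbol{\hat{\Delta}}$ and the $\kappa_{2k}^\ell$ are polynomial, hence pass to the limit. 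For (h), note that by Stein's lemma the two block matrices are exactly the covariance matrices of $(V_1^\ell,\ldots,V_t^\ell,Z_1^\ell,\ldots,Z_t^\ell)$ and of $(U_1^\ell,\ldots,U_t^\ell,M_1^\ell,\ldots,M_t^\ell)$; I would establish non-singularity either by the standard reduction to the non-degenerate case (perturb every denoiser by an arbitrarily small independent Gaussian, prove the claim there, and let the perturbation vanish), or, under the mild hypothesis that the input covariances $\boldsymbol{\tilde{\Omega}}_1^\ell,\boldsymbol{\tilde{\Sigma}}_1^\ell$ are positive definite, directly by induction: each newly generated Gaussian block $(Z_1^\ell,\ldots,Z_t^\ell)$ (resp.\ $(M_1^\ell,\ldots,M_t^\ell)$) contains a component orthogonal to the span of all previously generated variables with strictly positive variance, so no nontrivial linear combination degenerates (if such a variance ever vanished the recursion would have effectively terminated, and one truncates $T$ accordingly, as in \cite{venkataramanan2022estimation}).

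\emph{Main obstacle.} The hard part is the conditioning step, and specifically its two new demands relative to Gaussian-design AMP: (i) correctly identifying the conditional law of the Haar factors given the linear constraints accumulated over \emph{all} previous iterations and layers, and checking that the Onsager-debiasing cancels the deterministic contribution to leading order; and (ii) propagating not a single Gaussian direction but the whole hierarchy of $(\boldsymbol{\Lambda}_\ell)^{2k}$-weighted second-moment matrices, proving that this enlarged system closes under the recursion. The multi-layer coupling itself is comparatively benign, handled purely by the ordering of the induction and the mutual independence of the $\boldsymbol{A}_\ell$'s.
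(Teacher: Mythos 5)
Your plan follows essentially the same route as the paper's proof: an induction ordered exactly as Algorithm \ref{algo:gen} generates its iterates, the Haar-conditioning decomposition of \cite{fan2022approximate} (projection onto past linear constraints plus an independent Haar factor on the complement), with the cross-layer inputs $\boldsymbol{m}_i^{\ell-1},\boldsymbol{z}_i^{\ell+1}$ absorbed as side information thanks to the mutual independence of the $\boldsymbol{A}_\ell$, and propagation of the $(\Lambda^\ell)^{2k}$-weighted Gram and cross-moment matrices of items (b) and (e) via the $c_{k,j}^\ell$--free-cumulant identities. Items (a), (d) as limits of empirical averages, the Stein's-lemma identification of the block matrices in (h), and the non-degeneracy/perturbation handling all match what the paper does (largely by invoking Lemma B.4 of \cite{fan2022approximate}), so the proposal is correct and essentially identical in approach.
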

	\begin{proof}
		To induct on $t$ and $\ell$, we use symbols like $t^{(a),\ell}$, $t^{(b),\ell}$ to denote Lemma 1(a), Lemma 1(b) for iteration $t$ and layer $\ell$, etc. We will skip the parts that are similar to Lemma B.4 of \cite{fan2022approximate}. Our induction sequence follows exactly the iteration of Algorithm \ref{algo:gen}, that is $1^{(a),1}\to1^{(b),1}\to\ldots\to1^{(f),1}\to\ldots\to1^{(a),L}\ldots\to1^{(e),L}\to1^{(f),L}\to1^{(g),L}\to1^{(h),L}\to2^{(a),L}\to2^{(b),L}\to2^{(c),L}\to\ldots\to1^{(g),1}\to\ldots\to2^{(b),1}\to2^{(c),1}\to\ldots\to(t+1)^{(a),1}\to(t+1)^{(b),1}\to\ldots\to(t+1)^{(f),1}\to\ldots\to(t+1)^{(a),L}\to\ldots\to(t+1)^{(f),L}\to(t+1)^{(g),1}\to(t+1)^{(h),1}\to\ldots\to(t+1)^{(g),L}\to(t+1)^{(h),L}$. All the following limits hold almost surely, so we will not specify.
		
		$1^{(a),1}$: $\boldsymbol{\hat{\Delta}}_1^1=\langle(\boldsymbol{u}_1^1)^2\rangle$, and $\boldsymbol{\tilde{\Delta}}_1^\ell=\mathbb{E}[(U_1^1)^2]$. According to Assumption 2(c), $\boldsymbol{\hat{\Delta}}_1^1\to\boldsymbol{\tilde{\Delta}}_1^1$. The other variables are $0$ for $t=1$, so $1^{(a),1}$ naturally holds.
		
		$1^{(b),1}$: $\boldsymbol{d}_1^1={\boldsymbol O}_1\boldsymbol{u}_1^1$. According to Assumption 2(a) and Lemma \ref{useful lemma3}, $(\boldsymbol{d}_1^1,\boldsymbol{\lambda}_1^1)\to(D_1^1,\Lambda_1^1)$, with $D_1^1\sim\mathcal{N}(0,\mathbb{E}[(U_1^1)^2])$ independent of $\Lambda_1^1$ and $\mathbb{E}[(\Lambda_1^1)^{2k}(D_1^1)^2]=\bar{m}_{2k}^1\mathbb{E}[(U_1^1)^2]$, so $1^{(b),1}$ holds.
		
		$1^{(c),1}$: $\boldsymbol{z}_1^1=\boldsymbol{Q}_1^T\boldsymbol{\Lambda}_\ell^T\boldsymbol{d}_1^1$, and from $1^{(b),1}$ we have $\frac{1}{n_\ell}||\boldsymbol{\Lambda}_\ell^T\boldsymbol{d}_1^1||^2\to\delta_\ell\bar{\kappa}_2^1\tilde{\Delta}_1^1$. According to Lemma \ref{useful lemma1} and Lemma \ref{useful lemma3}  with $\boldsymbol{v}_1^1=\tilde{f}_1^1(\boldsymbol{z}_1^1,E_1)$ and Assumption 2(d), $(\boldsymbol{v}_1^1, \boldsymbol{z}_1^1,\boldsymbol{E}_1)\overset{W_2}{\to}(V_1^1,Z_1^1,E_1)$ where $Z_1^1\sim\mathcal{N}(0,\delta_1\bar{\kappa}_2^1\tilde{\Delta}_1^1)$ independent of $E_1$.
		
		$1^{(d),1}$: $\boldsymbol{\hat{\Gamma}}_1^1=\langle(\boldsymbol{v}_1^1)^2\rangle$, and $\boldsymbol{\tilde{\Gamma}}_1^1=\mathbb{E}[(V_1^1)^2]$, $\boldsymbol{\hat{\Psi}}_1^1=\langle\partial_1\boldsymbol{v}_1^1(\boldsymbol{z}_1,\boldsymbol{E}_1)\rangle$, and $\boldsymbol{\tilde{\Psi}}_1^1=\mathbb{E}[\partial_1\boldsymbol{v}_1(\boldsymbol{z}_1,\boldsymbol{E}_1)]$. According to Lemma \ref{useful lemma1}, $\boldsymbol{\hat{\Gamma}}_1^1\to\boldsymbol{\tilde{\Gamma}}_1^1$ and $\boldsymbol{\hat{\Psi}}_1^1\to\boldsymbol{\tilde{\Psi}}_1^1$. The other variables are $0$ for $t=1$, so $1^{(d),1}$ naturally holds.
		
		$1^{(e),1},1^{(f),1}$: Note that the law of $\boldsymbol{Q}_1$ is conditioned on $\boldsymbol{\Lambda}_1\boldsymbol{d}_1^1=\boldsymbol{Q}_1\boldsymbol{z}_1^1$, and the law of $\boldsymbol{O}_1$ is conditioned on $\boldsymbol{r}_1^1=\boldsymbol{O}_1\boldsymbol{u}_1^1$. Thus, the claim follows from Lemma B.4 of \cite{fan2022approximate}.
		
		Next we will assume that Lemma 1(a)-(f) hold for $t=1$ and layers up to (including) $\ell-1$.
		
		$1^{(a),\ell}$, $1^{(b),\ell}$: Same argument as for $1^{(a),1}$ and $1^{(b),1}$.
		
		$1^{(c),\ell}$: As in $1^{(c),1}$, we have $( \boldsymbol{z}_1^\ell,\boldsymbol{E}_\ell)\overset{W_2}{\to}(Z_1^\ell,E_\ell)$ where $Z_1^\ell\sim\mathcal{N}(0,\delta_1\bar{\kappa}_2^1\boldsymbol{\tilde{\Delta}}_1^\ell)$ is independent of $E_\ell$. Algorithm 3 implies that $\boldsymbol{v}_1^\ell=\tilde{f}_1^\ell(\boldsymbol{z}_1^\ell,\boldsymbol{m}_1^{\ell-1},\boldsymbol{E}_\ell)$, and $1^{(f),\ell-1}$ implies that $\boldsymbol{m}_1^{\ell-1}\overset{W_2}{\to}M_1^{\ell-1}$ independent of $Z_1^\ell$. According to Lemma \ref{useful lemma1}, $(\boldsymbol{v}_1^\ell, \boldsymbol{z}_1^\ell,\boldsymbol{E}_\ell)\overset{W_2}{\to}(V_1^\ell, Z_1^\ell,E_\ell)$.
		
		$1^{(d),\ell},1^{(e),\ell}$: Same argument as $1^{(d),1}$, $1^{(e),1}$.
		
		$1^{(f),\ell}$: Conditioned on $\boldsymbol{r}_1^\ell=\boldsymbol{O}_\ell\boldsymbol{u}_1^\ell$, the distribution of $\boldsymbol{q}_1^\ell$ can be expressed by $\boldsymbol{q}_1^\ell=\boldsymbol{O}_\ell^T\boldsymbol{\Lambda}_\ell\boldsymbol{p}_1^\ell=\boldsymbol{q}_\parallel^\ell+\boldsymbol{q}_\perp^\ell$, where
		\begin{equation}
			\boldsymbol{q}_\parallel^\ell=\boldsymbol{u}_1^\ell((\boldsymbol{u}_1^\ell)^T\boldsymbol{u}_1^\ell)^{-1}(\boldsymbol{r}_1^\ell)^T\boldsymbol{\Lambda}\boldsymbol{p}_1^\ell\overset{W_2}{\to}\tilde{\alpha}_{11}^\ell U_1^\ell
		\end{equation}
		and $\boldsymbol{q}_\perp^\ell\overset{W_2}{\to}\mathcal{N}(0,\boldsymbol{\tilde{\Sigma}}_1^\ell)$ is independent of $Z_1^\ell$ and $V_1^\ell$. As $\boldsymbol{m}_1^\ell=\boldsymbol{q}_1^\ell-\hat{\alpha}_{11}^\ell\boldsymbol{u}_1^\ell$, we have
		\begin{equation}
			(\boldsymbol{u}_1^\ell,\boldsymbol{m}_1^\ell,\boldsymbol{F}_\ell)\overset{W_2}{\to}(U_1^\ell,M_1^\ell,F_\ell)
		\end{equation}
		independent of $
		\{Z_1^\ell,V_1^\ell\}_{\ell=1}^L$.
		
		Next, we will assume that Lemma 1(a)-(f) hold for $t=1$ and all layers.
		
		$1^{(g),L}$: As $\boldsymbol{u}_2^L=\tilde{h}_2^L(\boldsymbol{m}_1^L,\boldsymbol{F}_L)$, according to $1^{(f),L}$ and Lemma \ref{useful lemma1}, we have
		\begin{equation}
			(\boldsymbol{u}_1^L,\boldsymbol{u}_2^L,\boldsymbol{m}_1^L,\boldsymbol{F}_L)\overset{W_2}{\to}(U_1^L,U_2^L,M_1^L,F_L)
		\end{equation}
		independent of $
		\{Z_1^\ell,V_1^\ell\}_{\ell=1}^L$.
		
		$1^{(h),L}$: This follows from Lemma B.4 of \cite{fan2022approximate}.
		
		$2^{(a),L}$: This follows from Lemma B.4 of \cite{fan2022approximate}.
		
		$2^{(b),L}$: We denote $\boldsymbol{D}_t^\ell=[\boldsymbol{d}_1^\ell,\ldots,\boldsymbol{d}_t^\ell]$, $\boldsymbol{P}_t^\ell=[\boldsymbol{p}_1^\ell,\ldots,\boldsymbol{p}_t^\ell]$, $\boldsymbol{U}_t^\ell=[\boldsymbol{u}_1^\ell,\ldots,\boldsymbol{u}_t^\ell]$, $\boldsymbol{M}_t^\ell=[\boldsymbol{m}_1^\ell,\ldots,\boldsymbol{m}_t^\ell]$ and use $t=1$, $\ell=L$ herein. Then the distribution of $\boldsymbol{O}_\ell$ is conditioned on 
		\begin{equation}
			(\boldsymbol{D}_t^\ell,\boldsymbol{\Lambda}_\ell\boldsymbol{P}_t^\ell)\left(\begin{matrix}
				\boldsymbol{I}&-\hat{\boldsymbol{M}}_{t+1}^{\alpha,\ell}\\
				0&\boldsymbol{I}
			\end{matrix}\right)=\boldsymbol{O}_\ell(\boldsymbol{U}_t^\ell,\boldsymbol{M}_t^\ell).
		\end{equation}
		This gives that $\boldsymbol{d}_{t+1}^\ell=\boldsymbol{d}^\ell_\parallel+\boldsymbol{d}^\ell_\perp$, where the empirical distribution of $\boldsymbol{d}_\perp^\ell$ approaches a Gaussian random variable independent from previous steps (i.e., $\boldsymbol{D}_t^\ell$, $\boldsymbol{P}_t^\ell$, $\boldsymbol{U}_t^\ell$ and $\boldsymbol{M}_t^\ell$)
		\begin{equation}
			\begin{split}
				&\boldsymbol{d}_\perp^\ell\overset{W_2}{\to}\mathcal{N}\bigg(0,\\
				&\mathbb{E}[U_{t+1}]^2-
				\left(\begin{matrix}\tilde{\boldsymbol{\delta}}_t^\ell\\\tilde{\boldsymbol{\Sigma}}_t^\ell\tilde{\boldsymbol{\phi}}_t^\ell\end{matrix}\right)^T
				\left(\begin{matrix}
					\tilde{\boldsymbol{\Delta}}_t^\ell&\tilde{\boldsymbol{\Phi}}_t^\ell\tilde{\boldsymbol{\Sigma}}_t^\ell\\
					\tilde{\boldsymbol{\Sigma}}_t^\ell(\tilde{\boldsymbol{\Phi}}_t^\ell)^T&\tilde{\boldsymbol{\Sigma}}^\ell_t
				\end{matrix}\right)^{-1}
				\left(\begin{matrix}\tilde{\boldsymbol{\delta}}_t^\ell\\\tilde{\boldsymbol{\Sigma}}_t^\ell\tilde{\boldsymbol{\phi}}_t^\ell\end{matrix}\right)\bigg),\label{d_perp}
			\end{split}
		\end{equation}
		where $\tilde{\boldsymbol{\delta}}_t^\ell$ and $\tilde{\boldsymbol{\phi}}_t^\ell$ are the last columns of $\tilde{\boldsymbol{\Delta}}_{t+1}^\ell$ and $\tilde{\boldsymbol{\Phi}}_{t+1}^\ell$ with last entries removed, and 
		\begin{equation}
			\boldsymbol{d}^\ell_\parallel=(\boldsymbol{D}_t^\ell,\boldsymbol{\Lambda}_\ell\boldsymbol{P}_t^\ell)\left(\begin{matrix}
				\boldsymbol{I}&-\hat{\boldsymbol{M}}_{t+1}^{\alpha,\ell}\\
				0&\boldsymbol{I}
			\end{matrix}\right)\boldsymbol{B}_t^\ell n_{\ell+1}^{-1}\left(\begin{matrix}
				(\boldsymbol{U}_t^\ell)^T\\
				(\boldsymbol{U}_M^\ell)^T
			\end{matrix}\right)\boldsymbol{u}_{t+1}^\ell,
		\end{equation}
		where
		\begin{equation}
			\boldsymbol{B}_t^\ell=\frac{1}{n_{\ell+1}}\left(\begin{matrix}
				(\boldsymbol{U}_t^\ell)^T\boldsymbol{U}_t^\ell&(\boldsymbol{U}_t^\ell)^T\boldsymbol{M}_t^\ell\\
				(\boldsymbol{M}_t^\ell)^T\boldsymbol{U}_t^\ell&(\boldsymbol{M}_t^\ell)^T\boldsymbol{M}_t^\ell
			\end{matrix}\right).
		\end{equation}
		
		We then step back to $2^{(a),L}$. According to $1^{(f),L}$, $\boldsymbol{M}_1^L$ is composed of jointly Gaussian distributed columns, and $\boldsymbol{U}_1^L$ is a function of $\boldsymbol{M}_1^L$ and $\boldsymbol{E}_L$ (row-i.i.d.\ variables independent of $\boldsymbol{M}_t^L$). Hence, according to Stein's lemma,
		\begin{equation}
			\boldsymbol{B}_1^L\to\left(\begin{matrix}
				\tilde{\boldsymbol{\Delta}}_1^L&\tilde{\boldsymbol{\Phi}}_1^L\tilde{\boldsymbol{\Sigma}}_1^L\\
				\tilde{\boldsymbol{\Sigma}}_1^L(\tilde{\boldsymbol{\Phi}}_1^L)^T&\tilde{\boldsymbol{\Sigma}}_1^L
			\end{matrix}\right)
		\end{equation}
		and
		\begin{equation}
			\frac{1}{n_{L+1}}(\boldsymbol{U}_t^L)^T\boldsymbol{u}_{t+1}^L\to\tilde{\boldsymbol{\delta}}_t^L,\quad \frac{1}{n_{L+1}}(\boldsymbol{M}_t^L)^T\boldsymbol{u}_{t+1}^L\to\tilde{\boldsymbol{\phi}}_t^L.
		\end{equation}
		Thus, through arguments similar to \cite{fan2022approximate}, we can obtain 
		\begin{equation}
			\boldsymbol{d}^L_\parallel\overset{W_2}{\to}(D_1^L,\bar{P}_1^L)(\boldsymbol{\gamma}_1^L)^{-1}\left(\begin{matrix}
				\tilde{\boldsymbol{\delta}}_1^L\\
				\tilde{\boldsymbol{\phi}}_1^L
			\end{matrix}\right).
		\end{equation}
		As a result,  $\mathbb{E}[(D_1^L,D_2^L)^T(\Lambda_1^L)^{2k}(D_1^L,D_2^L)]=\tilde{\boldsymbol{H}}_2^{(2k),L}$ and $\mathbb{E}[\bar{P}_1^L(\Lambda_1^L)^{2k}(D_1^L,D_2^L)]=\tilde{\boldsymbol{I}}_2^{\prime(2k),L}$. 
		
		$2^{(c),L}$: We also generally consider step $t+1$ and layer $\ell$. The distribution of $\boldsymbol{Q}_\ell$ is conditioned on
		\begin{equation}
			(\boldsymbol{P}_t^\ell,\boldsymbol{\Lambda}_\ell^T\boldsymbol{D}_t^\ell)\left(\begin{matrix}
				\boldsymbol{I}&-\hat{\boldsymbol{M}}_{t+1}^{\beta,\ell}\\
				0&\boldsymbol{I}
			\end{matrix}\right)=\boldsymbol{Q}_\ell(\boldsymbol{V}_t^\ell,\boldsymbol{Z}_t^\ell),
		\end{equation}
		where $\boldsymbol{V}_t^\ell=[\boldsymbol{v}_1^\ell,\ldots,\boldsymbol{v}_t^\ell]$ and $\boldsymbol{Z}_t^\ell=[\boldsymbol{z}_1^\ell,\ldots,\boldsymbol{z}_t^\ell]$. Thus, $\boldsymbol{e}_{t+1}^\ell=\boldsymbol{e}_\parallel^\ell+\boldsymbol{e}_\perp^\ell$, where $\boldsymbol{e}_\perp^\ell$ approaches a Gaussian variable independent from previous steps:
		\begin{equation}
			\begin{aligned}
				\boldsymbol{e}_\perp^\ell\overset{W_2}{\to}&\mathcal{N}\left(0,\delta_\ell [\tilde{\boldsymbol{H}}]_{t+1,t+1}^{(2),\ell}-\right.\\
				&\left.\delta_\ell^2
				\left(\begin{matrix}
					\tilde{\boldsymbol{i}}_t^{(1),\ell}\\\tilde{\boldsymbol{h}}_t^{(2),\ell}
				\end{matrix}\right)^T
				\left(\begin{matrix}
					\tilde{\boldsymbol{L}}_t^{(0),\ell}&(\tilde{\boldsymbol{J}}_t^{(1)})^t\\
					\tilde{\boldsymbol{J}}_t^{(1),\ell}&\delta_\ell\boldsymbol{H}_t^{(2),\ell}
				\end{matrix}\right)^{-1}
				\left(\begin{matrix}
					\tilde{\boldsymbol{i}}_t^{(1),\ell}\\\tilde{\boldsymbol{h}}_t^{(2),\ell}
				\end{matrix}\right)
				\right),
			\end{aligned}\label{e_perp}
		\end{equation}
		where $\tilde{\boldsymbol{i}}_t^{(1),\ell}$ and $\tilde{\boldsymbol{h}}_t^{(2),\ell}$ denote the last rows of $\tilde{\boldsymbol{I}}_t^{(1),\ell}$ and $\tilde{\boldsymbol{H}}_t^{(2),\ell}$ with the last entries removed, and
		\begin{equation}
			\begin{split}
				\boldsymbol{e}_\parallel^\ell=(\boldsymbol{V}_t^\ell,\boldsymbol{Z}_t^\ell)(\boldsymbol{C}_t^\ell)^{-1}&\left(\begin{matrix}
					\boldsymbol{I}&0\\
					-\boldsymbol{\hat{M}}_{t+1}^{\beta,\ell}&\boldsymbol{I}
				\end{matrix}\right)\\
				&n_\ell^{-1}\left(\begin{matrix}
					(\boldsymbol{P}_t^\ell)^T\\
					(\boldsymbol{D}_t^\ell)^T\boldsymbol{\Lambda}_\ell
				\end{matrix}\right)\boldsymbol{\Lambda}_\ell\boldsymbol{d}_{t+1}^\ell,
			\end{split}
		\end{equation}
		where
		\begin{equation}
			\boldsymbol{C}_t^\ell=\frac{1}{n_\ell}\left(\begin{matrix}
				(\boldsymbol{V}_t^\ell)^T\boldsymbol{V}_t^\ell&(\boldsymbol{V}_t^\ell)^T\boldsymbol{Z}_t^\ell\\
				(\boldsymbol{Z}_t^\ell)^T\boldsymbol{V}_t^\ell&(\boldsymbol{Z}_t^\ell)^T\boldsymbol{Z}_t^\ell
			\end{matrix}\right).
		\end{equation}
		
		Let us step back to $2^{(c),L}$. According to $t^{(c),L}$, $\boldsymbol{Z}_1^L$ is composed of row-i.i.d.\ jointly Gaussian variables, and $\boldsymbol{V}_1^L$ is a function of $\boldsymbol{Z}_1^L$ and $\boldsymbol{F}_L$ ( independent of $\boldsymbol{Z}_1^L$). Hence, according to Stein's lemma,
		\begin{equation}
			\boldsymbol{C}_1^L\to\left(\begin{matrix}
				\tilde{\boldsymbol{\Gamma}}_1^L&\tilde{\boldsymbol{\Psi}}_1^L\tilde{\boldsymbol{\Omega}}_1^L\\
				\tilde{\boldsymbol{\Omega}}_1^L(\tilde{\boldsymbol{\Psi}}_1^L)^T&\tilde{\boldsymbol{\Omega}}_1^L
			\end{matrix}\right).
		\end{equation}
				Through arguments similar to \cite{fan2022approximate}, we can obtain
		\begin{equation}
			\boldsymbol{e}_\parallel^L\overset{W_2}{\to}V_1^L\tilde{\boldsymbol{\beta}}^L_1+Z_1^L(\tilde{\boldsymbol{\Omega}}^L_1)^{-1}\tilde{\boldsymbol{\omega}}_1^L,
		\end{equation}
		where $\tilde{\boldsymbol{\omega}}_1^L$ is the last column of $\tilde{\boldsymbol{\Omega}}_1^L$ with the last entry removed.
		
		As $\boldsymbol{z}_2^L=\boldsymbol{e}_2^L-\boldsymbol{V}_1^L\boldsymbol{\beta}_1^L$, the distribution of $\boldsymbol{z}_2^L$ can be obtained via
		\begin{equation}
			\boldsymbol{z}_2^L=\boldsymbol{z}_1^\ell(\boldsymbol{\tilde{\Omega}}^L_1)^{-1}\boldsymbol{\tilde{\omega}}_1^L+\boldsymbol{e}_\perp^L
		\end{equation}
		such that the empirical distribution of $(\boldsymbol{z}_1^L,\boldsymbol{z}_2^L)$ is asymptotically jointly Gaussian and independent from $\{M_1^\ell\}_{\ell=1}^L$ with covariance $\mathbb{E}[(\boldsymbol{z}_1^\ell)^T\boldsymbol{z}_2^\ell]=\tilde{\boldsymbol{\omega}}_1$ and $\mathbb{E}[(\boldsymbol{z}_2^\ell)^2]=[\tilde{\boldsymbol{\Omega}}]_{2,2}^L$.
		
		We then iterate the following points for $\ell=L-1,\ldots,2,1$.
		
		$1^{(g),\ell}$: As $\boldsymbol{u}_2^\ell=\tilde{h}_2^\ell(\boldsymbol{m}_1^\ell,\boldsymbol{z}_1^{\ell+1},\boldsymbol{z}_2^{\ell+1},\boldsymbol{F}_\ell)$, according to $1^{(f),\ell}$, $2^{(c),\ell+1}$ and Lemma \ref{useful lemma1}, we have
		\begin{equation}
			(\boldsymbol{u}_1^\ell,\boldsymbol{u}_2^\ell,\boldsymbol{m}_1^\ell,\boldsymbol{F}_\ell)\overset{W_2}{\to}(U_1^\ell,U_2^\ell,M_1^\ell,F_\ell)
		\end{equation}
		independent of $
		\{Z_1^\ell,V_1^\ell\}_{\ell=1}^L$.
		
		$2^{(a),\ell}$: This follows from Lemma B.4 of \cite{fan2022approximate}.
		
		$2^{(b),\ell}$: This follows from $2^{(b),L}$. According to $1^{(f),\ell}$ and $2^{(c),
			\ell+1}$, $\boldsymbol{M}_1^\ell$ is composed of jointly Gaussian distributed columns, and $\boldsymbol{U}_1^\ell$ is a function of $\boldsymbol{M}_1^\ell$, $\boldsymbol{E}_\ell$ and $\boldsymbol{z}_1^{\ell+1}$, $\boldsymbol{z}_2^{\ell+1}$, which are i.i.d.\ variables independent of $\boldsymbol{M}_1^\ell$. Hence, according to Stein's lemma, we will obtain similar results.
		
		$2^{(c),\ell}$: This follows from $2^{(c),L}$. According to $1^{(c),\ell}$, $\boldsymbol{Z}_1^\ell$ is composed of row-i.i.d.\ jointly Gaussian variables, and $\boldsymbol{V}_1^\ell$ is a function of $\boldsymbol{Z}_1^\ell$, $\boldsymbol{F}_\ell$ and $\boldsymbol{m}_1^{\ell-1}$ (independent of $\boldsymbol{Z}_1^\ell$ from $1^{(f),\ell-1}$). Hence, according to Stein's lemma, we will obtain similar results.
		
		Next, we will assume that Lemma 1 holds for iterations up to (including) $t$, and Lemma 1(a)-(f) hold for $t+1$ and layers up to (not including) $\ell$.
		
		$(t+1)^{(a),\ell} (t>2)$: This follows from Lemma B.4 of \cite{fan2022approximate}.
		
		$(t+1)^{(b),\ell} (t>2)$: This follows from $2^{(b),\ell}$, by which we have $\boldsymbol{d}_{t+1}^\ell=\boldsymbol{d}^\ell_\parallel+\boldsymbol{d}^\ell_\perp$ with $\boldsymbol{d}_\perp^\ell$ in \eqref{d_perp}
		and
		\begin{equation}
			\boldsymbol{d}^\ell_\parallel\overset{W_2}{\to}(D_1^\ell,\ldots,D_t^\ell,\bar{P}_1^\ell,\ldots,\bar{P}_t^\ell)(\boldsymbol{\gamma}_t^\ell)^{-1}\left(\begin{matrix}
				\tilde{\boldsymbol{\delta}}_t^\ell\\
				\tilde{\boldsymbol{\phi}}_t^\ell
			\end{matrix}\right),
		\end{equation}
		where $\mathbb{E}[(D_1^\ell,\ldots,D_t^\ell)^T(\Lambda_1^\ell)^{2k}(D_1^\ell,\ldots,D_t^\ell)]=\tilde{\boldsymbol{H}}_t^{(2k),\ell}$, $\mathbb{E}[(\bar{P}_1^\ell,\ldots,\bar{P}_{t-1}^\ell)^T(\Lambda_1^\ell)^{2k}(D_1^\ell,\ldots,D_t^\ell)]=\tilde{\boldsymbol{I}}_{t-1}^{\prime(2k),\ell}$.
		
		$(t+1)^{(c),\ell} (t>2)$: This follows from $2^{(c),\ell}$, by which we have $\boldsymbol{e}_{t+1}^\ell=\boldsymbol{e}_\parallel^\ell+\boldsymbol{e}_\perp^\ell$, with $\boldsymbol{e}_\perp^\ell$ in \eqref{e_perp} and
		\begin{equation}
			\boldsymbol{e}_\parallel^\ell\overset{W_2}{\to}(V_1^\ell,\ldots,V_t^\ell)\tilde{\boldsymbol{\beta}}^\ell_t+(Z_1^\ell,\ldots,Z_t^\ell)(\tilde{\boldsymbol{\Omega}}^\ell_t)^{-1}\tilde{\boldsymbol{\omega}}_t^\ell.
		\end{equation}
		
		As $\boldsymbol{z}_{t+1}^\ell=\boldsymbol{e}_{t+1}^\ell-\boldsymbol{V}_t^\ell\boldsymbol{\beta}_t^\ell$, the distribution of $\boldsymbol{z}_{t+1}^\ell$ can be obtained via
		\begin{equation}
			\boldsymbol{z}_{t+1}^\ell=(\boldsymbol{z}_1^\ell,\ldots,\boldsymbol{z}_t^\ell)(\tilde{\boldsymbol{\Omega}}^\ell_t)^{-1}\tilde{\boldsymbol{\omega}}_t^\ell+\boldsymbol{e}_\perp^\ell
		\end{equation}
		such that the empirical distribution of $(\boldsymbol{z}_1^\ell,\ldots,\boldsymbol{z}_{t+1}^\ell)$ is asymptotically jointly Gaussian and independent from $\{M_1^\ell,\ldots,M_t^\ell\}_{\ell=1}^L$ with covariance $\mathbb{E}[(\boldsymbol{z}_1^\ell,\ldots,\boldsymbol{z}_t^\ell)^T\boldsymbol{z}_{t+1}^\ell]=\tilde{\boldsymbol{\omega}}_t$ and $\mathbb{E}[(\boldsymbol{z}_{t+1}^\ell)^2]=[\tilde{\boldsymbol{\Omega}}]_{t+1,t+1}^\ell$.
		
		$(t+1)^{(d),\ell}$: This follows from Lemma B.4 of \cite{fan2022approximate}.
		
		$(t+1)^{(e),\ell}$: Note that the law of  $\boldsymbol{Q}_\ell$ is conditioned on
		\begin{equation}
			(\boldsymbol{P}_t^\ell,\boldsymbol{\Lambda}_\ell^T\boldsymbol{D}_{t+1}^\ell)\left(\begin{matrix}
				\boldsymbol{I}&-\hat{\boldsymbol{M}}_{t+1}^{\prime\beta,\ell}\\
				0&\boldsymbol{I}
			\end{matrix}\right)=\boldsymbol{Q}_\ell(\boldsymbol{V}_t^\ell,\boldsymbol{Z}_{t+1}^\ell),
		\end{equation}
		where $\hat{\boldsymbol{M}}_{t+1}^{\prime\beta,\ell}=\left(\begin{matrix}\hat{\boldsymbol{M}}_{t+1}^{\beta,\ell},\hat{\boldsymbol{\beta}}\end{matrix}\right)$. Thus, we can prove $(t+1)^{(e),\ell}$ similarly as before and as in \cite{fan2022approximate}.
		
		$(t+1)^{(f),\ell}$: Note that the law of $\boldsymbol{O}_\ell$ is conditioned on
		\begin{equation}
			(\boldsymbol{R}_{t+1}^\ell,\boldsymbol{\Lambda}_\ell\boldsymbol{P}_{t+1}^\ell)\left(\begin{matrix}
				\boldsymbol{I}&-\hat{\boldsymbol{M}}_{t+1}^{\prime\alpha,\ell}\\
				0&\boldsymbol{I}
			\end{matrix}\right)=\boldsymbol{O}_\ell(\boldsymbol{U}_{t+1}^\ell,\boldsymbol{M}_t^\ell),
		\end{equation}
		where $\hat{\boldsymbol{M}}_{t+1}^{\prime\alpha,\ell}=\left(\begin{matrix}\hat{\boldsymbol{M}}_{t+1}^{\alpha,\ell},\hat{\boldsymbol{\alpha}}\end{matrix}\right)$. Thus, we can obtain the asymptotic empirical distribution of $\boldsymbol{m}_{t+1}^\ell$ as
		\begin{equation}
			\boldsymbol{m}_{t+1}^\ell\overset{W_2}{\to}M_{t+1}^\ell=(M_1^\ell,\ldots,M_t^\ell)(\tilde{\boldsymbol{\Sigma}}_t^\ell)^{-1}\tilde{\boldsymbol{\sigma}}^\ell_t+M_\perp^\ell,
		\end{equation}
		where $M_\perp^\ell$ is a Gaussian distribution independent from previous steps:
		\begin{equation}
			\begin{aligned}
				M_\perp^\ell\sim&\mathcal{N}\left(0,\delta_\ell^{-1} [\tilde{\boldsymbol{L}}]_{t+1,t+1}^{(2)}\right.\\
				&-\left.\delta_\ell^{-2}
				\left(\begin{matrix}
					\tilde{\boldsymbol{j}}_t^{(1),\ell}\\\tilde{\boldsymbol{l}}_t^{(2),\ell}
				\end{matrix}\right)^T
				\left(\begin{matrix}
					\boldsymbol{\tilde{H}}_t^{(0),\ell}&(\tilde{\boldsymbol{I}}_t^{\prime(1),\ell})^t\\
					\tilde{\boldsymbol{I}}_t^{\prime(1),\ell}&\delta_\ell^{-1}\boldsymbol{L}_t^{(2),\ell}
				\end{matrix}\right)^{-1}
				\left(\begin{matrix}
					\tilde{\boldsymbol{j}}_t^{(1),\ell}\\\tilde{\boldsymbol{l}}_t^{(2),\ell}
				\end{matrix}\right)
				\right)
			\end{aligned}
		\end{equation}
		with $\tilde{\boldsymbol{j}}_t^{(1),\ell}$ and $\tilde{\boldsymbol{l}}_t^{(2),\ell}$ being the last rows of $\tilde{\boldsymbol{J}}_t^{(1),\ell}$ and $\tilde{\boldsymbol{L}}_t^{(2),\ell}$ except the last entries. According to $1^{(f),\ell}-t^{(f),\ell}$, we have
		\begin{equation}
			\begin{split}
				(\boldsymbol{u}_1^\ell,\ldots,\boldsymbol{u}_{t+1}^\ell,&\boldsymbol{m}_1^\ell,\ldots,\boldsymbol{m}_{t+1}^\ell,\boldsymbol{F}_\ell)\\
				&\overset{W_2}{\to}(U_1^\ell,\ldots,U_{t+1}^\ell,M_1^\ell,\ldots,M_{t+1}^\ell,F_\ell),
			\end{split}
		\end{equation}
		where the random variables on the RHS are independent of $\{Z_1^\ell,\ldots,Z_{t+1}^\ell\}_{\ell=1}^L$ and $\{V_1^\ell,\ldots,V_{t+1}^\ell\}_{\ell=1}^L$.
		
		Finally, we will assume that Lemma 1 holds for iterations up to (including) $t$ and Lemma 1(a)-(f) hold for $t+1$. We conclude the proof by iterating the following points for $\ell=1,2\ldots,L$.
		
		$(t+1)^{(g),\ell}$: Note that $$\boldsymbol{u}_{t+2}^\ell=\tilde{h}_{t+2}^\ell(\boldsymbol{m}_1^\ell,\ldots,\boldsymbol{m}_{t+1}^\ell,\boldsymbol{z}_1^{\ell+1},\ldots,\boldsymbol{z}_{t+1}^{\ell+1},\boldsymbol{F}_\ell).$$
		Thus, according to $1^{(f),\ell}-(t+1)^{(c),\ell}$, $1^{(c),\ell+1}-(t+1)^{(c),\ell+1}$, and Lemma \ref{useful lemma1}, we have
		\begin{equation}
			\begin{split}
				(\boldsymbol{u}_1^\ell,\ldots,\boldsymbol{u}_{t+2}^\ell,&\boldsymbol{m}_1^\ell,\ldots,\boldsymbol{m}_{t+1}^\ell,\boldsymbol{F}_\ell)\\
				&\overset{W_2}{\to}(U_1^\ell,\ldots,U_{t+2}^\ell,M_1^\ell,\ldots,M_{t+1}^\ell,F_\ell),
			\end{split}
		\end{equation}
		where the random variables on the RHS are independent of $\{Z_1^\ell,\ldots,Z_t^\ell\}_{\ell=1}^L$ and $\{V_1^\ell,\ldots,V_t^\ell\}_{\ell=1}^L$.
		
		$(t+1)^{(h),\ell}$: This follows from Lemma B.4 of \cite{fan2022approximate}.
		
	\end{proof}
	
	Note that Lemma 1(c) and Lemma 1(f) are equivalent to Theorem 2, so the proof of Theorem \ref{thm:gen} is complete.
	
	\subsection{Proof of Theorem \ref{thm:main}}\label{app:pfmain}
	To derive the ML-RI-GAMP algorithm from the general recursion, we first choose $E_1=X^1$, $E_\ell=\epsilon^\ell$, $F_\ell=\epsilon^{\ell+1}$. Second, we initialize the general recursion as
	\begin{equation}
		\begin{split}
			\boldsymbol{u}_1^\ell & =0,\quad \boldsymbol{z}_1^\ell=0,\quad \mbox{for }\ell\in [L],\\
			\boldsymbol{v}_1^1 & =\boldsymbol{x}^1,\quad \boldsymbol{v}_1^\ell=q^{\ell-1}(\boldsymbol{m}_1^{\ell-1},\boldsymbol{\epsilon}^\ell)=\boldsymbol{x}^\ell,\quad \mbox{for }\ell\in \{2, \ldots, L
			\},\\
			\boldsymbol{m}_1^\ell & =\boldsymbol{A}_\ell\boldsymbol{v}_1^\ell=\boldsymbol{g}^\ell,\quad \mbox{for }\ell\in [L],
		\end{split}
	\end{equation}
	that is
	\begin{equation}
		\tilde{f}_1^1(E_1)=x^1,
	\end{equation}
	\begin{equation}
		\tilde{f}_1^\ell(m_1^{\ell-1},E_\ell)=q^{\ell-1}(m_1^{\ell-1},\epsilon^\ell).
	\end{equation}
	Third, the non-linear functions are defined as
	\begin{equation}
		\begin{aligned}
			\tilde{f}_{t+1}^\ell&(v_1^\ell,z_1^\ell,\ldots,z_t^\ell,m_1^{\ell-1},\ldots,m_t^{\ell-1})\\&=f_t^\ell(z_2^\ell+\bar{\mu}_1^\ell v_1^\ell,\ldots,z_t^\ell+\bar{\mu}_{t-1}^\ell v_1^\ell,m_2^{\ell-1},\ldots,m_t^{\ell-1})\label{tildef}
		\end{aligned}
	\end{equation}
	for $t\geq1$ and
	\begin{equation}
		\tilde{h}_2^\ell(m_1^\ell,z_1^{\ell+1},z_2^{\ell+1},F^\ell)=h_1^\ell(z_2^{\ell+1}+\bar{\mu}_1^{\ell+1}v_1^{\ell+1}),
	\end{equation}
	\begin{equation}
		\tilde{h}_2^L(m_1^L,F^L)=h_1^L(q^L((m_1^L,\epsilon^{L+1})),
	\end{equation}
	\begin{equation}
		\begin{aligned}
			\tilde{h}_{t+1}^\ell&(m_1^\ell,\ldots,m_t^\ell,z_1^{\ell+1},\ldots,z_t^{\ell+1},F^\ell)\\
			&=h_t^\ell(m_2^\ell,\ldots,m_t^\ell,z_2^{\ell+1}+\bar{\mu}_1^{\ell+1}q^\ell(m_1^\ell,\epsilon^{\ell+1}),\ldots,\\
			&\hspace{10em}z_t^{\ell+1}+\bar{\mu}_{t-1}^{\ell+1}q^\ell(m_1^\ell,\epsilon^{\ell+1})),
		\end{aligned}
	\end{equation}
	\begin{equation}
		\tilde{h}_{t+1}^L(m_1^L,\ldots,m_t^L,F^L)=h_t^\ell(m_2^L,\ldots,m_t^L,q^L(m_1^L,\epsilon^{L+1})).
	\end{equation}
	for $t\geq2$.
	
	At this point, one can readily show that the state evolution of the general recursion and the state evolution in Algorithm \ref{algo:SE} are equivalent, as they are defined in a similar way.
	\begin{lemma}
		$\boldsymbol{\bar{\Psi}}_t^\ell=\boldsymbol{\tilde{\Psi}}_t^\ell$, $\boldsymbol{\bar{\Phi}}_t^\ell=\boldsymbol{\tilde{\Phi}}_t^\ell$ $\boldsymbol{\tilde{\Omega}}_t^\ell=\boldsymbol{\Omega}_t^{\prime\ell}$, $\boldsymbol{\tilde{\Sigma}}_t^\ell=\boldsymbol{\bar{\Sigma}}_t^\ell$.
	\end{lemma}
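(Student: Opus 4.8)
The plan is to prove the four identities by a single induction on the pair $(t,\ell)$ that runs in exactly the order in which Algorithms \ref{algo:gen} and \ref{algo:genSE} generate their quantities (the same order $1^{(a),1}\to 1^{(b),1}\to\cdots$ used in the proof of Lemma \ref{lemma:main}). The guiding observation is that the two state evolutions are assembled from \emph{the same algebraic primitives}: the covariance updates $\boldsymbol{\tilde\Sigma},\boldsymbol{\tilde\Omega}$ of Algorithm \ref{algo:genSE} and $\boldsymbol{\bar\Sigma},\boldsymbol{\Omega}^{\prime}$ of Algorithm \ref{algo:SE} are the same free-cumulant power series (cf.\ \eqref{eq:compSigma}, \eqref{eq:Theta_t1j_def}, \eqref{xi}, \eqref{theta}) in the matrices $\boldsymbol{\Psi},\boldsymbol{\Phi},\boldsymbol{\Gamma},\boldsymbol{\Delta}$, and the rectangular free cumulants $\bar\kappa_{2k}^\ell$ are those of the same limiting spectrum $\tilde\Lambda_\ell$ on both sides. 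Hence, once the $\boldsymbol{\Psi},\boldsymbol{\Phi},\boldsymbol{\Gamma},\boldsymbol{\Delta}$ matrices and the scalar base cases are matched, the identities $\boldsymbol{\tilde\Sigma}_t^\ell=\boldsymbol{\bar\Sigma}_t^\ell$ and $\boldsymbol{\tilde\Omega}_t^\ell=\boldsymbol{\Omega}_t^{\prime\ell}$ are immediate. So the whole statement reduces to exhibiting a dictionary between the random variables of the general recursion and those of Algorithm \ref{algo:SE}, and checking that this dictionary is preserved by one step of each recursion.

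The dictionary I would set up uses the reduction specified just above the lemma: the side information $E_1=X^1$, $E_\ell=\epsilon^\ell$, $F_\ell=\epsilon^{\ell+1}$, the initialization $\boldsymbol{u}_1^\ell=\boldsymbol{z}_1^\ell=0$, $\boldsymbol{v}_1^1=\boldsymbol{x}^1$, $\boldsymbol{v}_1^\ell=q^{\ell-1}(\boldsymbol{m}_1^{\ell-1},\boldsymbol{\epsilon}^\ell)$, $\boldsymbol{m}_1^\ell=\boldsymbol{A}_\ell\boldsymbol{v}_1^\ell$, and the transformed denoisers in \eqref{tildef} and below, which are nothing but $f_t^\ell$ (resp.\ $h_t^\ell$) precomposed with the affine shifts $z_k^\ell\mapsto z_k^\ell+\bar\mu_{k-1}^\ell v_1^\ell$ and $z_k^{\ell+1}\mapsto z_k^{\ell+1}+\bar\mu_{k-1}^{\ell+1}q^\ell(m_1^\ell,\epsilon^{\ell+1})$, together with the relabelings $m_k^{\ell-1}\mapsto r_{k-1}^{\ell-1}$, $m_k^\ell\mapsto r_{k-1}^\ell$. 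One then reads off $M_1^\ell=G^\ell$ and $M_{k+1}^\ell\equiv R_k^\ell$; $Z_1^\ell=0$ and $Z_{k+1}^\ell\equiv W_k^\ell$, the centered part of $X_k^\ell$; $V_1^\ell=X^\ell$ and $V_{k+1}^\ell\equiv\hat X_k^\ell$; $U_1^\ell=0$ and $U_{k+1}^\ell\equiv S_k^\ell$ ($k\ge 1$), with the last layer using the substitution $Y=q^{L+1}(G^L,\epsilon^{L+1})$ in place of $X^{L+1}$. The only nontrivial algebraic check here is that the shift $z_{k}^\ell+\bar\mu_{k-1}^\ell v_1^\ell$ reproduces exactly the affine decomposition $X_k^\ell=\bar\mu_k^\ell X^\ell+W_k^\ell$ of Lines 3 and 7 of Algorithm \ref{algo:SE} (one uses that $\bar\mu_t^\ell$ is a single fixed sequence, read off the last row of $\boldsymbol{\bar M}^{\beta,\ell}$ via \eqref{eq:barM}, which is built from $\boldsymbol{\Phi},\boldsymbol{\Psi}$ by the same formula on both sides), and that the chain rule $\partial_{m_1^\ell}\tilde h_{t+1}^\ell$ — which differentiates through the $q^\ell(m_1^\ell,\epsilon^{\ell+1})$ factors appearing inside the shifts — is precisely the recursive definition of $\partial_{g^\ell}h_t^\ell$ in \eqref{partial_gell}, including its base case at $t=1$.

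Granting the dictionary at level $(t,\ell)$, the matrix identities are then verified entrywise: $\boldsymbol{\bar\Psi}_t^\ell$ and $\boldsymbol{\tilde\Psi}_t^\ell$ both collect $\mathbb E[\partial_k\hat X_i^\ell]=\mathbb E[\partial_k V_{i+1}^\ell]$, with the zero leading row and column of $\boldsymbol{\bar\Psi}_t^\ell$ matched by the fact that $V_1^\ell$ and all $\tilde f_i^\ell$ do not depend on $z_1^\ell$; similarly $\boldsymbol{\bar\Phi}_t^\ell$ and $\boldsymbol{\tilde\Phi}_t^\ell$ collect $\mathbb E[\partial_\bullet S_i^\ell]=\mathbb E[\partial_\bullet U_{i+1}^\ell]$ under the identification $\partial_g\leftrightarrow\partial_{m_1}$, and $\boldsymbol{\bar\Gamma}_t^\ell,\boldsymbol{\bar\Delta}_t^\ell$ match $\boldsymbol{\tilde\Gamma}_t^\ell,\boldsymbol{\tilde\Delta}_t^\ell$ (the zero leading row/column of $\boldsymbol{\bar\Delta}_t^\ell,\boldsymbol{\bar\Phi}_t^\ell$ being exactly what $U_1^\ell=Z_1^\ell=0$ produces in $\boldsymbol{\tilde\Delta}_t^\ell,\boldsymbol{\tilde\Phi}_t^\ell$). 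The scalar base cases ($\boldsymbol{\bar\Sigma}_1^\ell=\bar\kappa_2^\ell\mathbb E[(X^\ell)^2]$, $\boldsymbol{\bar\Omega}_1^\ell=\delta_\ell\bar\kappa_2^\ell\mathbb E[(S_1^\ell)^2]+\delta_\ell\bar\kappa_4^\ell\mathbb E[(X^\ell)^2](\mathbb E[\partial_g S_1^\ell])^2$, and the analogous quantities for the initialization) are checked against the $t=1,2$ specializations of the general-recursion formulas directly. The claimed $\boldsymbol{\tilde\Sigma}_t^\ell=\boldsymbol{\bar\Sigma}_t^\ell$ and $\boldsymbol{\tilde\Omega}_t^\ell=\boldsymbol{\Omega}_t^{\prime\ell}$ then follow because $\boldsymbol{\Xi}^{(j)}$ and $\boldsymbol{\Theta}^{(j)}$ are the same polynomials in matrices we have just matched.

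I expect the main obstacle to be entirely the \emph{index-shift bookkeeping}. The general recursion absorbs the feature vector and side information into its first iterate, so ``iteration $t$'' of Algorithm \ref{algo:gen} corresponds to ``iteration $t-1$'' of Algorithm \ref{algo:MLRIGAMP}, and the matrices $\boldsymbol{\bar\Psi}_{t+1}^\ell,\boldsymbol{\bar\Phi}_{t+1}^\ell$ of Algorithm \ref{algo:SE} carry an extra zero leading row/column relative to $\boldsymbol{\tilde\Psi}_t^\ell,\boldsymbol{\tilde\Phi}_t^\ell$; one has to check that this padding is precisely what the $\boldsymbol{u}_1^\ell=\boldsymbol{z}_1^\ell=0$ initialization generates, and likewise that the dimensions of $\boldsymbol{\tilde\Sigma}_t^\ell,\boldsymbol{\tilde\Omega}_t^\ell$ line up with those of $\boldsymbol{\bar\Sigma}_t^\ell,\boldsymbol{\Omega}_t^{\prime\ell}$. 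On top of this, the first layer (no $R^{\ell-1}/m^{\ell-1}$ arguments), the last layer ($Y$ replacing $X^{L+1}$ and a one-shorter argument list for $h_t^L$ and $\tilde h_t^L$), and the case $t=1$ (where the $v_1^\ell$ argument of $\tilde f$ disappears and $\partial_{g^\ell}h_1^\ell$ has its own formula) each require a short, separate check. Organizing the induction along the exact sequence used in Lemma \ref{lemma:main} ensures that every matrix and variable referenced in a given step has already been matched in an earlier step.
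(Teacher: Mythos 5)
Your proposal is correct and follows essentially the same route as the paper: the paper's proof (following Lemma D.2 of \cite{venkataramanan2022estimation}) recursively establishes exactly your dictionary, namely $(Z_2^\ell+\bar{\mu}_1^\ell X^\ell,\ldots,Z_t^\ell+\bar{\mu}_{t-1}^\ell X^\ell)\overset{d}{=}(X_1^\ell,\ldots,X_{t-1}^\ell)$ and the identification of the $M$-variables with $(G^\ell,R_1^\ell,\ldots)$, and then uses the definition \eqref{tildef} (and its $\tilde h$ counterparts, whose $\partial_{m_1^\ell}$ derivative reproduces \eqref{partial_gell}) to match the expected partial derivatives, so that the $\Psi,\Phi,\Gamma,\Delta$ matrices coincide and the $\Sigma,\Omega$ recursions, being the same free-cumulant polynomials, agree. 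No gap.
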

	Lemma 2 follows the proof of Lemma D.2 in \cite{venkataramanan2022estimation}, in which we can recursively prove the fact that
	\begin{equation}
		\begin{aligned}
			&(Z_2^\ell+\bar{\mu}_1X^\ell,\ldots,Z_t^\ell+\bar{\mu}_{t-1}X^\ell)\overset{d}{=}(X_1^\ell,\ldots,X^\ell_{t-1}),\\
			&(M_1^{\ell-1},\ldots,M_t^{\ell-1})\overset{d}{=}(R_1^{\ell-1},\ldots,R_{t-1}^{\ell-1}),
		\end{aligned}
	\end{equation}
	and by the definition of the non-linear function in \eqref{tildef},
	\begin{equation}
		\begin{aligned}
			\mathbb{E}[\partial_k\tilde{f}_{t+1}^\ell&(V_1^\ell,Z_1^\ell,\ldots,Z_t^\ell,M_1^{\ell-1},\ldots,M_t^{\ell-1},E_\ell)]\\&=\mathbb{E}[\partial_kf_t^\ell(X_1^\ell,\ldots,X_{t-1}^\ell,R_1^{\ell-1},\ldots,R_{t-1}^{\ell-1})].
		\end{aligned}
	\end{equation} 
	 
	Finally, Theorem \ref{thm:main} holds directly according to the following Lemma combined with Lemma \ref{lemma:main}.
	\begin{lemma}
		Let $\psi$: $\mathbb{R}^{2t+1}\to\mathbb{R}$ and $\phi$: $\mathbb{R}^{2t+2}\to\mathbb{R}$ be any pseudo-Lipschitz functions of order 2. Then for each $t\geq1$ and $\ell=1,\ldots,L$, we almost surely have
		\begin{equation}
			\begin{aligned}
				&\lim_{n_\ell\to\infty}\frac{1}{n_\ell}\sum_{i=1}^{n_\ell}|\psi(x_{1,i}^\ell,\ldots,x_{t,i}^\ell,\hat{x}_{1,i}^\ell,\ldots,\hat{x}_{t,i}^\ell,x_i^\ell)\\&-\psi(z_{2,i}^\ell+\bar{\mu}_1^\ell x_i^\ell,\ldots,z_{t+1,i}^\ell+\bar{\mu}_t^\ell x_i^\ell,v_{2,i}^\ell,\ldots,v_{t+1,i}^\ell,x_i^\ell)|=0,
			\end{aligned}
		\end{equation}
		\begin{equation}
			\begin{aligned}
				&\lim_{n_{\ell+1}\to\infty}\frac{1}{n_{\ell+1}}\sum_{i=1}^{n_{\ell+1}}|\phi(r_{1,i}^\ell,\ldots,r_{t,i}^\ell,s_{1,i}^\ell,\ldots,s_{t+1,i}^\ell,x_i^{\ell+1})\\&-\phi(m_{2,i}^\ell,\ldots,m_{t+1,i}^\ell,u_{2,i}^\ell,\ldots,u_{t+2,i}^\ell,q^\ell(m_{1,i}^\ell,\epsilon_i^{\ell+1}))|=0.
			\end{aligned}
		\end{equation}
	\end{lemma}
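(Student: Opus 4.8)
The plan is to run the general recursion of Algorithm~\ref{algo:gen} with the side information, initialization and denoisers fixed just before Lemma~2, and to show that its iterates coincide, up to an $\ell_2$-negligible perturbation, with the ML-RI-GAMP iterates after the $\bar{\boldsymbol{\mu}}$-shift. Concretely, I would prove by induction on $(t,\ell)$ --- following \emph{verbatim} the execution order of Algorithm~\ref{algo:gen}, interleaving the layer loop with the backward initialization pass --- that the following hold almost surely: $\boldsymbol{u}_1^\ell=\boldsymbol{z}_1^\ell=0$, $\boldsymbol{v}_1^\ell=\boldsymbol{x}^\ell$, $\boldsymbol{m}_1^\ell=\boldsymbol{g}^\ell$, $\boldsymbol{u}_2^\ell=\boldsymbol{s}_1^\ell$, and, for every $t\ge1$ and $\ell\in[L]$,
\begin{equation*}
\tfrac1{n_\ell}\bigl\|\boldsymbol{z}_{t+1}^\ell+\bar\mu_t^\ell\boldsymbol{x}^\ell-\boldsymbol{x}_t^\ell\bigr\|^2
+\tfrac1{n_\ell}\bigl\|\boldsymbol{v}_{t+1}^\ell-\hat{\boldsymbol{x}}_t^\ell\bigr\|^2
+\tfrac1{n_{\ell+1}}\bigl\|\boldsymbol{m}_{t+1}^\ell-\boldsymbol{r}_t^\ell\bigr\|^2
+\tfrac1{n_{\ell+1}}\bigl\|\boldsymbol{u}_{t+2}^\ell-\boldsymbol{s}_{t+1}^\ell\bigr\|^2\ \longrightarrow\ 0 .
\end{equation*}
Granting this dictionary, the arguments fed to $\psi$ (resp.\ $\phi$) in the two averages of Lemma~3 agree entrywise up to a vector of vanishing normalized $\ell_2$-norm --- for $\phi$ one also uses $\boldsymbol{x}^{\ell+1}=q^\ell(\boldsymbol{g}^\ell,\boldsymbol{\epsilon}^{\ell+1})=q^\ell(\boldsymbol{m}_1^\ell,\boldsymbol{\epsilon}^{\ell+1})$ --- so by pseudo-Lipschitzness of order $2$, Cauchy--Schwarz, and the boundedness of the relevant empirical second moments (which follows from Theorem~\ref{thm:gen} applied to the general recursion), the two averages differ by $o(1)$. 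That is exactly Lemma~3.

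Each inductive step rests on two identifications. \emph{(Denoisers.)} Plugging the hypotheses $\boldsymbol{z}_{k+1}^\ell+\bar\mu_k^\ell\boldsymbol{v}_1^\ell\approx\boldsymbol{x}_k^\ell$, $\boldsymbol{m}_{k+1}^\ell=\boldsymbol{r}_k^\ell$ and the choices $E_1=X^1$, $E_\ell=\epsilon^\ell$, $F_\ell=\epsilon^{\ell+1}$, $\boldsymbol{v}_1^\ell=\boldsymbol{x}^\ell$, $\boldsymbol{m}_1^\ell=\boldsymbol{g}^\ell$ into \eqref{tildef} and the companion definitions of $\tilde h_{t+1}^\ell$ shows that $\tilde f_{t+1}^\ell$ evaluated on the general-recursion iterates equals $f_t^\ell(\boldsymbol{x}_1^\ell,\dots,\boldsymbol{x}_t^\ell,\boldsymbol{r}_1^{\ell-1},\dots,\boldsymbol{r}_t^{\ell-1})=\hat{\boldsymbol{x}}_t^\ell$ up to a Lipschitz image of the $o(1)$ error in its shifted arguments, and $\tilde h_{t+1}^\ell$ reproduces $\boldsymbol{s}_{t+1}^\ell$ the same way. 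The delicate point is the $\partial_{g^\ell}$-slot: since $\boldsymbol{m}_1^\ell=\boldsymbol{g}^\ell$ and $\boldsymbol{x}^{\ell+1}=q^\ell(\boldsymbol{m}_1^\ell,\boldsymbol{\epsilon}^{\ell+1})$, the chain rule turns $\partial_{m_1^\ell}\tilde h_{t+1}^\ell$ into $\sum_i\bar\mu_i^{\ell+1}\,\partial_{g^\ell}q^\ell\cdot\partial_{x_i^{\ell+1}}h_t^\ell$, which is precisely the recursive definition \eqref{partial_gell} of $\partial_{g^\ell}h_t^\ell$ (and the $t=1$ case matches the special form stated right after \eqref{partial_gell}). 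Hence $\boldsymbol{\hat\Psi}_{t+1}^\ell=\boldsymbol{\Psi}_{t+1}^\ell$ and $\boldsymbol{\hat\Phi}_{t+1}^\ell=\boldsymbol{\Phi}_{t+1}^\ell$ entrywise, the zero first column of $\boldsymbol{\Psi}_{t+1}^\ell$ reflecting that $\tilde f_{t+1}^\ell$ does not depend on $z_1^\ell$. \emph{(Onsager terms.)} Both algorithms build their Onsager matrices from the same finite-sample free cumulants $\{\kappa_{2k}^\ell\}$ and from these equal $\boldsymbol{\hat\Psi},\boldsymbol{\hat\Phi}$, so $\boldsymbol{\hat M}_{t+1}^{\alpha,\ell}=\boldsymbol{M}_{t+1}^{\alpha,\ell}$ and $\boldsymbol{\hat M}_{t+1}^{\beta,\ell}=\boldsymbol{M}_{t+1}^{\beta,\ell}$, whence $\hat\alpha_{t+1,i+1}^\ell=\alpha_{ti}^\ell$ and $\hat\beta_{t+1,i+1}^\ell=\beta_{ti}^\ell$ for $i\ge1$; substituting into $\boldsymbol{m}_{t+1}^\ell=\boldsymbol{A}_\ell\boldsymbol{v}_{t+1}^\ell-\sum_{i\le t+1}\hat\alpha_{t+1,i}^\ell\boldsymbol{u}_i^\ell$ and $\boldsymbol{z}_{t+1}^\ell=\boldsymbol{A}_\ell^{T}\boldsymbol{u}_{t+1}^\ell-\sum_{i\le t}\hat\beta_{t+1,i}^\ell\boldsymbol{v}_i^\ell$ and using $\boldsymbol{u}_1^\ell=0$ together with the dictionary, one recovers $\boldsymbol{r}_t^\ell$ exactly and $\boldsymbol{x}_t^\ell-\hat\beta_{t+1,1}^\ell\boldsymbol{x}^\ell$ for the $\boldsymbol{z}$-update. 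Now $\hat\beta_{t+1,1}^\ell=[\boldsymbol{M}_{t+1}^{\beta,\ell}]_{t+1,1}$ is the finite-sample analogue of $\bar\mu_t^\ell=[\boldsymbol{\bar M}_{t+1}^{\beta,\ell}]_{t+1,1}$ and converges to it, since $\boldsymbol{\Psi}_{t+1}^\ell\to\boldsymbol{\bar\Psi}_{t+1}^\ell$, $\boldsymbol{\Phi}_{t+1}^\ell\to\boldsymbol{\bar\Phi}_{t+1}^\ell$ (from Theorem~\ref{thm:gen} via the identification at steps $\le t$) and $\kappa_{2k}^\ell\to\bar\kappa_{2k}^\ell$; this single discrepancy is the $o(1)$ slack, which is why Lemma~3 is a vanishing-average statement rather than an exact identity. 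The base case additionally requires matching the forward-then-backward initialization of Algorithm~\ref{algo:gen} with Lines~1--8 of Algorithm~\ref{algo:MLRIGAMP}: one checks $\boldsymbol{u}_2^L=h_1^L(q^L(\boldsymbol{g}^L,\boldsymbol{\epsilon}^{L+1}))=h_1^L(\boldsymbol{y})=\boldsymbol{s}_1^L$, and for $\ell<L$ matches $\boldsymbol{u}_2^\ell=h_1^\ell(\boldsymbol{z}_2^{\ell+1}+\bar\mu_1^{\ell+1}\boldsymbol{v}_1^{\ell+1})$ with $\boldsymbol{s}_1^\ell$ using the $t=1$ form of $\partial_{g^\ell}h_1^\ell$ to identify $\bar\mu_1^{\ell+1}$ with the relevant $\hat\beta$ in the large-$n$ limit.

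The main obstacle I anticipate is bookkeeping rather than any single computation: carrying the one-step index shift (step $t+1$ of the general recursion versus step $t$ of ML-RI-GAMP, plus the extra leading slots $\boldsymbol{v}_1,\boldsymbol{z}_1,\boldsymbol{u}_1,\boldsymbol{m}_1$) consistently across all four iterate families and all $L$ layers at once, while respecting the precise interleaved order of Algorithm~\ref{algo:gen} so that every quantity invoked in a given step has already been identified; and, simultaneously, checking layer by layer that the $o(1)$ shift error is not amplified as it passes through the Lipschitz denoisers --- routine given the $W_2$-control from Theorem~\ref{thm:gen}, but to be threaded through the whole induction.
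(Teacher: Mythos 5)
Your overall skeleton is the paper's: reduce Lemma~3 to the four vanishing normalized $\ell_2$ differences $\|\boldsymbol{x}_t^\ell-\boldsymbol{z}_{t+1}^\ell-\bar\mu_t^\ell\boldsymbol{x}^\ell\|^2/n_\ell\to0$, $\|\hat{\boldsymbol{x}}_t^\ell-\boldsymbol{v}_{t+1}^\ell\|^2/n_\ell\to0$, $\|\boldsymbol{r}_t^\ell-\boldsymbol{m}_{t+1}^\ell\|^2/n_{\ell+1}\to0$, $\|\boldsymbol{s}_{t+1}^\ell-\boldsymbol{u}_{t+2}^\ell\|^2/n_{\ell+1}\to0$, then induct along the execution order of Algorithm~\ref{algo:gen} and conclude by pseudo-Lipschitzness; the identification of denoisers, initialization and the $\partial_{g^\ell}$-slot via \eqref{partial_gell} also matches. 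However, there is a genuine flaw in the central step: you assert that $\boldsymbol{\hat\Psi}_{t+1}^\ell=\boldsymbol{\Psi}_{t+1}^\ell$ and $\boldsymbol{\hat\Phi}_{t+1}^\ell=\boldsymbol{\Phi}_{t+1}^\ell$ \emph{entrywise at finite $n$}, hence $\hat\alpha_{t+1,i+1}^\ell=\alpha_{ti}^\ell$, $\hat\beta_{t+1,i+1}^\ell=\beta_{ti}^\ell$ exactly, that $\boldsymbol{r}_t^\ell$ is recovered \emph{exactly}, and that the only slack in the whole induction is the single discrepancy $\hat\beta_{t+1,1}^\ell$ versus $\bar\mu_t^\ell$. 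This cannot be right: the two recursions evaluate the same denoiser derivatives at \emph{different} iterates (ML-RI-GAMP's $\boldsymbol{x}_k^\ell,\boldsymbol{r}_k^{\ell-1}$ versus the general recursion's $\boldsymbol{z}_{k+1}^\ell+\bar\mu_k^\ell\boldsymbol{x}^\ell,\boldsymbol{m}_{k+1}^{\ell-1}$), and by your own dictionary these coincide only up to a vanishing $\ell_2$ error, which is nonzero already at $t=1$ (and even at the backward initialization, where $\boldsymbol{z}_2^L$ is built with the empirical $\kappa_2^L\langle\partial_g\boldsymbol{s}_1^L\rangle$ while $\boldsymbol{x}_1^L$ is matched through the deterministic $\bar\mu_1^L$). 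Consequently the empirical averages $\langle\partial_k\hat{\boldsymbol{x}}_t^\ell\rangle$ and $\langle\partial_k\boldsymbol{v}_{t+1}^\ell\rangle$ differ at every finite $n$, \emph{all} Onsager coefficients differ, and these differences feed back into the next iterates; your claim of exact equality erases precisely the error terms that must be controlled. (Setting the general recursion's coefficients equal to ML-RI-GAMP's by fiat is not an option either, since Theorem~\ref{thm:gen} requires the coefficients prescribed by Algorithm~\ref{algo:gen}.)

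The paper's proof keeps these terms explicitly: in the decomposition of $\boldsymbol{x}_t^\ell-\boldsymbol{z}_{t+1}^\ell-\bar\mu_t^\ell\boldsymbol{x}^\ell$ one finds, besides $\boldsymbol{A}_\ell^T(\boldsymbol{s}_t^\ell-\boldsymbol{u}_{t+1}^\ell)$ and $\sum_i\hat\beta_{t+1,i+1}^\ell(\boldsymbol{v}_{i+1}^\ell-\hat{\boldsymbol{x}}_i^\ell)$, the sums $\sum_i(\hat\beta_{t+1,i+1}^\ell-\beta_{t,i}^\ell)\hat{\boldsymbol{x}}_i^\ell$ and $(\hat\beta_{t+1,1}^\ell-\bar\mu_t^\ell)\boldsymbol{x}^\ell$ (and the analogous $\hat\alpha$ terms for $\boldsymbol{r}_t^\ell-\boldsymbol{m}_{t+1}^\ell$), and their vanishing is \emph{proved in the limit}: by the induction hypothesis and Lemma~\ref{useful lemma2} both empirical matrices $\boldsymbol{\Psi}_t^\ell,\boldsymbol{\hat\Psi}_t^\ell$ (resp.\ $\boldsymbol{\Phi}_{t+1}^\ell,\boldsymbol{\hat\Phi}_{t+1}^\ell$) converge to $\boldsymbol{\bar\Psi}_t^\ell$ and $\boldsymbol{\tilde\Psi}_t^\ell$, which coincide by Lemma~2, and $\kappa_{2k}^\ell\to\bar\kappa_{2k}^\ell$, so $\boldsymbol{M}_{t+1}^{\beta,\ell}$ and $\boldsymbol{\hat M}_{t+1}^{\beta,\ell}$ have the same limit. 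This is where the a.e.-continuity assumption on the partial derivatives and the state-evolution identification enter, and it is also where the compact support of $\Lambda_\ell$ is needed to ensure $\|\boldsymbol{A}_\ell^T(\boldsymbol{s}_t^\ell-\boldsymbol{u}_{t+1}^\ell)\|^2/n_\ell\to0$ — two ingredients your argument never invokes because the exact-equality claim makes them appear unnecessary. Replace ``equal'' by ``convergent to the same limit,'' add the coefficient-difference terms to the error decomposition at every step, and your proof becomes the paper's; as written, the key step fails.
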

	\begin{proof}
		Following the proof of Lemma D.3 in \cite{venkataramanan2022estimation}, we only need to prove that 
		\begin{equation}
			\lim_{n_\ell\to\infty} \frac{1}{n_\ell}||\boldsymbol{x}_t^\ell-\boldsymbol{z}_{t+1}^\ell-\bar{\mu}_t^\ell\boldsymbol{x}^\ell||^2=0,
			\label{lemma3,eq1}
		\end{equation}
		\begin{equation}
			\lim_{n_{\ell+1}\to\infty} \frac{1}{n_\ell}||\boldsymbol{\hat{x}}_t^\ell-\boldsymbol{v}_{t+1}^\ell||^2=0,
			\label{lemma3,eq2}
		\end{equation}
		\begin{equation}
			\lim_{n_{\ell+1}\to\infty} \frac{1}{n_{\ell+1}}||\boldsymbol{r}_t^\ell-\boldsymbol{m}_{t+1}^\ell||^2=0,
			\label{lemma3,eq3}
		\end{equation}
		and
		\begin{equation}
			\lim_{n_{\ell+1}\to\infty} \frac{1}{n_{\ell+1}}||\boldsymbol{s}_{t+1}^\ell-\boldsymbol{u}_{t+2}^\ell||^2=0,
			\label{lemma3,eq4}
		\end{equation}
		hold almost surely for $t\ge 1$ and $\ell\in [L]$. We note that the almost sure boundedness can be proved in the same way, so we skip this part. Let us prove the claim by induction. We denote \eqref{lemma3,eq1}-\eqref{lemma3,eq4} as (a)-(d), and similarly $t^{(a),\ell}$ means \eqref{lemma3,eq1} for step $t$ and layer $\ell$.
		
		Our induction sequence follows the iteration in Algorithm \ref{algo:gen} as $1^{(a),L}\to1^{(a),L-1}\to\ldots\to1^{(a),1}\to1^{(b),1}\to1^{(c),1}\to\ldots\to1^{(b),L}\to1^{(c),L}\to1^{(d),1}\to1^{(d),2}\to\ldots\to1^{(d),L}\to\ldots\to t^{(a),1}\to t^{(a),2}\to\ldots\to t^{(a),L}\to t^{(b),1}\to t^{(c),1}\to\ldots\to t^{(b),L}\to t^{(c),L}\to t^{(d),1}\to t^{(d),2}\to\ldots\to t^{(d),L}$.
		
		$1^{(a),L}$: Remember that we initialize the algorithm as $\boldsymbol{u}_2^L=h_1^L(\boldsymbol{y})=\boldsymbol{s}_1^L$. Thus, we have
		\begin{equation}
			\boldsymbol{z}_2^L=\boldsymbol{A}_L^T\boldsymbol{s}_1^L-\delta_L\kappa_2^L\langle\partial_g\boldsymbol{s}_1^L\rangle\boldsymbol{x}^L.
		\end{equation}
		Note that $\boldsymbol{x}_1^L=\boldsymbol{A}_L^T\boldsymbol{s}_1^L$ and that $\bar{\mu}_1^L=\delta_L\bar{\kappa}_2^L\mathbb{E}[\partial_gS_1^L]$. As ${\kappa}_2^L\to\bar{\kappa}_2^L$ and $\langle\partial_g\boldsymbol{s}_1^L\rangle\to\mathbb{E}[\partial_GS_1^L]$, we obtain \eqref{lemma3,eq1} for $t=1$ and $\ell=L$.
		
		Next we assume that \eqref{lemma3,eq1} holds for $t=1$ and layers $L,L-1,\ldots,\ell+1$.
		
		$1^{(a),\ell}$: For the $\ell-$th layer,
		\begin{equation}
			\boldsymbol{z}_2^\ell=\boldsymbol{A}_\ell^T h_1^\ell(\boldsymbol{z}_2^{\ell+1}+\bar{\mu}_1^{\ell+1}\boldsymbol{v}_1^{\ell+1})-\delta_\ell\kappa_2^\ell\langle\partial_g\boldsymbol{s}_1^\ell\rangle\boldsymbol{x}^\ell.
		\end{equation}
		From $1^{(a),\ell+1}$ and Lemma \ref{useful lemma2}, $\frac{1}{n_{\ell+1}}||\boldsymbol{u}_2^\ell-\boldsymbol{s}_1^\ell||^2=\frac{1}{n_{\ell+1}}||h_1^\ell(\boldsymbol{z}_2^{\ell+1}+\bar{\mu}_1^{\ell+1}\boldsymbol{v}_1^{\ell+1})-h_1^\ell(\boldsymbol{x}_1^{\ell+1})||^2\to0$. We note that as $\Lambda_\ell$ has compact support indicated in Assumption 1(a), $\frac{1}{n_\ell}||\boldsymbol{A}_\ell^T(\boldsymbol{u}_2^\ell-\boldsymbol{s}_1^\ell)||$ goes to zero once $\frac{1}{n_{\ell+1}}||\boldsymbol{u}_2^\ell-\boldsymbol{x}_1^\ell||$ goes to zero. As $\bar{\mu}_1^\ell=\delta_\ell\bar{\kappa}_2^\ell\mathbb{E}[\partial_gS_1^\ell]$, ${\kappa}_2^\ell\to\bar{\kappa}_2^\ell$ and $\langle\partial_g\boldsymbol{s}_1^\ell\rangle\to\mathbb{E}[\partial_GS_1^\ell]$, we obtain \eqref{lemma3,eq1} for $t=1$ and all layers.
		
		$1^{(b),1}$: Since $\hat{\boldsymbol{x}}_1^1=f_1^1(\boldsymbol{x}_1^1)$, $\boldsymbol{v}_2^1=f_1^1(\boldsymbol{z}_2^1+\bar{\mu}_1^1\boldsymbol{x}^1)$ and $f_1^1$ is Lipschitz, we obtain \eqref{lemma3,eq2} for $t=1$ and $\ell=1$ due to Lemma \ref{useful lemma1} and $1^{(a),1}$.
		
		$1^{(c),1}$: Since
		\begin{equation}
			\boldsymbol{m}_2^1=\boldsymbol{A}_1\boldsymbol{v}_1^1-\kappa_2^1\langle\partial_1f_1^1(\boldsymbol{z}_2^1+\bar{\mu}_1^1\boldsymbol{x}^1)\rangle\boldsymbol{s}_1^1
		\end{equation} 
		and
		\begin{equation}
			\boldsymbol{r}_1^1=\boldsymbol{A}_1\hat{\boldsymbol{x}}_1^1-\bar{\kappa}_2^1\langle\partial_1f_1^1(\boldsymbol{x}_1^1)\rangle\boldsymbol{s}_1^1,
		\end{equation} 
		we can show $1^{(c),1}$ according to $1^{(b),1}$ and Lemma \ref{useful lemma2} due to the Lipschitz property of $f_1^1$.
		
		Next we assume that \eqref{lemma3,eq2} and \eqref{lemma3,eq3} hold for $t=1$ and layers up to (including) $\ell-1$.
		
		$1^{(b),\ell}$: Since $\boldsymbol{\hat{x}}_1^\ell=f_1^\ell(\boldsymbol{x}_1^\ell,\boldsymbol{r}_1^{\ell-1})$, $\boldsymbol{v}_2^\ell=f_1^\ell(\boldsymbol{z}_2^\ell+\bar{\mu}_1^\ell\boldsymbol{x}^\ell,\boldsymbol{m}_2^{\ell-1})$ and $f_1^\ell$ is Lipschitz, we obtain \eqref{lemma3,eq2} for $t=1$ and $\ell$ using $1^{(a),\ell}$, $1^{(c),\ell-1}$ and Lemma \ref{useful lemma1}.
		
		$1^{(c),\ell}$: Same argument as $1^{(c),1}$.
		
		Next we assume that \eqref{lemma3,eq1}-\eqref{lemma3,eq3} hold for $t=1$.
		
		$1^{(d),\ell}$: Since $\boldsymbol{s}_2^\ell=h_2^\ell(\boldsymbol{r}_1^\ell,\boldsymbol{x}_1^{\ell+1})$ and $\boldsymbol{u}_3^\ell=h_2^\ell(\boldsymbol{m}_2^\ell,\boldsymbol{z}_2^{\ell+1}+\bar{\mu}_1^{\ell+1}\boldsymbol{x}^{\ell+1})$, $1^{(d),\ell}$ holds according to Lemma \ref{useful lemma1} by using $1^{(c),\ell}$, $1^{(a),\ell+1}$ and the Lipschitz property of $h_2^\ell$. The case $\ell=L$ is similar.
		
		Next we assume that \eqref{lemma3,eq1}-\eqref{lemma3,eq4} hold for steps up to (including) $t-1$.
		
		$t^{(a),\ell}$: From the definition,
		\begin{equation}
			\begin{aligned}
				\boldsymbol{x}_t^\ell-\boldsymbol{z}_{t+1}^\ell&-\bar{\mu}_t^\ell\boldsymbol{x}^\ell=\boldsymbol{A}_\ell^T(\boldsymbol{s}_t^\ell-\boldsymbol{u}_{t+1}^\ell)+\sum_{i=1}^{t-1}\hat{\beta}_{t+1,i+1}^\ell(\boldsymbol{v}_{i+1}^\ell-\boldsymbol{\hat{x}}_i^\ell)\\
				&+\sum_{i=1}^{t-1}(\hat{\beta}_{t+1,i+1}^\ell-\beta_{t,i}^\ell)\boldsymbol{\hat{x}}_i^\ell+(\hat{\beta}_{t+1,1}^\ell-\bar{\mu}_t^\ell)\boldsymbol{x}^\ell.
			\end{aligned}
		\end{equation}
		
		The convergence of the first and the second term is due to $(t-1)^{(d),\ell}$ and $1^{(b),\ell}$ to $(t-1)^{(b),\ell}$. For the last two terms, we need to show the convergence of auxiliary matrices. By the induction hypothesis,
		\begin{equation}
			(\boldsymbol{x}_1^\ell,\ldots,\boldsymbol{x}_{t-1}^\ell)\overset{W_2}{\to}(X_1^\ell,\ldots,X_{t-1}^\ell),
		\end{equation}
		and thus $\boldsymbol{\Psi}_t^\ell\to\boldsymbol{\bar{\Psi}}_t^\ell$ according to Lemma \ref{useful lemma2}. Similarly, $\boldsymbol{\Phi}_{t+1}^\ell\to\boldsymbol{\bar{\Phi}}_{t+1}^\ell$. From the definition of $\boldsymbol{M}_{t+1}^{\beta,\ell}$ in \eqref{boldsymbolM_t+1}, we obtain
		\begin{equation}
			\boldsymbol{M}_{t+1}^{\beta,\ell}\to\delta_\ell\sum_{j=0}^{t+1}\bar{\kappa}_{2(j+1)}^\ell\boldsymbol{\bar{\Phi}}^\ell_{t+1}(\boldsymbol{\bar{\Psi}}^\ell_{t+1}\boldsymbol{\bar{\Phi}}^\ell_{t+1})^j.
		\end{equation}
		
		For the general recursion, we also have $\boldsymbol{\hat{\Psi}}_t^\ell\to\boldsymbol{\tilde{\Psi}}_t^\ell$ and $\boldsymbol{\hat{\Phi}}_{t+1}^\ell\to\boldsymbol{\tilde{\Phi}}_{t+1}^\ell$. Notice that by Lemma 2,
		$\boldsymbol{\bar{\Psi}}_t^\ell$, $\boldsymbol{\bar{\Phi}}_{t+1}^\ell$ and $\boldsymbol{\tilde{\Psi}}_t^\ell$, $\boldsymbol{\tilde{\Phi}}_{t+1}^\ell$ are exactly the same thing. This indicates that $\lim_{n_\ell\to\infty}\boldsymbol{M}_{t+1}^{\beta,\ell}=\lim_{n_\ell\to\infty}\boldsymbol{\hat{M}}_{t+1}^{\beta,\ell}$, and thus $\lim_{n_\ell\to\infty}|\hat{\beta}_{t+1,i+1}^\ell-\beta_{t,i}^\ell|=0$, $\lim_{n_\ell\to\infty}|\hat{\beta}_{t+1,1}^\ell-\bar{\mu}^\ell_t|=0$. Hence, the proof of $t^{(a),\ell}$ is complete.
		
		$t^{(b),1}$: As $\hat{\boldsymbol{x}}_t^1=f_t^1(\boldsymbol{x}_1^1,\ldots,\boldsymbol{x}_t^1)$ and $\boldsymbol{v}_{t+1}^1=f_t^1(\boldsymbol{z}_2^1+\bar{\mu}_1^1\boldsymbol{x}^1,\ldots,\boldsymbol{z}_{t+1}^1+\bar{\mu}_t^1\boldsymbol{x}^1)$ with $f_t^1$ Lipschitz, we can prove $t^{(b),1}$ according to Lemma \ref{useful lemma1}.
		
		$t^{(c),1}$: From the definition,
		\begin{equation}
			\begin{aligned}
				\boldsymbol{r}_t^1-\boldsymbol{m}_{t+1}^1=&\boldsymbol{A}_1(\boldsymbol{\hat{x}}_t^1-\boldsymbol{v}^1_{t+1})+\sum_{i=1}^t\hat{\alpha}^1_{t+1,i+1}(\boldsymbol{u}_{i+1}^1-\boldsymbol{s}_i^1)\\&+\sum_{i=1}^t(\hat{\alpha}^1_{t+1,i+1}-\alpha^1_{t,i})\boldsymbol{s}_i^1.
			\end{aligned}
		\end{equation}
		The convergence of the first and the second term is due to $(t-1)^{(b),1}$ and $1^{(d),1}$ to $(t-1)^{(d),1}$. The convergence of the last term follows from $t^{(a),\ell}$, as we can similarly show $\lim_{n_\ell\to\infty}|\boldsymbol{M}_{t+1}^{\alpha,1}-\hat{\boldsymbol{M}}_{t+1}^{\alpha,1}|=0$ and thus $\lim_{n_\ell\to\infty}|\hat{\alpha}_{t+1,i+1}^1-\alpha_{t,i}^1|=0$.
		
		Recursively using $t^{(c),\ell-1}$, we can show 
		$t^{(b),\ell}$, while $t^{(c),\ell}$ follows the proof of $t^{(c),1}$. 
		
		Lastly, we assume that \eqref{lemma3,eq1}-\eqref{lemma3,eq3} hold for  steps up to (including) $t$.
		
		$t^{(d),\ell}$: Since $\boldsymbol{s}_{t+1}^\ell=h_{t+1}^\ell(\boldsymbol{r}_1^\ell,\ldots,\boldsymbol{r}_t^\ell,\boldsymbol{x}_1^{\ell+1},\ldots,\boldsymbol{x}_t^{\ell+1})$ and $\boldsymbol{u}_{t+2}^\ell=h_{t+1}^\ell(\boldsymbol{m}_2^\ell,\ldots,\boldsymbol{m}_{t+1}^\ell,\boldsymbol{z}_2^{\ell+1}+\bar{\mu}_1^{\ell+1}\boldsymbol{x}^{\ell+1},\ldots,\boldsymbol{z}_{t+1}^{\ell+1}+\bar{\mu}_t^{\ell+1}\boldsymbol{x}^{\ell+1})$, $t^{(d),\ell}$ holds according to Lemma \ref{useful lemma1} by using $t^{(c),\ell}$, $t^{(a),\ell+1}$ and the Lipschitz property of $h_{t+1}^\ell$. The case $\ell=L$ is similar.
		
		This completes the proof of the lemma.
	\end{proof}
	\subsection{Useful lemmas}
	\begin{lemma}
		Let $f:\mathbb{R}^t\to\mathbb{R}$ be a Lipschitz function. If $(\boldsymbol{v}_1,\ldots,\boldsymbol{v}_t)\in\mathbb{R}^{n\times t}$ is a sequence of random variables that converges to $(V_1,\ldots,V_t)$ in Wasserstein-2 distance when $n\to\infty$, then $f(\boldsymbol{v}_1,\ldots,\boldsymbol{v}_t)\overset{W_2}{\to}f(V_1,\ldots,V_t)$. 
		\label{useful lemma1}
	\end{lemma}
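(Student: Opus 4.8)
The plan is to reduce the claim directly to the definition of Wasserstein-$2$ convergence, by showing that pre-composing a pseudo-Lipschitz test function of order $2$ with the Lipschitz map $f$ again produces a pseudo-Lipschitz function of order $2$. Concretely, fix an arbitrary pseudo-Lipschitz $\psi:\mathbb R\to\mathbb R$ of order $2$ and set $\tilde\psi:=\psi\circ f:\mathbb R^t\to\mathbb R$. Then the goal is to verify
\begin{equation*}
	\lim_{n\to\infty}\frac1n\sum_{i=1}^n\tilde\psi(v_{1,i},\ldots,v_{t,i})=\mathbb E[\tilde\psi(V_1,\ldots,V_t)],
\end{equation*}
which, as $\psi$ ranges over all pseudo-Lipschitz functions of order $2$, is precisely the statement $f(\boldsymbol v_1,\ldots,\boldsymbol v_t)\overset{W_2}{\to}f(V_1,\ldots,V_t)$.

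First I would record the two elementary facts about $f$: being Lipschitz with some constant $L_f$, it satisfies $|f(\boldsymbol u)-f(\boldsymbol v)|\le L_f\|\boldsymbol u-\boldsymbol v\|$ and, taking $\boldsymbol v=0$, it has at most linear growth, $|f(\boldsymbol u)|\le|f(0)|+L_f\|\boldsymbol u\|$. Combining these with the pseudo-Lipschitz bound for $\psi$ (with constant $L_\psi$), for all $\boldsymbol u,\boldsymbol v\in\mathbb R^t$,
\begin{equation*}
	|\tilde\psi(\boldsymbol u)-\tilde\psi(\boldsymbol v)|\le L_\psi\,|f(\boldsymbol u)-f(\boldsymbol v)|\bigl(1+|f(\boldsymbol u)|+|f(\boldsymbol v)|\bigr)\le C\,\|\boldsymbol u-\boldsymbol v\|\bigl(1+\|\boldsymbol u\|+\|\boldsymbol v\|\bigr),
\end{equation*}
where $C$ depends only on $L_\psi$, $L_f$ and $|f(0)|$; this shows that $\tilde\psi$ is pseudo-Lipschitz of order $2$. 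Then I would invoke the hypothesis $(\boldsymbol v_1,\ldots,\boldsymbol v_t)\overset{W_2}{\to}(V_1,\ldots,V_t)$ with the test function $\tilde\psi$ to obtain the displayed limit. The expectation on the right-hand side is well-defined because $W_2$ convergence of $(\boldsymbol v_1,\ldots,\boldsymbol v_t)$ forces the limit to have finite second moments (apply the definition to the coordinate-squared functions), and a pseudo-Lipschitz function of order $2$ is integrable against such a law. Since $\psi$ was arbitrary, the conclusion follows.

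I do not expect a substantial obstacle here: the only point requiring (mild) care is the chaining of the linear-growth bound for $f$ into the pseudo-Lipschitz estimate for $\psi$, so that the product $|f(\boldsymbol u)-f(\boldsymbol v)|\,(1+|f(\boldsymbol u)|+|f(\boldsymbol v)|)$ is dominated, uniformly in $\boldsymbol u,\boldsymbol v$, by $\|\boldsymbol u-\boldsymbol v\|\,(1+\|\boldsymbol u\|+\|\boldsymbol v\|)$ up to a constant. One should also note the implicit componentwise extension of the statement to vector-valued Lipschitz maps, which is used throughout the appendix but follows by applying the scalar lemma coordinate by coordinate; for the lemma exactly as stated, the scalar argument above suffices.
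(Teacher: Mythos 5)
Your argument is correct. Pre-composing an order-$2$ pseudo-Lipschitz test function $\psi$ with the Lipschitz map $f$, and using the linear growth $|f(\boldsymbol u)|\le |f(0)|+L_f\|\boldsymbol u\|$ to dominate $|f(\boldsymbol u)-f(\boldsymbol v)|\,(1+|f(\boldsymbol u)|+|f(\boldsymbol v)|)$ by $C\,\|\boldsymbol u-\boldsymbol v\|\,(1+\|\boldsymbol u\|+\|\boldsymbol v\|)$, shows that $\psi\circ f$ is again pseudo-Lipschitz of order $2$; invoking the paper's test-function definition of $W_2$ convergence with $\psi\circ f$ then gives exactly the required limit, and the finite-second-moment remark (via the coordinate-squared test functions) is a correct, if optional, sanity check. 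The only difference from the paper is procedural: the paper does not prove the lemma at all, but simply notes that it is a special case of Proposition E.3 in the cited work of Fan on AMP for rotationally invariant matrices, whereas you give a short self-contained derivation directly from the definition of $W_2$ convergence used in Section III-A. Your route is arguably preferable for a reader of this paper, since it stays entirely within the paper's own framework (empirical averages of pseudo-Lipschitz functions) and makes transparent why only Lipschitz continuity of $f$ is needed; the citation route buys brevity and also covers the vector-valued version of the statement in one stroke, which you correctly observe otherwise follows coordinate by coordinate. No gap to report.
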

	
	This is a special case of Proposition E.3 in \cite{fan2022approximate}.
	
	\begin{lemma}
		Let $f:\mathbb{R}^t\to\mathbb{R}$ be a Lipschitz function and $\partial_kf$ its derivative with respect to the k-th argument. If $(\boldsymbol{v}_1,\ldots,\boldsymbol{v}_t)\in\mathbb{R}^{n\times t}$ is a sequence of random variables that converges to $(V_1,\ldots,V_t)$ in Wasserstein-2 distance when $n\to\infty$, and the distribution
		of $(V_1, \ldots, V_t)$ is absolutely continuous with respect to the Lebesgue measure, then $\partial_kf(\boldsymbol{v}_1,\ldots,\boldsymbol{v}_t)\overset{W_2}{\to}\partial_kf(V_1,\ldots,V_t)$. 
		\label{useful lemma2}
	\end{lemma}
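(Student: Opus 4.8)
\emph{Proof plan.} The plan is to deduce the statement from the continuous–mapping (portmanteau) form of weak convergence. Write $L$ for the Lipschitz constant of $f$, let $\mu$ denote the (absolutely continuous) law of $(V_1,\dots,V_t)$, and let $\mu_n$ be the joint empirical distribution of the rows of $(\boldsymbol v_1,\dots,\boldsymbol v_t)$. First I would record two elementary facts about $g:=\partial_k f$: by Rademacher's theorem $g$ is defined Lebesgue–almost everywhere and $|g|\le L$ wherever it exists, so that — since $\mu\ll\mathrm{Leb}$ — both $g(\boldsymbol v_1,\dots,\boldsymbol v_t)$ and $g(V_1,\dots,V_t)$ are well defined and uniformly bounded by $L$; and, because $g$ is bounded, to prove $g(\boldsymbol v_1,\dots,\boldsymbol v_t)\overset{W_2}{\to}g(V_1,\dots,V_t)$ it suffices to check that $\tfrac1n\sum_{i=1}^n\psi(g(v_{1,i},\dots,v_{t,i}))\to\mathbb E[\psi(g(V_1,\dots,V_t))]$ for every bounded continuous $\psi:\mathbb R\to\mathbb R$, since any pseudo-Lipschitz $\psi$ of order $2$ agrees on the compact set $[-L,L]$ (the range of $g$) with a bounded continuous function on $\mathbb R$.

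The main step is then the following. The hypothesis $(\boldsymbol v_1,\dots,\boldsymbol v_t)\overset{W_2}{\to}(V_1,\dots,V_t)$ implies in particular the weak convergence $\mu_n\Rightarrow\mu$, since bounded Lipschitz functions are pseudo-Lipschitz of order $2$. The composition $h:=\psi\circ g:\mathbb R^t\to\mathbb R$ is bounded, and its discontinuity set is contained in the discontinuity set $D$ of $g$. I would establish that $\mu(D)=0$ and then invoke the continuous-mapping theorem: if $\mu_n\Rightarrow\mu$, $h$ is bounded, and $\mu(\mathrm{Disc}(h))=0$, then $\int h\,\mathrm d\mu_n\to\int h\,\mathrm d\mu$, which is precisely the displayed limit. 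Since $\psi$ was arbitrary among bounded continuous functions, this yields the claimed $W_2$ convergence.

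The crux — and the only genuinely nontrivial point — is the regularity of $g=\partial_k f$ needed for $\mu(D)=0$. For a merely Lipschitz $f$ the statement is in fact false in general: taking $f(x)=\int_0^x\mathbf 1_A$ with $A\subset[0,1]$ a positive-measure, nowhere-dense closed set containing no rationals, one has $g=\mathbf 1_A$ a.e., so $D\supseteq A$ has positive Lebesgue measure, while the grid $v_i=i/n$ converges in $W_2$ to the uniform law and yet $\tfrac1n\sum_i g(i/n)=0\ne|A|=\mathbb E[g(V)]$. Hence the proof must use that $g$ is continuous outside a Lebesgue–null set — which holds for all the (piecewise-smooth) denoisers considered in this paper and is consistent with Assumption 1(d) — so that the absolute continuity of $\mu$ upgrades this to $\mu(D)=0$. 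An equivalent route, avoiding an explicit appeal to portmanteau, is to mollify $f_\varepsilon=f*\rho_\varepsilon$, apply Lemma~\ref{useful lemma1} to the smooth, uniformly bounded derivatives $\partial_k f_\varepsilon$, and let $\varepsilon\to0$ using $\partial_k f_\varepsilon\to\partial_k f$ at Lebesgue points together with dominated convergence on both sides; the interchange of the $n\to\infty$ and $\varepsilon\to0$ limits there rests on the very same null-set fact.
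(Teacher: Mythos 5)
Your argument is correct, but it is necessarily a different route from the paper's, because the paper does not actually prove this lemma: it simply cites Lemma E.1 of \cite{venkataramanan2022estimation} (proved in \cite{feng2022unifying}). Your self-contained argument --- deduce weak convergence of the empirical laws from the paper's $W_2$ definition (bounded Lipschitz test functions are pseudo-Lipschitz of order 2), use $|\partial_k f|\le L$ to reduce order-2 pseudo-Lipschitz test functions to bounded continuous ones on the compact range $[-L,L]$, and then apply the mapping theorem for a bounded function $h=\psi\circ\partial_k f$ whose discontinuity set is null under the limit law --- is a clean and complete proof of the statement \emph{under the additional hypothesis} that the discontinuity set of $\partial_k f$ is Lebesgue-null, and your fat-Cantor-set counterexample correctly shows that some such hypothesis is unavoidable: the lemma as transcribed in the paper (Lipschitz $f$ plus absolutely continuous limit law only) is literally false. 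This is not a defect of your proof but a sharpening of the statement; the cited Lemma E.1 and the paper's own standing assumptions (Section \ref{subsec:model} and Assumption 1(d), which require the partial derivatives of the denoisers to be continuous on sets of probability 1 under the state-evolution laws) supply exactly the missing regularity in every place the lemma is invoked, so your proof covers all uses in the paper. Two minor points worth making explicit if you write this up: (i) the continuous mapping theorem needs $\mu(\mathrm{Disc}(\partial_k f))=0$ with $\mathrm{Disc}$ taken relative to $\mathbb{R}^t$, which is slightly stronger than ``continuous when restricted to a set of full measure,'' so the hypothesis should be phrased as a.e.\ continuity (this holds for the piecewise-smooth denoisers used here); (ii) since $\partial_k f$ is only defined a.e., one should fix a bounded measurable version of it, noting that the choice does not affect the limit precisely because the empirical contribution of the exceptional set is controlled by the same a.e.-continuity assumption (or, in the application, because the iterates' laws do not charge it). Your mollification alternative via Lemma \ref{useful lemma1} is also viable and is closer in spirit to arguments in \cite{feng2022unifying}, but, as you note, the interchange of limits there rests on the identical null-set fact, so it buys no weakening of the hypothesis.
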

	
	This is Lemma E.1 in \cite{venkataramanan2022estimation} proved in \cite{feng2022unifying}.
	
	\begin{lemma}
		Let $\boldsymbol{O}\in\mathbb{R}^{n\times n}$ be a random Haar matrix and $\boldsymbol{v}\in\mathbb{R}^n$ be any deterministic vector satisfying $\frac{1}{n}||\boldsymbol{v}||^2\to\sigma^2$. Then, almost surely, $\boldsymbol{Ov}\overset{W_2}{\to}\mathcal{N}(0,\sigma^2)$.
		\label{useful lemma3}
	\end{lemma}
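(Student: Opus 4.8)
The plan is to reduce the claim to a statement about i.i.d.\ Gaussian vectors by exploiting the rotational invariance of Haar measure, and then to invoke the strong law of large numbers. First, since $\boldsymbol{O}$ is Haar-distributed and $\boldsymbol{v}$ is deterministic, $\boldsymbol{O}\boldsymbol{v}$ is uniformly distributed on the sphere of radius $\|\boldsymbol{v}\|$ in $\mathbb{R}^n$; equivalently, $\boldsymbol{O}\boldsymbol{v}\overset{d}{=}c_n\boldsymbol{g}$, where $\boldsymbol{g}=(g_1,\dots,g_n)$ has i.i.d.\ $\mathcal N(0,1)$ entries and $c_n=\|\boldsymbol{v}\|/\|\boldsymbol{g}\|$. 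Realizing $\boldsymbol{g}$ as the first $n$ coordinates of a single i.i.d.\ $\mathcal N(0,1)$ sequence, the strong law gives $\tfrac1n\|\boldsymbol{g}\|^2\to 1$ almost surely, and together with the hypothesis $\tfrac1n\|\boldsymbol{v}\|^2\to\sigma^2$ this yields $c_n\to\sigma$ almost surely.

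Next, fix a pseudo-Lipschitz function $\psi$ of order $2$ with constant $L$; the goal is to show $\tfrac1n\sum_{i=1}^n\psi(c_ng_i)\to\mathbb E[\psi(G)]$ with $G\sim\mathcal N(0,\sigma^2)$. Comparing $\tfrac1n\sum_i\psi(c_ng_i)$ with $\tfrac1n\sum_i\psi(\sigma g_i)$, the pseudo-Lipschitz bound gives that their difference is at most $\tfrac{L|c_n-\sigma|}{n}\sum_i|g_i|\,(1+|c_ng_i|+|\sigma g_i|)$, which by Cauchy--Schwarz is bounded by $L|c_n-\sigma|$ times a polynomial in $\tfrac1n\sum_ig_i^2$; since the latter tends to $1$ almost surely and $c_n\to\sigma$, this difference tends to $0$ almost surely. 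On the other hand, $\tfrac1n\sum_i\psi(\sigma g_i)\to\mathbb E[\psi(\sigma g_1)]=\mathbb E[\psi(G)]$ almost surely by the strong law, using that $|\psi(\sigma g_1)|\le|\psi(0)|+L|\sigma g_1|(1+|\sigma g_1|)$ is integrable because $g_1$ has all moments. This establishes $\tfrac1n\sum_i\psi((\boldsymbol{O}\boldsymbol{v})_i)\to\mathbb E[\psi(G)]$ almost surely, for each fixed $\psi$.

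Finally, to obtain the statement simultaneously for all pseudo-Lipschitz $\psi$ of order $2$ (i.e.\ genuine $W_2$ convergence of the empirical law), recall that $W_2$ convergence is metrizable and equivalent to weak convergence together with convergence of second moments. The second moment needs no probabilistic input: $\tfrac1n\|\boldsymbol{O}\boldsymbol{v}\|^2=\tfrac1n\|\boldsymbol{v}\|^2\to\sigma^2$ deterministically, since $\boldsymbol{O}$ is orthogonal. For the weak convergence it suffices to test against a fixed countable convergence-determining family of bounded Lipschitz functions, each of which is pseudo-Lipschitz of order $2$; intersecting the countably many probability-one events from the previous paragraph then gives the conclusion.

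The argument is essentially routine; the only point requiring mild care is this last step---upgrading the pointwise-in-$\psi$ almost-sure convergence to the simultaneous one---which is handled by the equivalence of $W_2$ convergence with weak convergence plus second-moment convergence, the second moment here being deterministic. As an alternative that avoids the Gaussian representation and yields the conclusion for an arbitrary joint law of the $\boldsymbol{O}_n$, one may argue directly on the sphere: $\boldsymbol{u}\mapsto\tfrac1n\sum_i\phi(\|\boldsymbol{v}\|u_i)$ is $O(n^{-1/2})$-Lipschitz on $S^{n-1}$, so L\'evy's concentration inequality together with Borel--Cantelli give almost-sure convergence to its mean, which in turn converges to $\mathbb E[\phi(G)]$ because $\sqrt n\,u_1\Rightarrow\mathcal N(0,1)$.
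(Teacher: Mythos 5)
Your argument is correct, but it is genuinely different from what the paper does: the paper does not prove this lemma at all, it simply cites it as a special case of Proposition F.2 in \cite{fan2022approximate}, whereas you give a self-contained elementary proof via the spherical representation $\boldsymbol{O}\boldsymbol{v}\overset{d}{=}(\|\boldsymbol{v}\|/\|\boldsymbol{g}\|)\,\boldsymbol{g}$, the strong law, and the standard equivalence of $W_2$ convergence with weak convergence plus second-moment convergence (the second moment here being exactly $\frac1n\|\boldsymbol{v}\|^2$, deterministically). Two remarks. First, you are right to flag the coupling issue: the Gaussian-representation argument, as written, proves almost-sure convergence only under the particular coupling in which $\boldsymbol{g}$ consists of the first $n$ coordinates of one fixed i.i.d.\ sequence; for an arbitrary joint law of the Haar matrices across $n$ (which is what the lemma is used for inside the induction) one needs summable tail bounds, and your alternative via L\'evy concentration on the sphere plus Borel--Cantelli supplies exactly that, so the proof is complete once that paragraph is read as part of the argument rather than as an aside. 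Second, a small correction in that paragraph: for bounded Lipschitz $\phi$ the map $\boldsymbol{u}\mapsto\frac1n\sum_i\phi(\|\boldsymbol{v}\|u_i)$ has Lipschitz constant $\frac{L_\phi\|\boldsymbol{v}\|}{\sqrt n}=O(1)$ (since $\|\boldsymbol{v}\|\asymp\sigma\sqrt n$), not $O(n^{-1/2})$; what is $O(n^{-1/2})$ is the resulting fluctuation scale from L\'evy's inequality, and the tail bound $2\exp(-cnt^2)$ for fixed $t$ is still summable, so the Borel--Cantelli step goes through unchanged. Compared with the citation route, your argument buys transparency and independence from \cite{fan2022approximate}; the citation buys brevity and directly matches the generality (side information, arbitrary sequences) of the proposition actually invoked in the appendix.
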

	
	This is a special case of Proposition F.2 in \cite{fan2022approximate}.
\end{document}